\documentclass{article}

\usepackage{microtype}
\usepackage{graphicx}
\graphicspath{{figures/}{./}}
\usepackage{subfigure}
\usepackage{booktabs}
\usepackage{multirow}
\usepackage{placeins}
\usepackage{enumerate}
\PassOptionsToPackage{table}{xcolor}
\usepackage{xcolor}
\usepackage{tikz}
\usetikzlibrary{arrows.meta, positioning, calc, fit, backgrounds, decorations.pathreplacing}
\usepackage{hyperref}

\usepackage[accepted]{icml2026}

\usepackage{amsmath}
\usepackage{amssymb}
\usepackage{mathtools}
\usepackage{amsthm}
\usepackage{thmtools}
\usepackage{thm-restate}

\usepackage[capitalize,noabbrev]{cleveref}

\newcommand{\R}{\mathbb{R}}
\newcommand{\E}{\mathbb{E}}
\newcommand{\ones}{\mathbf{1}}
\newcommand{\Tne}{T_{\mathrm{NE}}}
\newcommand{\fwne}{f_\theta^{\mathrm{WNE}}}
\newcommand{\fwnea}{f^{\mathrm{WNE}}}
\newcommand{\defterm}[1]{#1}
\newcommand{\methodBaseline}{Baseline}
\newcommand{\methodWNE}{WNE}
\newcommand{\methodSEarch}{SE-arch}
\newcommand{\methodNEarch}{NE-arch}
\newcommand{\std}{\mathrm{std}}

\newcommand{\qualpanel}[3]{%
    \begin{tabular}[t]{@{}c@{}}
        \includegraphics[width=#1]{#2}\\[-0.2ex]
        \small #3
    \end{tabular}%
}

\theoremstyle{plain}
\newtheorem{proposition}{Proposition}

\newtheorem{lemma}{Lemma}
\theoremstyle{definition}

\newtheorem{assumption}{Assumption}
\newtheorem{observation}{Observation}
\theoremstyle{remark}
\newtheorem{remark}{Remark}

\AddToHook{env/proposition/begin}{\crefalias{section}{proposition}}
\AddToHook{env/characterization/begin}{\crefalias{section}{characterization}}
\AddToHook{env/lemma/begin}{\crefalias{section}{lemma}}
\AddToHook{env/definition/begin}{\crefalias{section}{definition}}
\AddToHook{env/assumption/begin}{\crefalias{section}{assumption}}
\AddToHook{env/observation/begin}{\crefalias{section}{observation}}
\AddToHook{env/remark/begin}{\crefalias{section}{remark}}

\icmltitlerunning{Normalization Equivariance for Arbitrary Backbones}

\begin{document}

\twocolumn[
    \icmltitle{Normalization Equivariance for Arbitrary Backbones, with Application to Image Denoising}

    \begin{icmlauthorlist}
        \icmlauthor{Youssef Saied}{yyy}
        \icmlauthor{François Fleuret}{yyy,comp}
    \end{icmlauthorlist}

    \icmlaffiliation{yyy}{Department of Computer Science, University of Geneva, Switzerland}
    \icmlaffiliation{comp}{Meta FAIR}
    \icmlcorrespondingauthor{Youssef Saied}{youssef.saied@unige.ch}

    \icmlkeywords{Machine Learning, ICML}

    \vskip 0.3in
]

\begin{NoHyper}
    \printAffiliationsAndNotice{}
\end{NoHyper}

\begin{abstract}
    Normalization Equivariance (NE) is a structural prior that improves robustness to distribution shift in image-to-image tasks.
    A function $f$ is normalization equivariant iff $f(a y + b\mathbf{1}) = a f(y) + b\mathbf{1}$ for all $a>0$ and $b\in\mathbb{R}$.
    Existing NE methods constrain every internal layer to NE-compatible operations.
    These constraints add runtime cost and exclude standard transformer components such as softmax attention and LayerNorm.
    We introduce Wrapped Normalization Equivariance (WNE), a parameter-free wrapper that normalizes the input, applies any backbone, and denormalizes the output.
    We prove every NE function admits this factorization, so the wrapper exactly parameterizes the class of NE functions.
    On blind denoising, wrapping CNN and transformer architectures improves robustness under noise-level mismatch with no measurable GPU overhead, while architectural NE baselines are up to $1.6\times$ slower.
\end{abstract}

\section{Introduction}%
Equivariance priors improve robustness to distribution shift in denoising and related image-to-image tasks.
Scale equivariance (SE) and the strictly stronger normalization equivariance (NE) are properties shared by many classical denoisers~\citep{rudin1992tv,buades2005review,dabov2007image,gu2014weighted}:

\begin{align}
     & \text{SE:} & f(ay) =          & af(y)          &  & \text{for } a>0.           \\
     & \text{NE:} & f(ay + b\ones) = & af(y) + b\ones &  & \text{for } a>0,\, b\in\R.
\end{align}

These equivariances were recently adopted to improve neural network robustness in blind denoising, where the noise level is unknown at test time and may differ from training~\citep{mohan2020biasfree,herbreteau2023normalization}.
This robustness has since motivated applications beyond denoising, including inverse problems and diffusion-style pipelines~\citep{kadkhodaie2021stochastic,hong2024provablepnp,guth2025learning,levac2025normalization}.

Existing methods enforce SE in neural networks by removing biases and leaving only positively 1-homogeneous operations~\citep{mohan2020biasfree}.
Enforcing NE additionally requires preserving the affine action \(y\mapsto ay+b\ones\) through every layer, e.g., via affine-constrained convolutions, special nonlinearities, and affine residuals~\citep{herbreteau2023normalization}.
Both approaches rule out standard transformer components—softmax and LayerNorm do not commute with \(y\mapsto ay+b\ones\).
The NE-specific layers also add runtime cost (Table~\ref{tab:speed_gpu_fdncnn}).

We show that internal-layer constraints are unnecessary.
Every NE function admits a \defterm{normalize-process-denormalize factorization} (Characterization~\ref{prop:ne_characterization}).
Using the characterization, we construct a parameter-free wrapper (\methodWNE{}) that enforces input-output NE around any backbone without modifying it, including transformers (Figure~\ref{fig:method}).
In blind denoising under noise-level mismatch, \methodWNE{} matches architectural NE trends on DnCNN.\@
\methodWNE{} also brings the same robustness pattern to SwinIR and Restormer.
In our GPU benchmarks, \methodWNE{} adds no measured overhead.

\begin{figure}[t]
    \centering
    \begin{tikzpicture}[
            >=Stealth,
            normbox/.style={
                    draw, rounded corners=3pt,
                    minimum width=1.0cm, minimum height=0.75cm,
                    align=center, fill=orange!18,
                    font=\footnotesize,
                    draw=orange!50
                },
            backbone/.style={
                    draw, rounded corners=4pt,
                    minimum width=1.5cm, minimum height=0.9cm,
                    align=center, fill=blue!12,
                    font=\normalsize,
                    draw=blue!50,
                    line width=0.9pt
                },
            io/.style={font=\normalsize},
            statslabel/.style={font=\scriptsize, text=black!55},
            arrow/.style={->, semithick, black!65},
            statsarrow/.style={->, thin, black!40},
            wrapperbox/.style={draw, rounded corners=8pt, line width=0.7pt, black!30},
        ]

        \node[io] (y) at (0, 0) {\( y \)};
        \node[normbox, right=1cm of y] (norm) {\( T_{\mathrm{NE}} \)};
        \node[backbone, right=1cm of norm] (g) {\( g_\theta \)};
        \node[normbox, right=1cm of g] (denorm) {\( T_{\mathrm{NE}}^{-1} \)};
        \node[io, right=0.8cm of denorm] (xhat) {\( \hat{x} \)};

        \coordinate (forkpt) at ($(norm.west)+(-0.4cm,0)$);
        \coordinate (bypassy) at ($(forkpt)+(0,-0.7cm)$);
        \coordinate (bypassend) at (bypassy -| denorm.south);

        \begin{scope}[on background layer]
            \node[wrapperbox, fill=black!3,
                fit=(norm)(g)(denorm)(bypassy)(bypassend),
                inner xsep=10pt, inner ysep=12pt] (wrapper) {};
        \end{scope}

        \draw[arrow] (y) -- (norm);
        \draw[arrow] (norm) -- node[above, font=\scriptsize, text=black!60] {\( \tilde{y} \)} (g);
        \draw[arrow] (g) -- (denorm);
        \draw[arrow] (denorm) -- (xhat);

        \draw[statsarrow, rounded corners=5pt]
        (forkpt) -- (bypassy) -- (bypassend) -- (denorm.south);

        \node[statslabel] at ($(g.south)+(0,-0.45cm)$) {\( \mu(y),\;\std(y) \)};
        \fill[black!45] (forkpt) circle (1.2pt);

    \end{tikzpicture}
    \caption{\textbf{Normalization-equivariant wrapper (\emph{\methodWNE{}}).}
        Given a noisy input \(y\), we compute instance statistics \(\mu(y)\) and \(\std(y)\), normalize to \(\tilde y = T_{\mathrm{NE}}(y)\), apply an arbitrary backbone \(g_\theta\), and denormalize with the stored statistics to produce \(\hat x = \std(y)\,g_\theta(\tilde y)+\mu(y)\ones\).
        This enforces input-output normalization equivariance without modifying the backbone.}%
    \label{fig:method}
    \vspace{-1em}
\end{figure}

\paragraph{Our contributions:}%
\begin{itemize}
    \vspace{-1em}
    \item \textbf{Method:} parameter-free drop-in wrapper enforcing NE on any backbone, including transformers (Section~\ref{subsec:ne_wrapper}).
          \newpage
    \item \textbf{Theory:} characterization showing that normalize-process-denormalize completely parameterizes the NE function class (Section~\ref{subsec:characterization}).
    \item \textbf{Experiments and analysis:} mismatch-robustness gains across CNN and transformer denoisers with negligible runtime cost; a normalized-coordinate analysis explains the trend (Sections~\ref{sec:experiments}--\ref{sec:analysis}).
\end{itemize}

\paragraph{Code:}%
The implementation is available in the repository
\href{https://github.com/YoussefSaied/normalization_equivariance}{\texttt{YoussefSaied/normalization\_equivariance}}.

\section{Related Work}%
\label{sec:related}
\begin{figure*}[t]
    \centering
    \begingroup
    \newcommand{\pixelpng}[2]{%
        \leavevmode
        \pdfximage width #1 attr {/Interpolate false} {#2}%
        \pdfrefximage\pdflastximage
    }
    \newcommand{\swinirqualpanel}[2]{%
        \begin{tabular}[t]{@{}c@{}}
            \pixelpng{1.55in}{#1}\\[-0.2ex]
            \small #2
        \end{tabular}%
    }

    \setlength{\tabcolsep}{0pt}%
    \begin{tabular}{@{}c@{\hspace{0.12in}}c@{\hspace{0.12in}}c@{\hspace{0.12in}}c@{}}
    \swinirqualpanel{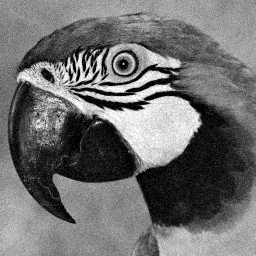}{\shortstack{Noisy training image\\\(\sigma=10\)}} &
    \swinirqualpanel{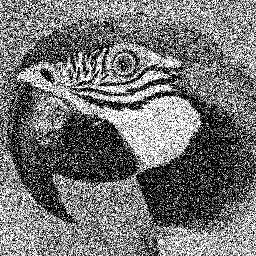}{\shortstack{Noisy test image\\\(\sigma=90\)}} &
    \swinirqualpanel{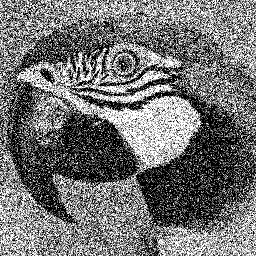}{\shortstack{Test image denoised by\\SwinIR}} &
    \swinirqualpanel{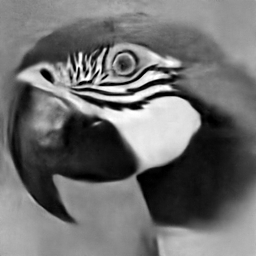}{\shortstack{Test image denoised by\\SwinIR-WNE (ours)}}
    \end{tabular}
    \endgroup
    \caption{\textbf{SwinIR qualitative denoising under noise-level mismatch.}
        Models are trained at \(\sigma_{\mathrm{train}}=10\) and evaluated at \(\sigma_{\mathrm{test}}=90\), far beyond the training noise.
        The unwrapped \methodBaseline{} SwinIR degrades severely, while \emph{SwinIR-\methodWNE{}} remains stable.
    }%
    \label{fig:swinir_qualitative}
\end{figure*}

\paragraph{Equivariance priors: scale and normalization equivariance.}%
Two related equivariance priors appear in denoising and related tasks.

Scale equivariance (SE), or positive \(1\)-homogeneity, requires \(f(ay)=af(y)\) for \(a>0\).
\citet{mohan2020biasfree} propose SE bias-free CNN denoisers, inspiring similar bias-free designs in later work~\citep{zhang2021plug,zamir2022restormer}.
Scale equivariant denoisers also serve as implicit priors in inverse problems~\citep{kadkhodaie2021stochastic} and appear in energy-based probability density and diffusion models~\citep{guth2025learning}.

Normalization equivariance (NE) is strictly stronger than SE, requiring \(f(ay+b\ones)=af(y)+b\ones\) for \(a>0,b\in\R\).
\citet{herbreteau2024survey} survey NE properties across supervised and self-supervised methods, noting that some classical denoisers (e.g.,~TV, non-local means) are NE~\citep{rudin1992tv,buades2005review} while others such as BM3D~\citep{dabov2007image} and WNNM~\citep{gu2014weighted} satisfy SE but not NE\@.
\citet{herbreteau2023normalization} introduce NE as a robustness prior for denoising, and NE predictors have also been leveraged in plug-and-play MRI reconstruction~\citep{hong2024provablepnp} and diffusion-based posterior samplers learned from noisy measurements~\citep{levac2025normalization}.

\paragraph{Enforcing normalization equivariance.}%
For clarity, we describe three levels at which NE can be enforced.

\textsc{Architecture-level:} \citet{herbreteau2023normalization} constrain every internal operation to NE-compatible families (affine-constrained convolutions, sorting nonlinearities, affine residuals); this is incompatible with attention and LayerNorm.

\textsc{Objective-level:} \citet{levac2025normalization} embed an approximate NE property into a SURE-based loss to learn diffusion models from noisy measurements alone (no clean targets), enabling denoiser generalization below the measurement noise level. Their setting is self-supervised and complementary to ours.
\newpage

\textsc{Input-output-level:} (this work) we enforce NE analytically around an arbitrary backbone.
Our if and only if characterization (Characterization~\ref{prop:ne_characterization}) shows that normalize-process-denormalize reparameterizes the full NE function class rather than restricting it.

\paragraph{Canonicalization and normalize-process-denormalize.}%
A general recipe for equivariant maps is to canonicalize the input to a group-orbit representative, apply an arbitrary function, and map back~\citep{kaba2023equivariance,mondal2023equivariant}.
In our setting, the orbit representative is obtained by removing the global brightness and contrast degrees of freedom.
Characterization~\ref{prop:ne_characterization} shows that this construction parameterizes the full NE function class for the affine action \(y\mapsto ay+b\ones\).
Reversible Instance Normalization (RevIN) uses a related normalize-process-denormalize template, motivated empirically by distribution shift in time-series forecasting~\citep{kim2022revin,liu2022non}.
We show that the same template, with global \(\mu\) and \(\std\) statistics, is exactly the form taken by every normalization-equivariant map (Characterization~\ref{prop:ne_characterization}).

\paragraph{Single-noise blind denoising.}%
We use blind denoising as a controlled stress test for robustness to unknown corruption strength.
Following \citet{mohan2020biasfree,herbreteau2023normalization}, we train at a single \(\sigma_{\mathrm{train}}\) and test at \(\sigma_{\mathrm{test}}\neq\sigma_{\mathrm{train}}\), probing out-of-distribution generalization rather than interpolation within a trained \(\sigma\)-range.
The same robustness matters in plug-and-play reconstruction, where \(\sigma\) controls the denoising strength~\citep{zhang2021plug} and is usually a tuned hyperparameter.
In dynamic preconditioned PnP this is sharpened: the iteration-dependent preconditioner means the right \(\sigma_k\) changes every iteration and is not known a priori.
\citet{hong2024provablepnp} use an NE denoiser, which allows a single network to absorb this variation, avoiding per-iteration \(\sigma\)-tuning.

\section{Method}%
\label{sec:method}

\paragraph{Setup and notation.}%
An instance is the tensor passed to the backbone: during training we use patches and at test time we use full images.
For an image of size \(C \times H \times W\), we view an instance \(y\) as a vector in \(\R^d\) with \(d=CHW\).
All definitions below apply per instance for arbitrary spatial sizes \(H,W\).
Let \(\ones\in\R^d\) denote the all-ones instance.
In all our experiments, backbones are size-agnostic, so \(g_\theta\) is defined for any \(H,W\).

\paragraph{Pooling domain.}%
All statistics below are computed jointly over all entries (all channels, all pixels).
This choice matches the NE group action \(y\mapsto ay+b\ones\), which acts identically on all entries.
Pooling statistics per channel would instead correspond to enforcing a larger, channel-wise affine equivariance, which we do not assume.
Thus, the pooling domain is part of the equivariance specification, not an implementation convention.

For any single instance \(y\in\R^d\), define the empirical mean and standard deviation:
\begin{equation}
    \mu(y) := \frac{1}{d}\sum_{i=1}^d y_i,
    \qquad
    \std(y) := \frac{1}{\sqrt d}\,\|y-\mu(y)\ones\|_2.
\end{equation}
We also define the normalized manifold:
\begin{align}
    \mathcal{M}
    := & \{ z\in\R^d : \mu(z)=0,\ \std(z)=1 \}                      \\
    =  & \{ z\in \mathrm{Span}{(\ones)}^\perp : \|z\|_2=\sqrt d \}.
\end{align}

\subsection{Normalization-Equivariant Wrapper}%
\label{subsec:ne_wrapper}

We define the NE normalization map (with a constant-instance guardrail)
\begin{equation}
    \Tne(y)
    :=
    \begin{cases}
        \frac{y-\mu(y)\ones}{\std(y)} & \text{if } \std(y)>0, \\[6pt]
        0                             & \text{if } \std(y)=0,
    \end{cases}
\end{equation}
so that \(\Tne(y)\in\mathcal{M}\) whenever \(\std(y)>0\).
Given any backbone \(g_\theta:\R^d\to\R^d\), our NE-wrapped map \methodWNE{} is
\begin{equation}
    \fwne(y)
    := \std(y)\,g_\theta \big(\Tne(y)\big)+\mu(y)\ones.
    \label{eq:ne_wrapper}
\end{equation}
By construction, \(\fwne\) is normalization-equivariant: for all \(a>0\) and \(b\in\R\),
\begin{equation}
    \fwne(ay+b\ones)=a\,\fwne(y)+b\ones,
\end{equation}
with a short proof in Appendix~\ref{app:wrapper_ne}.
This guarantee is independent of the backbone internals because the wrapper removes and re-injects the global shift and scale via \(\mu(y)\) and \(\std(y)\), so \(g_\theta\) is only queried on normalized inputs \(\Tne(y)\in\mathcal{M}\) (when \(\std(y)>0\)).
In particular, \(g_\theta\) may use components incompatible with internal-layer NE---softmax attention, LayerNorm, and BatchNorm---without breaking the equivariance identity.
An equivalent residual-prediction formulation is given in Appendix~\ref{app:direct_vs_residual}.

\paragraph{Numerical stability.}
Our theory assumes the ideal wrapper with a guardrail for constant inputs (\(\std(y)=0\)).
For numerical stability, the implementation uses \(\std_{\varepsilon}(y):=\std(y)+\varepsilon\) (\(\varepsilon=10^{-5}\)).
The resulting map is approximately NE\@: it matches the ideal wrapper whenever \(\mathrm{std}(y)\) is not close to \(0\), and differs only in the near-constant regime where stabilization is active.

\paragraph{Training objective.}
We train by minimizing raw-space MSE\@:
\begin{equation}
    \min_\theta\ \E\big[ \| \fwne(y)-x\|_2^2 \big].
\end{equation}

\paragraph{Relationship to Instance Normalization.}
The normalization \(\Tne(y)=(y-\mu(y)\ones)/\std(y)\) has the same algebraic form as instance normalization when statistics are pooled over all channels and pixels (as opposed to the common per-channel variant).
The key difference is that the wrapper reuses the input instance statistics to analytically invert the transform at the output~\eqref{eq:ne_wrapper}.

\subsection{Characterization of NE Maps}%
\label{subsec:characterization}
Our method and analysis rely on the following complete characterization of normalization-equivariant maps.
\begin{restatable}[Characterization of NE maps]{characterization}{nechar}%
    \label{prop:ne_characterization}
    A function \(f:\R^d\to\R^d\) is normalization-equivariant if and only if there exists a function \(g:\mathcal{M}\to\R^d\) such that, for all \(y\) with \(\std(y)>0\),
    \[
        f(y)=\std(y)\,g\big(\Tne(y)\big)+\mu(y)\ones.
    \]
    Moreover, for \(\std(y)=0\) one necessarily has \(f(y)=\mu(y)\ones\), and \(g\) is uniquely determined by \(f\) on \(\mathcal{M}\).
\end{restatable}

\noindent Proofs are in Appendix~\ref{app:proofs}.
(Equivalently, an NE map is determined by its restriction to \(\mathcal{M}\), together with the forced behavior on constant inputs.)

\begin{table*}[t]
    \centering
    \setlength{\belowcaptionskip}{3pt}
    \caption{\textbf{Matched-noise PSNR (dB).}
        Average PSNR on Set12 and BSD68 for AWGN with \(\sigma\in\{10,25,50\}\) (8-bit units).
        Each model is trained at a single \(\sigma\) and evaluated at the same \(\sigma\). Our \methodWNE{} remains competitive while providing robustness under mismatch (Figures~\ref{fig:dncnn_psnr}--\ref{fig:swinir_psnr}).
        \emph{\methodSEarch{}} and \emph{\methodNEarch{}} are FDnCNN-based architectural variants (no BatchNorm, constrained layers), while \emph{\methodWNE{}} wraps the standard backbone.
    }%
    \label{tab:psnr_matched}
    \small
    \setlength{\tabcolsep}{3.5pt}
    \renewcommand{\arraystretch}{1.05}
    \begin{tabular}{llcccccc}
        \toprule
                                                           &                                     & \multicolumn{3}{c}{Set12} & \multicolumn{3}{c}{BSD68}                                 \\
        \multicolumn{2}{c}{Noise level \(\sigma\) (8-bit)} & 10                                  & 25                        & 50                        & 10    & 25    & 50            \\
        \midrule
        \multirow{4}{*}{DnCNN family~\citep{zhang2017dncnn}}
                                                           & \emph{\methodBaseline{}}            & 34.92                     & 30.50                     & 27.17 & 33.92 & 29.19 & 26.18 \\
                                                           & \emph{\methodSEarch{}}              & 34.41                     & 30.42                     & 27.19 & 33.51 & 29.12 & 26.12 \\
                                                           & \emph{\methodNEarch{}}              & 34.65                     & 30.37                     & 27.20 & 33.72 & 29.11 & 26.18 \\
                                                           & \textbf{\emph{\methodWNE{}}} (ours) & 34.77                     & 30.43                     & 27.22 & 33.80 & 29.15 & 26.20 \\
        \midrule
        \multirow{2}{*}{SwinIR~\citep{liang2021swinir}}
                                                           & \emph{\methodBaseline{}}            & 35.08                     & 30.79                     & 27.61 & 34.01 & 29.38 & 26.46 \\
                                                           & \textbf{\emph{\methodWNE{}}} (ours) & 34.95                     & 30.65                     & 27.47 & 33.94 & 29.30 & 26.37 \\
        \bottomrule
    \end{tabular}

\end{table*}

\section{Experimental Results}%
\label{sec:experiments}

\begin{figure*}[t]
    \centering
    \subfigure[\(\sigma_{\mathrm{train}} = 10\)]{
        \includegraphics[width=0.3\linewidth]{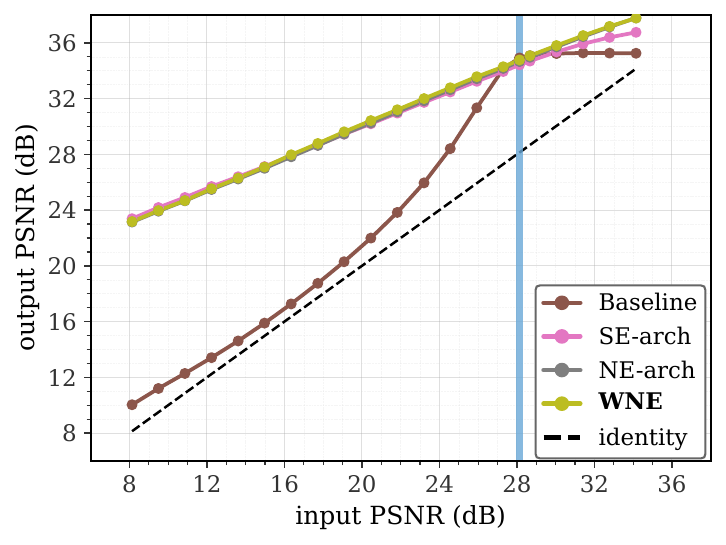}
    }\hfill
    \subfigure[\(\sigma_{\mathrm{train}} = 25\)]{
        \includegraphics[width=0.3\linewidth]{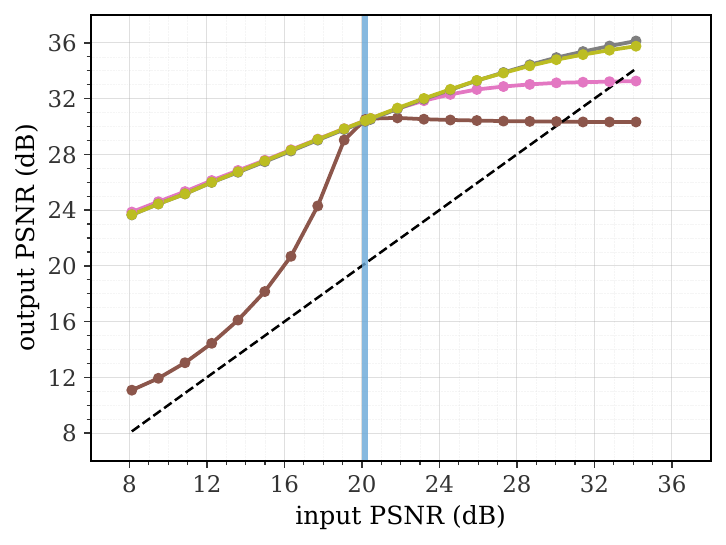}
    }\hfill
    \subfigure[\(\sigma_{\mathrm{train}} = 50\)]{
        \includegraphics[width=0.3\linewidth]{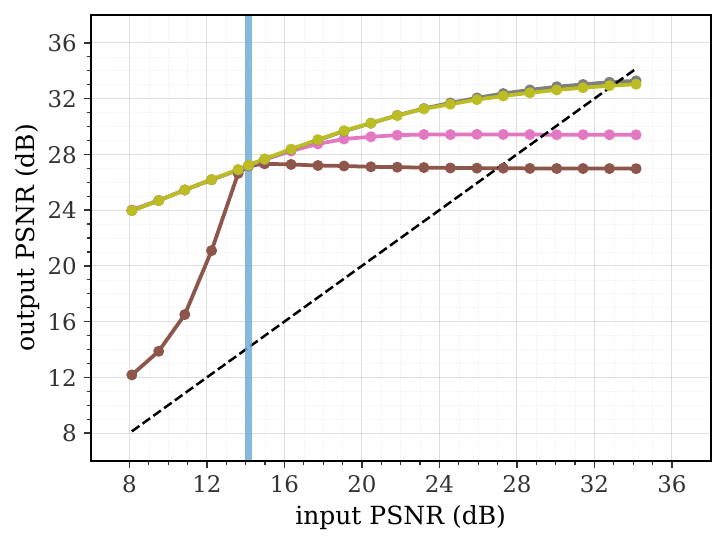}
    }
    \caption{
        Output PSNR versus input PSNR on Set12 for models trained at a single \(\sigma_{\mathrm{train}}\) (vertical reference line) and tested at varying \(\sigma_{\mathrm{test}}\), following \citet{herbreteau2023normalization}.
        The \methodBaseline{} is noise-level specific, while equivariance stabilizes performance under mismatch.
        Our \emph{\methodWNE{}} closely matches the mismatch trend of the architectural normalization-equivariant reference (\methodNEarch{}, FDnCNN-based), and improves over the architectural scale-equivariant reference (\methodSEarch{}) and the unwrapped baseline.
        Dashed line: no denoising (\(\hat x=y\)).
    }%
    \label{fig:dncnn_psnr}
\end{figure*}

\begin{figure*}[t]
    \centering
    \subfigure[\(\sigma_{\mathrm{train}} = 10\)]{
        \includegraphics[width=0.3\linewidth]{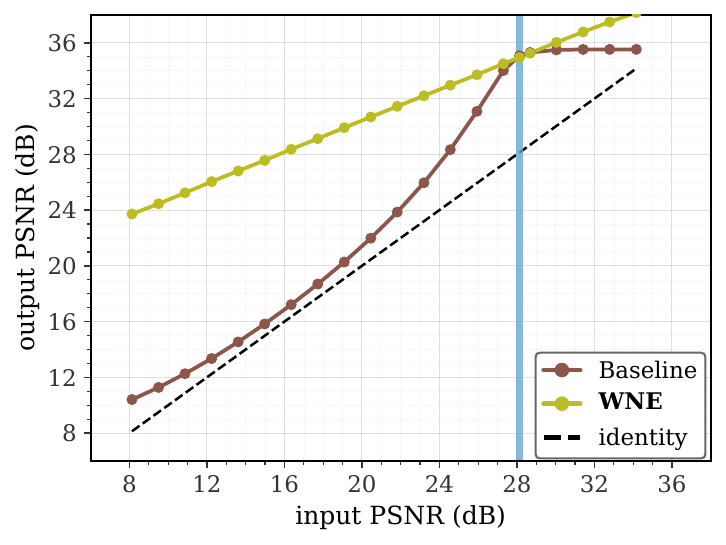}
    }\hfill
    \subfigure[\(\sigma_{\mathrm{train}} = 25\)]{
        \includegraphics[width=0.3\linewidth]{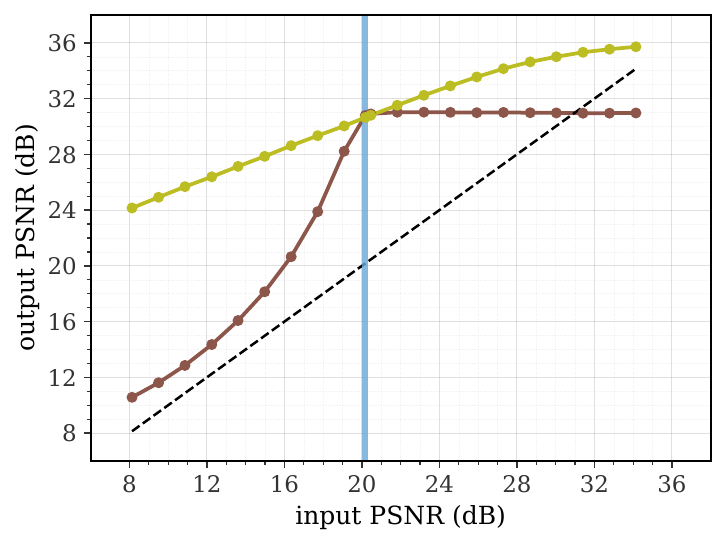}
    }\hfill
    \subfigure[\(\sigma_{\mathrm{train}} = 50\)]{
        \includegraphics[width=0.3\linewidth]{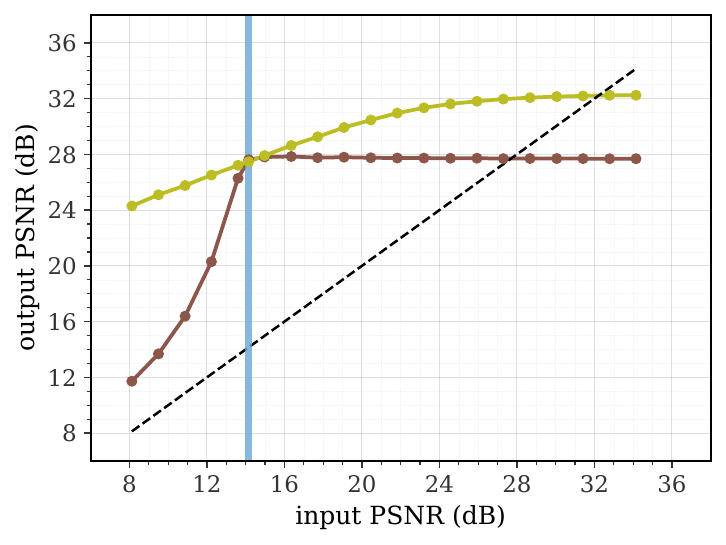}
    }
    \caption{
        \textbf{Noise-level mismatch on SwinIR.}
        Same diagnostic as Figure~\ref{fig:dncnn_psnr}.
        The \methodBaseline{} SwinIR degrades once \(\sigma_{\mathrm{test}}\neq\sigma_{\mathrm{train}}\), while \emph{SwinIR-\methodWNE{}} stabilizes performance away from the training noise.
        Dashed line: no denoising (\(\hat x=y\)).
    }%
    \label{fig:swinir_psnr}
\end{figure*}

We test whether input-output NE enforcement improves generalization away from the training noise level under a single-noise mismatch diagnostic.

\paragraph{Protocol and metric.}
We follow the single-noise diagnostic of \citet{herbreteau2023normalization}, the single-point limit of the limited-range training used to study mismatch in \mbox{\citet{mohan2020biasfree}}.
Each model is trained on additive white Gaussian noise (AWGN) at a fixed \(\sigma_{\mathrm{train}}\in\{10,25,50\}\) (8-bit units) and evaluated on Set12
over a wide range of \(\sigma_{\mathrm{test}}\).
For each test image, we compute input PSNR (noisy \(y\) vs.\ clean \(x\)) and output PSNR (denoised \(\hat x\) vs.\ \(x\)).
We then plot output PSNR versus input PSNR (Figures~\ref{fig:dncnn_psnr} and~\ref{fig:swinir_psnr}).
The vertical line indicates the training noise level: moving away from the vertical line corresponds to testing outside the training noise range.
The dashed identity line corresponds to leaving the input unchanged.

\paragraph{Models and compared variants.}
We evaluate two widely used reference backbones from distinct families: DnCNN~\citep{zhang2017dncnn} (CNN) and SwinIR~\citep{liang2021swinir} (transformer-based restoration).
We focus on mismatch trends rather than peak PSNR.\@
For each family, we compare the unmodified backbone (\methodBaseline{}) to its wrapped counterpart (\methodWNE{}).
Training details and implementation in Appendix~\ref{app:arch_impl} and Appendix~\ref{app:training_details}.
Additional diagnostics in the appendix cover input-only normalization versus the full wrapper (Appendix~\ref{app:in_vs_wne}), SSIM diagnostics (Appendix~\ref{app:ssim}), soft-NE and multi-noise controls with explicit equivariance-defect measurements (Appendices~\ref{app:soft_ne}--\ref{app:ne_violation}), and robustness beyond AWGN (Appendix~\ref{app:nongaussian}).
Breadth checks extend the evaluation to FDnCNN-family controls (Appendix~\ref{app:fdncnn_controlled}), Restormer and color denoising (Appendices~\ref{app:restormer}--\ref{app:restormer_color}), real sensor noise on SIDD (Appendix~\ref{app:sidd}), and a Noise2Noise setting where denoisers are reused for iterative sampling and inverse-problem solving (Appendix~\ref{app:n2n_wne}).

\paragraph{DnCNN family (CNN-based).}
We compare (i) \methodBaseline{}, the standard DnCNN implementation (with BatchNorm), and (ii) \methodWNE{}, our wrapper applied to the same DnCNN backbone.
We also report the architectural equivariant references (iii) \methodSEarch{} and (iv) \methodNEarch{} from \citet{herbreteau2023normalization}, built on FDnCNN with layer replacements throughout (Appendix~\ref{app:arch_variants}).
These are architectural references rather than controlled same-backbone ablations: relative to standard DnCNN, they remove BatchNorm and bias, and \methodNEarch{} also constrains the admissible filter class.
Appendix~\ref{app:fdncnn_controlled} brackets this confound with an FDnCNN-family comparison, where \methodWNE{}-FDnCNN still tracks \methodNEarch{} under mismatch.

\paragraph{SwinIR (transformer-based).}
For SwinIR, we compare the \methodBaseline{} and \methodWNE{}.
We include SwinIR as a modern transformer restoration backbone to test whether the wrapper improves mismatch robustness beyond CNN architectures.
Appendix~\ref{app:restormer} shows the same mismatch-stabilization pattern on Restormer, a second transformer backbone, and Appendix~\ref{app:restormer_color} extends the same construction to color denoising.
For transformers, architectural SE/NE would require modifying attention/LayerNorm; we therefore compare \methodBaseline{} vs.\ \methodWNE{}.\@

\subsection{DnCNN:\@ Input-Output NE Matches Architectural NE Trends}
Across all three training settings \(\sigma_{\mathrm{train}}\in\{10,25,50\}\), Figure~\ref{fig:dncnn_psnr} shows that
\methodWNE{} produces mismatch curves that closely match the robustness trend of the architectural normalization-equivariant \methodNEarch{} reference.
This indicates that enforcing NE at the input-output level recovers essentially the same robustness as architectural NE without constraining internal layers.
Relative to \methodSEarch{}, we observe an asymmetric pattern: when training at higher noise and testing at lower noise (high-to-low), \methodWNE{} improves over \methodSEarch{}; when training at lower noise and testing at higher noise (low-to-high), \methodWNE{} is comparable to \methodSEarch{}.
Overall, the additional shift-equivariance in NE yields consistent gains over SE in the high-to-low regime.

\subsection{SwinIR:\@ Improved Extrapolation for a Transformer Backbone}
Figure~\ref{fig:swinir_psnr} shows that SwinIR exhibits pronounced noise-level specificity under single-noise training:
the baseline curve flattens once \(\sigma_{\mathrm{test}}\neq \sigma_{\mathrm{train}}\), indicating poor extrapolation outside the training noise.
Wrapping SwinIR with \methodWNE{} stabilizes the mapping, yielding a smoother and more monotonic output-PSNR trend across low and high input noise.

Figure~\ref{fig:swinir_qualitative} illustrates this effect in a severe mismatch setting:
models trained at \(\sigma_{\mathrm{train}}=10\) are evaluated at a much higher noise level (\(\sigma_{\mathrm{test}}=90\)).
The \methodBaseline{} SwinIR degrades visibly, while the wrapped model remains qualitatively stable, consistent with the trend in Figure~\ref{fig:swinir_psnr}.

\subsection{The Wrapper Does Not Degrade Performance}
At matched noise levels (\(\sigma_{\text{test}}=\sigma_{\text{train}}\)), \methodWNE{} remains close to the unwrapped baseline for both backbones (Table~\ref{tab:psnr_matched}). We report results at the final training checkpoint for all models (no early stopping) to keep evaluation consistent across variants; our primary focus is robustness under mismatch (Figures~\ref{fig:dncnn_psnr} and~\ref{fig:swinir_psnr}) rather than peak matched-noise PSNR.\@

\FloatBarrier
\subsection{The Wrapper Has Negligible Overhead on GPU}%
\label{subsec:runtime}
Table~\ref{tab:speed_gpu_fdncnn} reports wall-clock seconds per batch for FDnCNN variants.
On GPU, the \methodWNE{} wrapper adds negligible overhead relative to the baseline, since it only introduces per-instance reductions (\(\mu,\std\)) and elementwise affine operations around the backbone.
The architectural NE reference \methodNEarch{} is slower in our benchmark, consistent with the cost of enforcing equivariance throughout internal layers rather than via an outer analytic transform.
Similar findings were also reported by \citet{herbreteau2023normalization}.
We observe the same behavior for SwinIR:\@ on GPU, \methodWNE{} matches the baseline within measurement noise (\(1.00\times\) backward, \(1.00\times\) inference; Appendix Table~\ref{tab:speed_swinir}).
Full GPU/CPU timings and the benchmarking protocol are reported in Appendix~\ref{app:speed_protocol}.

\begin{table}[!htbp]
    \centering
    \caption{\textbf{Runtime overhead for FDnCNN on GPU (seconds per batch).}
        Timing on inputs of shape \(16 \times 1 \times 128 \times 128\) with warmup \(=3\) and timed steps \(=10\).
        “Backward” measures one training iteration (forward + loss + backward), and “Inference” measures forward-only.
        Values in parentheses are ratios relative to the baseline.
        Hardware: \texttt{NVIDIA GeForce RTX 4090}.}%
    \label{tab:speed_gpu_fdncnn}
    \small
    \setlength{\tabcolsep}{6pt}
    \begin{tabular}{lcc}
        \toprule
        Variant                      & Backward (s)           & Inference (s)          \\
        \midrule
        \emph{\methodBaseline{}}     & 0.034                  & 0.011                  \\
        \emph{\methodNEarch{}}       & 0.054 (1.60\(\times\)) & 0.019 (1.69\(\times\)) \\
        \textbf{\emph{\methodWNE{}}} & 0.033 (0.99\(\times\)) & 0.011 (1.00\(\times\)) \\
        \bottomrule
    \end{tabular}
\end{table}

\begin{figure*}[t]
    \centering
    \includegraphics[width=\textwidth]{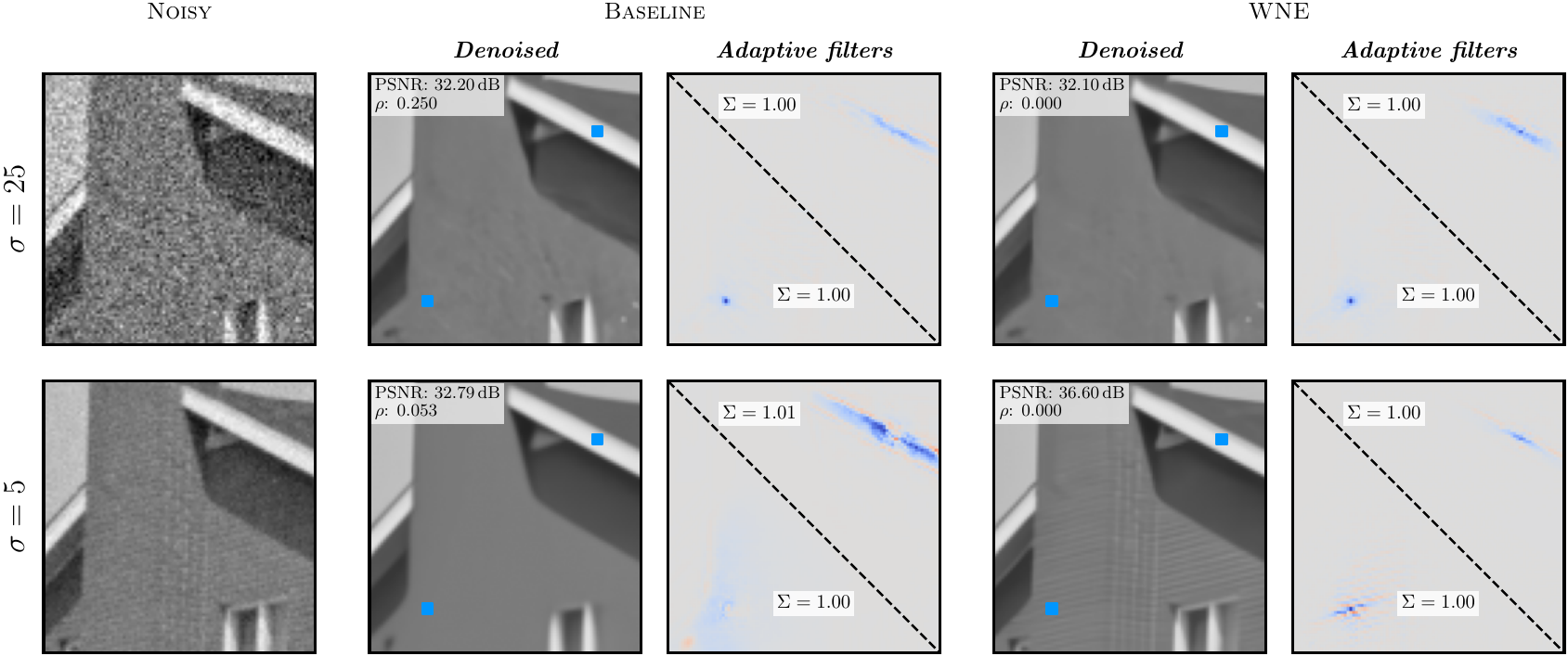}%
    \caption{\textbf{Input-adaptive (affine) filters from the Jacobian.}
        We visualize two rows of \(J_f(y)\) as per-output-pixel filters.
        For \emph{\methodWNE{}}, NE implies positive \(1\)-homogeneity, so at differentiable points \(f(y)=J_f(y)\,y\) and the filters can be read literally~\citep{mohan2020biasfree,herbreteau2023normalization}.
        Under mismatch (\(\sigma=5\)), the \methodBaseline{} SwinIR shows diffuse filters and oversmoothing, while SwinIR-\emph{\methodWNE{}} retains structured, edge-aligned filters and improves PSNR.\@
        Both networks were trained for Gaussian noise at noise level \(\sigma=25\) exclusively.
    }%
    \label{fig:swinir_adaptive_filters}
\end{figure*}

\subsection{Adaptive-Filter Interpretation Extends to Transformers}%
\label{subsec:adaptive_filters}
Figure~\ref{fig:swinir_adaptive_filters} visualizes two Jacobian rows \({(J_f(y))}_{i,:}\) of the denoiser \(f\) as spatial filters (reshaped to the input grid), following \citet{mohan2020biasfree,herbreteau2023normalization}.
If \(f\) is positively \(1\)-homogeneous and differentiable at \(y\), Euler's theorem gives
\begin{equation}
    f(y)=J_f(y)\,y,
\end{equation}
so each output pixel \(f{(y)}_i\) is exactly the inner product of \(y\) with its local filter \({(J_f(y))}_{i,:}\), making these rows literal input-adaptive filters.
Because \methodWNE{} enforces NE (hence SE), this interpretation applies to SwinIR despite attention and LayerNorm.
Moreover, NE implies shift equivariance, and at differentiable points this yields
\begin{equation}
    J_f(y)\,\ones=\ones,
\end{equation}
so each filter has unit sum and the local model is an adaptive affine combination of input pixels.
To verify the Euler identity in the displayed examples, we report
\begin{equation}
    \rho(y):=\frac{\|f(y)-J_f(y)\,y\|_2}{\|f(y)\|_2},
\end{equation}
which is \(0\) for differentiable positively \(1\)-homogeneous maps (up to numerical error).
More broadly, since the wrapper enforces SE/NE at the input-output level independently of the backbone, Jacobian-based analyses developed for bias-free SE denoisers~\citep{mohan2020biasfree,herbreteau2023normalization} extend directly to any wrapped architecture.
This includes recent generalization analyses based on harmonic structure~\citep{kadkhodaie2024harmonic}.

\section{Analysis}%
\label{sec:analysis}
We analyze a mechanism that explains the noise-level mismatch robustness of normalization-equivariant (NE) denoisers.
The analysis applies to any NE denoiser (architectural, classical, or wrapped).
We focus on (i) the stability of output PSNR under noise-level mismatch and (ii) why performance degrades differently when testing above versus below the training noise level.

All statistics reported in this section were computed using training patches (using the same training corpus as Section~\ref{sec:experiments}), corrupted with AWGN at the stated \(\sigma\).

\subsection{NE Reduces Denoising to Normalized Regression}

By Characterization~\ref{prop:ne_characterization}, any NE map \(f:\R^d\to\R^d\) is entirely determined by a function \(g:\mathcal{M}\to\R^d\) via
\begin{equation}
    f(y)=\std(y)\,g\bigl(\Tne(y)\bigr)+\mu(y)\ones.
\end{equation}
The normalization and denormalization steps are fixed analytic operations; only the map \(g\) acting on normalized inputs \(\tilde y \in \mathcal{M}\) varies across NE maps.

This factorization makes the source of robustness (or failure) transparent: cross-noise behavior is governed by how \(g\) performs on normalized pairs \((\tilde y, \tilde x)\) as \(\sigma\) varies.

\paragraph{PSNR decomposes into normalized and scale terms.}
A direct consequence of the factorization is that PSNR decomposes into a normalized error term and a scale term.

For a clean target \(x\) and noisy observation \(y\), define the matched normalized target:
\begin{equation}
    \tilde x := \Tne(x\,;\,y) := \frac{x - \mu(y)\ones}{\std(y)}.
\end{equation}
Then (Appendix~\ref{app:loss_identity}):
\begin{equation}
    f(y) - x = \std(y)\bigl(g(\tilde y) - \tilde x\bigr),
\end{equation}
so the raw-space MSE factorizes as:
\begin{equation}
    \| f(y)-x\|_2^2
    = \std{(y)}^2 \,\big\| g(\tilde y)-\tilde x \big\|_2^2
\end{equation}
Thus raw-space MSE is equivalent to a weighted regression in normalized coordinates, with weight \(\std{(y)}^2\) per instance.\@
Consequently, the raw-space PSNR decomposes as:
\begin{align}
    \mathrm{PSNR}
     & = \underbrace{10\log_{10}(dR^2)}_{\text{constant}}  \nonumber                                                                  \\
     & - \underbrace{10\log_{10}\bigl(\|g(\tilde y)-\tilde x\|_2^2\bigr)}_{\text{normalized error } Q(\tilde y, \tilde x) } \nonumber \\
     & - \underbrace{20\log_{10}\bigl(\std(y)\bigr)}_{\text{scale}},
\end{align}
where \(R\) denotes the pixel dynamic range\footnote{In our experiments images are scaled to \([0,1]\), so \(R=1\).}.
The scale term depends only on the instance statistic \(\std(y)\), while the normalized error
\begin{equation}
    Q(\tilde y,\tilde x) := -10\log_{10}\|g(\tilde y)-\tilde x\|_2^2
\end{equation}
captures regression quality in normalized coordinates (larger \(Q\) means smaller normalized squared error).
Since the scale term is independent of the denoiser, it acts as a common input-dependent offset; after accounting for this offset, all denoiser-dependent robustness and mismatch behavior is carried by \(Q\).

\subsection{The Geometry of Normalized Coordinates}%
\label{subsec:problem_difficulty}
In normalized coordinates, the input \(\tilde y = \Tne(y)\) always lies on \(\mathcal{M}\) (unit standard deviation, zero mean).
The target \(\tilde x = (x - \mu(y)\ones)/\std(y)\) does not: as noise increases, \(\std(y)\) grows, so \(\tilde x\) shrinks toward the origin.
Thus higher noise compresses the target while leaving the input on \(\mathcal{M}\).

We define the input-target distance in normalized coordinates:
\begin{equation}
    \delta := \tilde y - \tilde x,
    \qquad
    \Delta := \|\delta\|_2.
\end{equation}

Under AWGN (\(y = x + \sigma n\), \(n\sim\mathcal{N}(0,I_d)\)),
\begin{equation}
    \delta = \frac{y - x}{\std(y)} = \frac{\sigma}{\std(y)}\,n.
\end{equation}
As \(\sigma \to \infty\), \(\std(y) \approx \sigma\) and \(\Delta\) concentrates near \(\sqrt{d}\),
while as \(\sigma \to 0\), \(\Delta \to 0\) (Figure~\ref{fig:ne_delta_hist}).

\paragraph{SNR controls \(\Delta\).}
To understand how \(\Delta\) varies with noise level, we relate it to the signal-to-noise ratio.
Define the empirical signal variance \(\sigma_x^2 := \frac{1}{d}\|x - \mu(x)\ones\|_2^2\).
For high-dimensional instances, cross-terms concentrate and
\begin{equation}
    {\std(y)}^2 \approx \sigma_x^2 + \sigma^2.
\end{equation}
Substituting into \(\Delta^2 = \|\sigma n / \std(y)\|_2^2\) and taking expectations gives
\begin{equation}
    \E[\Delta^2 \mid x, \sigma] \approx \frac{d}{1 + \mathrm{SNR}}, \qquad \mathrm{SNR} := \sigma_x^2 / \sigma^2.
\end{equation}

\subsection{Normalized Error Is Largely \texorpdfstring{\(\Delta\)}{Delta}-Driven}
\begin{observation}[\(\Delta\) predicts normalized MSE]%
    \label{obs:delta_sufficiency}
    For our model and setup, the conditional normalized MSE satisfies
    \begin{equation}
        \E\bigl[\|g(\tilde y) - \tilde x\|_2^2 \mid \Delta = r,\, \sigma\bigr] \approx \psi_g(r),
    \end{equation}
    with residual dependence on \(\sigma\) that is small compared to variation across \(r\) (Figure~\ref{fig:ne_psnr_vs_delta}).
\end{observation}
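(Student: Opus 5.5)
Observation~\ref{obs:delta_sufficiency} is an empirical statement, so I would support it in two parts. First, a heuristic derivation explaining why \(\sigma\) should mostly drop out once we condition on \(\Delta\). Second, a direct measurement that checks the approximation and quantifies the residual \(\sigma\)-dependence, which is what Figure~\ref{fig:ne_psnr_vs_delta} is meant to show.

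\emph{Heuristic reduction.} By Characterization~\ref{prop:ne_characterization}, the normalized error \(\|g(\tilde y)-\tilde x\|_2^2\) depends on the data only through the pair \((\tilde y,\tilde x)\). Here \(\tilde y\in\mathcal{M}\) and \(\tilde x=\tilde y-\delta\) with \(\delta=(\sigma/\std(y))\,n\), so it suffices to argue that the conditional law of \((\tilde y,\tilde x)\) given \(\Delta=r\) depends only weakly on \(\sigma\).
\begin{itemize}
\item Expanding \(\|\tilde y\|_2^2=d\) with \(\tilde y=\tilde x+\delta\), and using that the cross-term \(\langle \tilde x,\delta\rangle\) is small in high dimension, gives \(\|\tilde x\|_2^2\approx d-r^2\). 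So \(r\) pins down the radial geometry of the normalized pair, namely the target norm and the input-target angle.
\item Since \(\E[\Delta^2\mid x,\sigma]\approx d/(1+\mathrm{SNR})\) concentrates, conditioning on \(\Delta=r\) is approximately conditioning on \(\mathrm{SNR}=\sigma_x^2/\sigma^2\).
\item Given a fixed SNR, the normalized pair is invariant to the joint rescaling \((x-\mu(x)\ones,\sigma)\mapsto(a(x-\mu(x)\ones),a\sigma)\). Hence the only remaining \(\sigma\)-dependence comes from the clean-image population: at a different \(\sigma\), the images with a given SNR are those with a proportionally different contrast \(\sigma_x\).
\item If normalized image structure, i.e.\ the law of \((x-\mu(x)\ones)/\sigma_x\), is roughly independent of \(\sigma_x\), then \(\psi_g(r)\) becomes \(\sigma\)-free.
\end{itemize}

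\emph{Empirical verification.} I would take training patches and corrupt them at several \(\sigma\) values. For each patch I compute \(\Delta\) and the normalized error \(\|g(\tilde y)-\tilde x\|_2^2\), bin by \(\Delta\), and average within each \((\text{bin},\sigma)\) cell. I would then compare the between-\(\sigma\) spread within a bin with the variation of the bin means across \(r\). A convenient summary is the fraction of variance explained by \(\Delta\) alone versus by \((\Delta,\sigma)\).

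The main obstacle is the residual dependence identified above: low-contrast and high-contrast patches differ in texture statistics, so normalized image structure is not truly independent of \(\sigma_x\). This step is exactly why the statement is an observation rather than a proposition. I would not try to prove it. Instead I would quantify the effect with the binned comparison and report it as small relative to the variation across \(r\).
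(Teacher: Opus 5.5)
Your argument has the same skeleton as the paper's treatment in Appendix~\ref{app:obs1_explanation} and Figure~\ref{fig:ne_psnr_vs_delta}. The statement is treated as empirical. High-dimensional concentration gives \(\|\tilde x\|_2^2\approx d-\Delta^2\) and ties \(\Delta\) to the SNR, so conditioning on \(\Delta\) selects a narrow band of contrasts. The decisive unproved step is exactly Assumption~\ref{app:ass:contrast_independence}. The check is a per-\(\sigma\) binning in \(\Delta\).

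Where you differ is the exact ingredient. The paper proves (Proposition~\ref{app:prop:per_instance_sigma_free}) that, given \((x,\tilde x,\Delta=r)\), the input \(\tilde y\) is uniformly distributed on the slice \(\{z\in\mathcal M:\|z-\tilde x\|_2=r\}\) for every \(\sigma\), using the Gaussian density. It then uses \(\tilde x\approx\sqrt{d-r^2}\,\hat x\) to drop the conditioning on \(\tilde x\). You instead use the pathwise invariance of \((\tilde y,\tilde x)\) under \((x-\mu(x)\ones,\sigma)\mapsto(a(x-\mu(x)\ones),a\sigma)\) with the same noise draw. Equivalently, \(\tilde y=\Tne(u+n)\) and \(\tilde x=(u-\mu(n)\ones)/\std(u+n)\) with \(u=(x-\mu(x)\ones)/\sigma=\sqrt{d\,\mathrm{SNR}}\,\hat x\), so the law of the normalized pair depends on \((x,\sigma)\) only through \((\hat x,\mathrm{SNR})\).

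Your version is more elementary and does not use Gaussianity: the invariance holds for any noise \(\sigma n\) whose law does not depend on \(x\) or \(\sigma\). It also feeds directly into the contrast-selection step of Proposition~\ref{app:prop:population_delta_collapse}, though you still need concentration to trade \(\mathrm{SNR}\) for \(\Delta\). The paper's version, in exchange, gives the exact conditional law of the input at fixed target and difficulty. That is a sharper per-instance statement, but it is specific to isotropic Gaussian noise.

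On the empirical side, the paper claims and checks the collapse only within the central \(95\%\) of the training \(\Delta\) distribution. You should restrict your binned comparison the same way. The tails are sparsely populated and correspond to extreme contrasts, where the independence assumption is least plausible.
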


The intuition is that, after conditioning on \(\Delta\), the distribution of normalized pairs \((\tilde y,\tilde x)\) changes little with \(\sigma\).
Thus the backbone faces a statistically similar normalized regression problem across noise levels.
We study why this holds in Appendix~\ref{app:obs1_explanation}.

To estimate \(\E\!\left[\|g(\tilde y) - \tilde x\|_2^2 \mid \Delta = r, \sigma\right]\) as a function of \(r\), we bin samples by \(\Delta\) and compute the mean and standard deviation of \(Q(\tilde y, \tilde x)\) within each bin, separately for each \(\sigma\).
Figure~\ref{fig:ne_psnr_vs_delta} shows that within the central 95\% of the training \(\Delta\) distribution, normalized error depends mainly on \(\Delta\), with little residual dependence on \(\sigma\).
An extended analysis comparing \methodBaseline{} and \methodWNE{} is given in Appendix~\ref{app:norm_error_vs_difficulty}. The \methodBaseline{} exhibits stronger dependence on \(\sigma\) at fixed \(\Delta\) (Figure~\ref{fig:app_q_vs_delta_grid}), suggesting that the collapse is a consequence of enforcing NE rather than an artifact of the difficulty measure itself.

\begin{figure}[t]
    \centering
    \includegraphics[width=\linewidth]{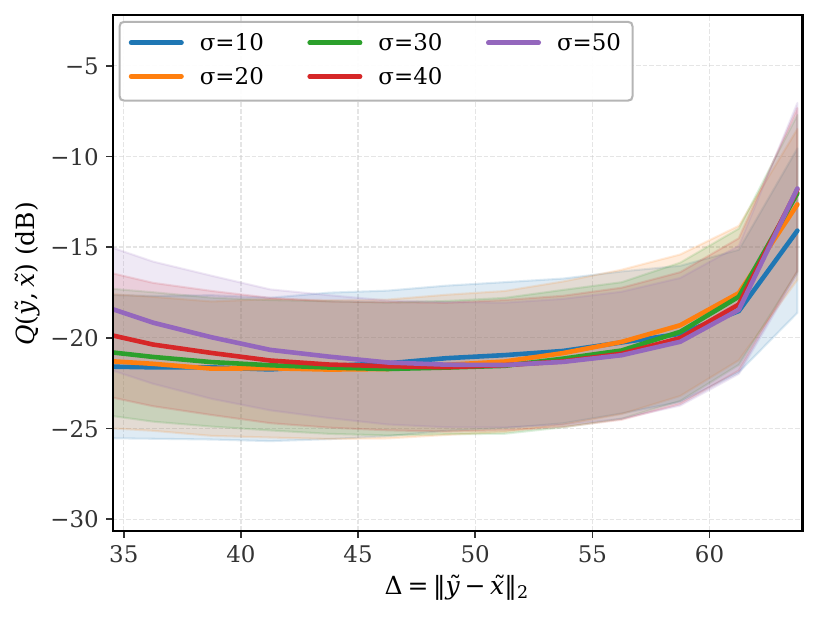}%
    \caption{\textbf{Normalized-space error vs difficulty for NE (central 95\% range).}
        For each \(\sigma\) we plot \(Q(\tilde y,\tilde x)\) versus \(\Delta=\|\tilde y-\tilde x\|_2\), reporting mean and standard deviation over samples.
        The \(x\)-axis shows the central \(95\%\) interval of the training \(\Delta\) distribution at \(\sigma_{\text{train}}=50\), used as a fixed reference range for all \(\sigma\).
        Within this shared difficulty band, curves for different \(\sigma\) show substantial collapse, suggesting that once difficulty is controlled by \(\Delta\), the remaining dependence of normalized regression error on \(\sigma\) is small.
        Model: SwinIR-\emph{\methodWNE{}}, trained at \(\sigma_{\text{train}}=50\).
    }%
    \label{fig:ne_psnr_vs_delta}
\end{figure}

\subsection{Train-Test Overlap in \texorpdfstring{\(\Delta\)}{Delta}}
\begin{observation}[Substantial overlap across noise levels]%
    \label{obs:overlap}
    The distribution of \(\Delta\) shows substantial overlap across noise levels (Figure~\ref{fig:ne_delta_hist}, Table~\ref{tab:ne_delta_coverage}).
\end{observation}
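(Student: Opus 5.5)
Since Observation~\ref{obs:overlap} is an empirical statement, the plan is to support it with a short analytic argument and then confirm it numerically on the training patches, as Figure~\ref{fig:ne_delta_hist} and Table~\ref{tab:ne_delta_coverage} do. The analytic part rests on the SNR relation derived above, \(\E[\Delta^2\mid x,\sigma]\approx d/(1+\mathrm{SNR})\) with \(\mathrm{SNR}=\sigma_x^2/\sigma^2\). I would first show that, conditional on \(x\), \(\Delta^2\) concentrates around this value. Writing \(\Delta^2=\sigma^2\|n\|_2^2/\std(y)^2\), the numerator concentrates at \(\sigma^2 d\) with relative fluctuations \(O(d^{-1/2})\). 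For the denominator, \(\std(y)^2=\sigma_x^2+\sigma^2+O(\sigma\sigma_x d^{-1/2})\), since the cross term \(\langle x-\mu(x)\ones,n\rangle/d\) and the mean correction are both \(O(d^{-1/2})\). Hence \(\Delta^2/d = 1/(1+\mathrm{SNR}) + O(d^{-1/2})\) with high probability, uniformly over patches.

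The consequence is that the spread of \(\Delta\) at a fixed \(\sigma\) comes almost entirely from the spread of \(\sigma_x\) across patches, not from the noise. I would therefore write \(\Delta/\sqrt d\approx h(\log\sigma_x-\log\sigma)\) with \(h(t)=(1+e^{2t})^{-1/2}\), which is monotone. Under this map, changing the noise level from \(\sigma_{\mathrm{train}}\) to \(\sigma_{\mathrm{test}}\) simply translates the distribution of \(\log\sigma_x\) by \(\log(\sigma_{\mathrm{test}}/\sigma_{\mathrm{train}})\) before pushing it through \(h\).

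Overlap then follows whenever this shift is small relative to the width of the law of \(\log\sigma_x\) over natural-image patches. That law is wide: flat sky patches and textured patches differ in \(\sigma_x\) by one to two orders of magnitude. Concretely, the probability under the test distribution of the central 95\% training interval of \(\Delta\) equals the probability that \(\log\sigma_x\) falls into the correspondingly shifted quantile window. This quantity stays bounded away from zero for shifts well below the interquantile range of \(\log\sigma_x\). Overlap is lost only at the extremes: when \(\sigma\to\infty\), \(\Delta\to\sqrt d\), and when \(\sigma\to 0\), \(\Delta\to 0\), because \(h\) saturates there.

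The main obstacle is that the width of the law of \(\log\sigma_x\) is a property of the image data, not something derivable from the model, so the argument cannot be closed purely analytically. I would close it empirically. First, I would estimate the quantiles of \(\sigma_x\) on the training corpus. Second, I would predict the coverage numbers from \(h\). Third, I would compare these predictions against the measured \(\Delta\) histograms and coverage fractions for each \(\sigma_{\mathrm{test}}\). The saturation of \(h\) also predicts an asymmetry that this comparison should reveal: at high noise the mass piles up near \(\sqrt d\), whereas low noise spreads it toward \(0\).
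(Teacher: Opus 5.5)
Your proposal follows the paper's route. The observation is established empirically from the \(\Delta\) histograms and the coverage fractions \(m(\sigma_{\mathrm{test}};\sigma_{\mathrm{train}})\). The paper's stated reason is a broad spread of \(\sigma_x^2\) across flat and textured patches, so that changing \(\sigma\) rescales a broad SNR distribution rather than moving a point mass. Your translation of \(\log\sigma_x\) by \(\log(\sigma_{\mathrm{test}}/\sigma_{\mathrm{train}})\) makes exactly that reason quantitative.

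One caveat concerns your closing remarks on saturation and asymmetry. In your idealized model the monotone map \(h\) drops out of the coverage entirely, which equals \(\Pr\bigl(\log\sigma_x\in[Q_{0.025}+s,\,Q_{0.975}+s]\bigr)\), with \(Q_p\) the quantiles of \(\log\sigma_x\) and \(s=\log(\sigma_{\mathrm{test}}/\sigma_{\mathrm{train}})\). Saturation of \(h\) therefore shapes the histograms but cannot by itself produce the train/test coverage asymmetry in Table~\ref{tab:ne_delta_coverage}. That asymmetry requires skewness of the law of \(\log\sigma_x\), or the \(O(1)\) noise spread of \(\Delta\) near \(\sqrt d\) that you discarded. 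Likewise, overlap is lost because \(|s|\) exceeds the spread of \(\log\sigma_x\), not because \(h\) saturates.
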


This overlap reflects the wide variation in \(\sigma_x^2\) across natural-image patches (flat regions vs.\ textured regions).
Changing \(\sigma\) therefore rescales a broad SNR distribution rather than shifting a point mass, preserving overlap in the induced \(\Delta\) values.

\begin{figure}[t]
    \centering
    \includegraphics[width=\linewidth]{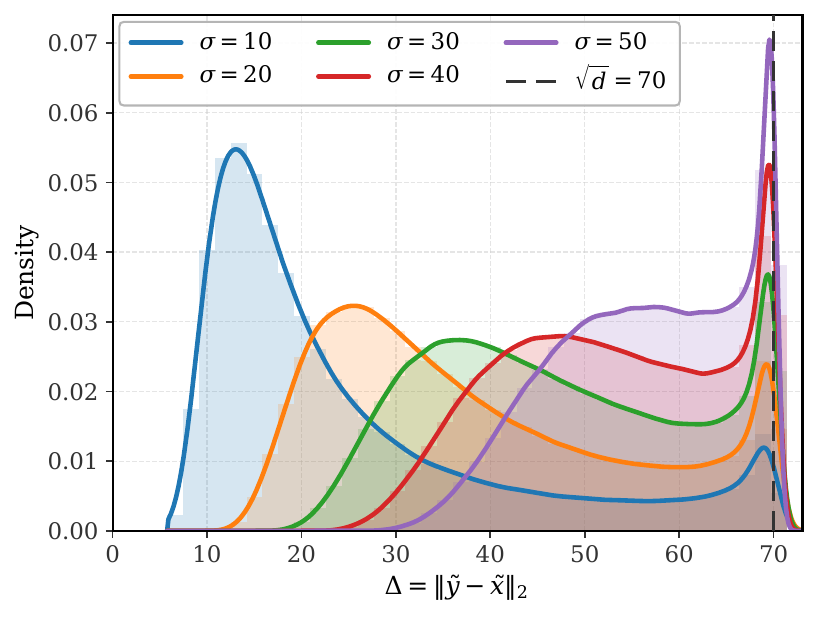}%
    \caption{\textbf{NE difficulty distributions.}
        Empirical histograms of \(\Delta=\|\tilde y-\tilde x\|_2\) at several noise levels \(\sigma\).
        As \(\sigma\) increases, mass shifts upward and concentrates near \(\sqrt d\), consistent with \(\Delta\approx\|n\|_2\) at high noise.
        Despite this shift, the distributions still overlap substantially across \(\sigma\).
        Computed from \(10^6\) training patches.
    }%
    \label{fig:ne_delta_hist}
\end{figure}

\begin{table}[t]
    \centering
    \caption{
        \textbf{Train-test \(\Delta\) overlap for the NE wrapper.}
        Entries are empirical coverages \(m(\sigma_{\mathrm{test}};\sigma_{\mathrm{train}})\) (percent),
        measured using the central \(95\%\) training interval at each \(\sigma_{\text{train}}\).
        Overlap is substantial but asymmetric: training at low noise covers many higher-noise difficulties, while training at high noise covers only a fraction of very low-noise difficulties.
        This matches the empirical asymmetry observed in Figure~\ref{fig:dncnn_psnr}.
    }%
    \label{tab:ne_delta_coverage}
    \small
    \definecolor{cov0}{RGB}{255,255,255}
    \definecolor{cov1}{RGB}{245,248,255}
    \definecolor{cov2}{RGB}{230,238,255}
    \definecolor{cov3}{RGB}{210,225,255}
    \definecolor{cov4}{RGB}{185,205,255}
    \definecolor{cov5}{RGB}{150,180,255}

    \newcommand{\covcell}[1]{%
        \ifnum#1<20  \cellcolor{cov0}#1%
        \else\ifnum#1<40 \cellcolor{cov1}#1%
            \else\ifnum#1<60 \cellcolor{cov2}#1%
                \else\ifnum#1<80 \cellcolor{cov3}#1%
                    \else\ifnum#1<95 \cellcolor{cov4}#1%
                        \else           \cellcolor{cov5}#1%
                        \fi\fi\fi\fi\fi
    }

    \setlength{\tabcolsep}{6pt}
    \renewcommand{\arraystretch}{1.15}
    \begin{tabular}{cccccc}
        \toprule
        \(\sigma_{\mathrm{train}}\backslash\sigma_{\mathrm{test}}\)
           & \(50\)       & \(40\)       & \(30\)       & \(20\)       & \(10\)
        \\
        \midrule
        50 & \covcell{95} & \covcell{87} & \covcell{68} & \covcell{42} & \covcell{19} \\
        40 & \covcell{97} & \covcell{95} & \covcell{83} & \covcell{56} & \covcell{24} \\
        30 & \covcell{96} & \covcell{96} & \covcell{95} & \covcell{76} & \covcell{34} \\
        20 & \covcell{93} & \covcell{95} & \covcell{96} & \covcell{95} & \covcell{56} \\
        10 & \covcell{88} & \covcell{90} & \covcell{92} & \covcell{95} & \covcell{95} \\
        \bottomrule
    \end{tabular}
\end{table}

To quantify the overlap of \(\Delta\) across noise levels, for a given \(\sigma_{\mathrm{train}}\) we define the central \(95\%\) interval of its \(\Delta\) distribution:
\begin{equation}
    I(\sigma_{\mathrm{train}})
    :=
    \bigl[q_{0.025}(\sigma_{\mathrm{train}}),\; q_{0.975}(\sigma_{\mathrm{train}})\bigr]
\end{equation}
where \(q_p(\sigma)\) denotes the empirical \(p\)-quantile of \(\Delta\) computed from samples
at noise level \(\sigma\).

For \(\sigma_{\mathrm{test}}\), we empirically estimate the in-range mass as follows:
\begin{equation}
    m(\sigma_{\mathrm{test}};\sigma_{\mathrm{train}})
    :=
    \frac{1}{N}
    \sum_{i=1}^{N}
    \mathbf{1}\!\left\{
    \Delta_{\sigma_{\mathrm{test}}}^i
    \in I(\sigma_{\mathrm{train}})
    \right\}
\end{equation}
where \(N\) is the number of samples.

Table~\ref{tab:ne_delta_coverage} shows that this in-range mass remains substantial off-diagonal, so mismatch test samples often query difficulties that are well-represented during training.

\subsection{Mechanism for Robustness and Asymmetry}
Normalized regression error depends mainly on \(\Delta\) rather than on \(\sigma\) (Observation~\ref{obs:delta_sufficiency}): an input whose difficulty \(\Delta\) was well represented in training is denoised with similar normalized error even when that \(\Delta\) now arises from an unseen noise level.
Observation~\ref{obs:overlap} (Figure~\ref{fig:ne_delta_hist}) shows that the \(\Delta\)-distributions can overlap substantially across \(\sigma\), so under mismatch \(g\) can still denoise when queried at familiar difficulties.
Figure~\ref{fig:dncnn_psnr} shows that generalization across noise levels is asymmetric: a denoiser generalizes to higher test noise more readily than to lower test noise.
The central \(\Delta\)-interval is wide at low \(\sigma_{\mathrm{train}}\) but narrower at high \(\sigma_{\mathrm{train}}\), so low-noise training covers most higher-\(\sigma_{\mathrm{test}}\) difficulties, whereas high-noise training covers progressively fewer lower-\(\sigma_{\mathrm{test}}\) ones.

\section{Conclusion}

We prove that a map is normalization equivariant if and only if it factors as normalize, process, denormalize.
The ``if'' direction means we can enforce NE by wrapping any backbone; the ``only if'' direction means we lose nothing by doing so, because every NE map is already of this form.
Enforcing exact NE therefore leaves the backbone architecture unrestricted.

This gives \methodWNE{}: wrap any backbone with the analytic normalize-denormalize pair, and the result is exactly NE.\@
No internal layers change.
Attention, LayerNorm, and BatchNorm all stay where they are.

In blind denoising, \methodWNE{} improves robustness to noise-level mismatch for DnCNN and SwinIR, remains competitive at matched noise, and adds no measured GPU overhead.
Restormer (Appendix~\ref{app:restormer}), color denoising (Appendix~\ref{app:restormer_color}), and SIDD real-noise results (Appendix~\ref{app:sidd}) follow the same pattern.

Our normalized-coordinate view explains why NE improves denoising robustness.
An NE denoiser is determined by its action on normalized inputs, so changing \(\sigma\) mainly shifts which normalized difficulties \(\Delta=\|\tilde y-\tilde x\|_2\) are queried, not the regression problem at fixed \(\Delta\) (Appendix~\ref{app:obs1_explanation}).
When train and test \(\Delta\)-distributions overlap, the wrapped backbone stays in an interpolation regime.
Table~\ref{tab:ne_delta_coverage} shows this overlap is substantial but asymmetric: training at low noise covers many high-noise difficulties, while training at high noise covers fewer low-noise ones.
The same asymmetry shows up in the PSNR curves of Figure~\ref{fig:dncnn_psnr}.

The same mechanism extends to iterative reuse, where a sampler queries the denoiser at many noise levels.
In Appendix~\ref{app:n2n_wne}, a Noise2Noise denoiser trained at \(\sigma_{\mathrm{train}}=10\) is reused unchanged inside a random-inpainting sampler initialized at full Gaussian noise, about \(25\times\) above the training scale.
With the same backbone, training pairs, and sampler hyperparameters, the wrapped denoiser reaches \(23.87\) dB while the unwrapped one stays at \(6.08\) dB.

These results point to natural directions for future work: preconditioned plug-and-play methods that rely on exact NE~\citep{hong2024provablepnp}, noisy-only diffusion learning~\citep{levac2025normalization}, and broader image restoration tasks.

\section*{Impact Statement}

This paper presents a theoretical characterization and a parameter-free wrapper for enforcing normalization equivariance in image-to-image neural networks.
The main intended benefit is improved robustness under distribution shift, reducing calibration burden in applications such as photography, medical imaging, and scientific imaging.
Potential risks mirror those of stronger image-processing systems in sensitive settings: denoising can alter visual evidence, and benchmark restoration performance alone does not validate real-world deployment.
The method releases no personal data or generative model, and we identify no additional misuse risks beyond these application-level concerns.

\section*{Acknowledgments}
We thank Marian Lukac, Eleni Kalogirou, and the members of the Machine Learning Group at the University of Geneva for helpful discussions.
This work was supported by the Hasler Foundation under the Responsible AI program.

\bibliography{paper}

\begin{thebibliography}{36}
\providecommand{\natexlab}[1]{#1}
\providecommand{\url}[1]{\texttt{#1}}
\expandafter\ifx\csname urlstyle\endcsname\relax
  \providecommand{\doi}[1]{doi: #1}\else
  \providecommand{\doi}{doi: \begingroup \urlstyle{rm}\Url}\fi

\bibitem[Abdelhamed et~al.(2018)Abdelhamed, Lin, and
  Brown]{abdelhamed2018highquality}
Abdelhamed, A., Lin, S., and Brown, M.~S.
\newblock A high-quality denoising dataset for smartphone cameras.
\newblock In \emph{Proceedings of the {IEEE/CVF} Conference on Computer Vision
  and Pattern Recognition}, pp.\  1692--1700, 2018.

\bibitem[Agustsson \& Timofte(2017)Agustsson and Timofte]{agustsson2017ntire}
Agustsson, E. and Timofte, R.
\newblock {NTIRE} 2017 challenge on single image super-resolution: Dataset and
  study.
\newblock In \emph{Proceedings of the {IEEE} Conference on Computer Vision and
  Pattern Recognition Workshops}, pp.\  126--135, 2017.

\bibitem[Buades et~al.(2005)Buades, Coll, and Morel]{buades2005review}
Buades, A., Coll, B., and Morel, J.-M.
\newblock A review of image denoising algorithms, with a new one.
\newblock \emph{Multiscale Modeling \& Simulation}, 4\penalty0 (2):\penalty0
  490--530, 2005.

\bibitem[Dabov et~al.(2007)Dabov, Foi, Katkovnik, and
  Egiazarian]{dabov2007image}
Dabov, K., Foi, A., Katkovnik, V., and Egiazarian, K.
\newblock Image denoising by sparse {3-D} transform-domain collaborative
  filtering.
\newblock \emph{{IEEE} Transactions on Image Processing}, 16\penalty0
  (8):\penalty0 2080--2095, 2007.

\bibitem[Desai et~al.(2023)Desai, Ozturkler, Sandino, Boutin, Willis,
  Vasanawala, Hargreaves, R{\'e}, Pauly, and Chaudhari]{desai2023noise2recon}
Desai, A.~D., Ozturkler, B.~M., Sandino, C.~M., Boutin, R., Willis, M.,
  Vasanawala, S., Hargreaves, B.~A., R{\'e}, C., Pauly, J.~M., and Chaudhari,
  A.~S.
\newblock {Noise2Recon}: Enabling {SNR}-robust {MRI} reconstruction with
  semi-supervised and self-supervised learning.
\newblock \emph{Magnetic Resonance in Medicine}, 90\penalty0 (5):\penalty0
  2052--2070, 2023.
\newblock \doi{10.1002/mrm.29759}.

\bibitem[Gu et~al.(2014)Gu, Zhang, Zuo, and Feng]{gu2014weighted}
Gu, S., Zhang, L., Zuo, W., and Feng, X.
\newblock Weighted nuclear norm minimization with application to image
  denoising.
\newblock In \emph{Proceedings of the {IEEE} Conference on Computer Vision and
  Pattern Recognition}, pp.\  2862--2869, 2014.

\bibitem[Guth et~al.(2025)Guth, Kadkhodaie, and Simoncelli]{guth2025learning}
Guth, F., Kadkhodaie, Z., and Simoncelli, E.
\newblock Learning normalized image densities via dual score matching.
\newblock In \emph{Advances in Neural Information Processing Systems},
  volume~38, 2025.

\bibitem[Hendriksen et~al.(2020)Hendriksen, Pelt, and
  Batenburg]{hendriksen2020noise2inverse}
Hendriksen, A.~A., Pelt, D.~M., and Batenburg, K.~J.
\newblock {Noise2Inverse}: Self-supervised deep convolutional denoising for
  tomography.
\newblock \emph{IEEE Transactions on Computational Imaging}, 6:\penalty0
  1320--1335, 2020.
\newblock \doi{10.1109/TCI.2020.3019647}.

\bibitem[Herbreteau \& Kervrann(2024)Herbreteau and
  Kervrann]{herbreteau2024survey}
Herbreteau, S. and Kervrann, C.
\newblock On normalization-equivariance properties of supervised and
  unsupervised denoising methods: a survey.
\newblock \emph{arXiv preprint arXiv:2402.15352}, 2024.

\bibitem[Herbreteau et~al.(2023)Herbreteau, Moebel, and
  Kervrann]{herbreteau2023normalization}
Herbreteau, S., Moebel, E., and Kervrann, C.
\newblock Normalization-equivariant neural networks with application to image
  denoising.
\newblock \emph{Advances in Neural Information Processing Systems},
  36:\penalty0 5706--5728, 2023.

\bibitem[Hong et~al.(2024)Hong, Xu, Hu, and Fessler]{hong2024provablepnp}
Hong, T., Xu, X., Hu, J., and Fessler, J.~A.
\newblock Provable preconditioned plug-and-play approach for compressed sensing
  {MRI} reconstruction.
\newblock \emph{IEEE Transactions on Computational Imaging}, 10:\penalty0
  1476--1488, 2024.

\bibitem[Huang et~al.(2024)Huang, Zhang, Zhang, Li, Dong, and
  Ying]{huang2024dured}
Huang, P., Zhang, C., Zhang, X., Li, X., Dong, L., and Ying, L.
\newblock Self-supervised deep unrolled reconstruction using regularization by
  denoising.
\newblock \emph{IEEE Transactions on Medical Imaging}, 43\penalty0
  (3):\penalty0 1203--1213, 2024.
\newblock \doi{10.1109/TMI.2023.3332614}.

\bibitem[Ioffe \& Szegedy(2015)Ioffe and Szegedy]{ioffe2015batch}
Ioffe, S. and Szegedy, C.
\newblock Batch normalization: Accelerating deep network training by reducing
  internal covariate shift.
\newblock In \emph{Proceedings of the 32nd International Conference on Machine
  Learning}, volume~37 of \emph{Proceedings of Machine Learning Research}, pp.\
   448--456. PMLR, 2015.

\bibitem[Kaba et~al.(2023)Kaba, Mondal, Zhang, Bengio, and
  Ravanbakhsh]{kaba2023equivariance}
Kaba, S.-O., Mondal, A.~K., Zhang, Y., Bengio, Y., and Ravanbakhsh, S.
\newblock Equivariance with learned canonicalization functions.
\newblock In \emph{Proceedings of the 40th International Conference on Machine
  Learning}, volume 202 of \emph{Proceedings of Machine Learning Research},
  pp.\  15546--15566. PMLR, 2023.
\newblock URL \url{https://proceedings.mlr.press/v202/kaba23a.html}.

\bibitem[Kadkhodaie \& Simoncelli(2021)Kadkhodaie and
  Simoncelli]{kadkhodaie2021stochastic}
Kadkhodaie, Z. and Simoncelli, E.
\newblock Stochastic solutions for linear inverse problems using the prior
  implicit in a denoiser.
\newblock \emph{Advances in Neural Information Processing Systems},
  34:\penalty0 13242--13254, 2021.

\bibitem[Kadkhodaie et~al.(2024)Kadkhodaie, Guth, Simoncelli, and
  Mallat]{kadkhodaie2024harmonic}
Kadkhodaie, Z., Guth, F., Simoncelli, E.~P., and Mallat, S.
\newblock Generalization in diffusion models arises from geometry-adaptive
  harmonic representations.
\newblock In \emph{International Conference on Learning Representations}, 2024.

\bibitem[Kim et~al.(2022)Kim, Kim, Tae, Park, Choi, and Choo]{kim2022revin}
Kim, T., Kim, J., Tae, Y., Park, C., Choi, J.-H., and Choo, J.
\newblock Reversible instance normalization for accurate time-series
  forecasting against distribution shift.
\newblock In \emph{International Conference on Learning Representations}, 2022.
\newblock URL \url{https://openreview.net/forum?id=cGDAkQo1C0p}.

\bibitem[Kingma \& Ba(2015)Kingma and Ba]{kingma2015adam}
Kingma, D.~P. and Ba, J.
\newblock Adam: A method for stochastic optimization.
\newblock In \emph{International Conference on Learning Representations}, 2015.

\bibitem[Lehtinen et~al.(2018)Lehtinen, Munkberg, Hasselgren, Laine, Karras,
  Aittala, and Aila]{lehtinen2018noise2noise}
Lehtinen, J., Munkberg, J., Hasselgren, J., Laine, S., Karras, T., Aittala, M.,
  and Aila, T.
\newblock {Noise2Noise}: Learning image restoration without clean data.
\newblock In \emph{Proceedings of the 35th International Conference on Machine
  Learning}, volume~80 of \emph{Proceedings of Machine Learning Research}, pp.\
   2965--2974. PMLR, 2018.

\bibitem[Levac et~al.(2025)Levac, Tamir, Pereyra, and
  Tachella]{levac2025normalization}
Levac, B., Tamir, J., Pereyra, M., and Tachella, J.
\newblock Normalization-equivariant diffusion models: Learning posterior
  samplers from noisy and partial measurements.
\newblock \emph{arXiv preprint arXiv:2510.11964}, 2025.

\bibitem[Liang et~al.(2021)Liang, Cao, Sun, Zhang, Van~Gool, and
  Timofte]{liang2021swinir}
Liang, J., Cao, J., Sun, G., Zhang, K., Van~Gool, L., and Timofte, R.
\newblock {SwinIR}: Image restoration using {Swin Transformer}.
\newblock In \emph{Proceedings of the {IEEE/CVF} International Conference on
  Computer Vision Workshops}, pp.\  1833--1844, 2021.

\bibitem[Lim et~al.(2017)Lim, Son, Kim, Nah, and Lee]{lim2017enhanced}
Lim, B., Son, S., Kim, H., Nah, S., and Lee, K.~M.
\newblock Enhanced deep residual networks for single image super-resolution.
\newblock In \emph{Proceedings of the {IEEE} Conference on Computer Vision and
  Pattern Recognition Workshops}, pp.\  136--144, 2017.

\bibitem[Liu et~al.(2020)Liu, Sun, Eldeniz, Gan, An, and Kamilov]{liu2020rare}
Liu, J., Sun, Y., Eldeniz, C., Gan, W., An, H., and Kamilov, U.~S.
\newblock {RARE}: Image reconstruction using deep priors learned without
  groundtruth.
\newblock \emph{IEEE Journal of Selected Topics in Signal Processing},
  14\penalty0 (6):\penalty0 1088--1099, 2020.
\newblock \doi{10.1109/JSTSP.2020.2998402}.

\bibitem[Liu et~al.(2022)Liu, Wu, Wang, and Long]{liu2022non}
Liu, Y., Wu, H., Wang, J., and Long, M.
\newblock Non-stationary transformers: Exploring the stationarity in time
  series forecasting.
\newblock \emph{Advances in Neural Information Processing Systems},
  35:\penalty0 9881--9893, 2022.

\bibitem[Liu et~al.(2021)Liu, Lin, Cao, Hu, Wei, Zhang, Lin, and
  Guo]{liu2021swin}
Liu, Z., Lin, Y., Cao, Y., Hu, H., Wei, Y., Zhang, Z., Lin, S., and Guo, B.
\newblock {Swin Transformer}: Hierarchical vision transformer using shifted
  windows.
\newblock In \emph{Proceedings of the {IEEE/CVF} International Conference on
  Computer Vision}, pp.\  10012--10022, 2021.

\bibitem[Ma et~al.(2017)Ma, Duanmu, Wu, Wang, Yong, Li, and
  Zhang]{ma2016waterloo}
Ma, K., Duanmu, Z., Wu, Q., Wang, Z., Yong, H., Li, H., and Zhang, L.
\newblock {Waterloo Exploration Database}: New challenges for image quality
  assessment models.
\newblock \emph{IEEE Transactions on Image Processing}, 26\penalty0
  (2):\penalty0 1004--1016, 2017.

\bibitem[Martin et~al.(2001)Martin, Fowlkes, Tal, and
  Malik]{martin2001database}
Martin, D., Fowlkes, C., Tal, D., and Malik, J.
\newblock A database of human segmented natural images and its application to
  evaluating segmentation algorithms and measuring ecological statistics.
\newblock In \emph{Proceedings of the Eighth {IEEE} International Conference on
  Computer Vision}, volume~2, pp.\  416--423. IEEE, 2001.

\bibitem[Mohan et~al.(2020)Mohan, Kadkhodaie, Simoncelli, and
  Fernandez-Granda]{mohan2020biasfree}
Mohan, S., Kadkhodaie, Z., Simoncelli, E.~P., and Fernandez-Granda, C.
\newblock Robust and interpretable blind image denoising via bias-free
  convolutional neural networks.
\newblock In \emph{International Conference on Learning Representations}, 2020.
\newblock URL \url{https://openreview.net/forum?id=HJlSmC4FPS}.

\bibitem[Mondal et~al.(2023)Mondal, Panigrahi, Kaba, Rajeswar~Mudumba, and
  Ravanbakhsh]{mondal2023equivariant}
Mondal, A.~K., Panigrahi, S.~S., Kaba, O., Rajeswar~Mudumba, S., and
  Ravanbakhsh, S.
\newblock Equivariant adaptation of large pretrained models.
\newblock In \emph{Advances in Neural Information Processing Systems},
  volume~36, pp.\  50293--50309. Curran Associates, Inc., 2023.

\bibitem[Paszke et~al.(2019)Paszke, Gross, Massa, Lerer, Bradbury, Chanan,
  Killeen, Lin, Gimelshein, Antiga, et~al.]{paszke2019pytorch}
Paszke, A., Gross, S., Massa, F., Lerer, A., Bradbury, J., Chanan, G., Killeen,
  T., Lin, Z., Gimelshein, N., Antiga, L., et~al.
\newblock {PyTorch}: An imperative style, high-performance deep learning
  library.
\newblock \emph{Advances in Neural Information Processing Systems},
  32:\penalty0 8026--8037, 2019.

\bibitem[Rudin et~al.(1992)Rudin, Osher, and Fatemi]{rudin1992tv}
Rudin, L.~I., Osher, S., and Fatemi, E.
\newblock Nonlinear total variation based noise removal algorithms.
\newblock \emph{Physica D: Nonlinear Phenomena}, 60\penalty0 (1--4):\penalty0
  259--268, 1992.

\bibitem[Ulyanov et~al.(2016)Ulyanov, Vedaldi, and
  Lempitsky]{ulyanov2016instance}
Ulyanov, D., Vedaldi, A., and Lempitsky, V.
\newblock Instance normalization: The missing ingredient for fast stylization.
\newblock \emph{arXiv preprint arXiv:1607.08022}, 2016.

\bibitem[Wang et~al.(2004)Wang, Bovik, Sheikh, and Simoncelli]{wang2004image}
Wang, Z., Bovik, A.~C., Sheikh, H.~R., and Simoncelli, E.~P.
\newblock Image quality assessment: from error visibility to structural
  similarity.
\newblock \emph{{IEEE} Transactions on Image Processing}, 13\penalty0
  (4):\penalty0 600--612, 2004.

\bibitem[Zamir et~al.(2022)Zamir, Arora, Khan, Hayat, Khan, and
  Yang]{zamir2022restormer}
Zamir, S.~W., Arora, A., Khan, S., Hayat, M., Khan, F.~S., and Yang, M.-H.
\newblock {Restormer}: Efficient transformer for high-resolution image
  restoration.
\newblock In \emph{Proceedings of the {IEEE/CVF} Conference on Computer Vision
  and Pattern Recognition}, pp.\  5728--5739, 2022.

\bibitem[Zhang et~al.(2017)Zhang, Zuo, Chen, Meng, and Zhang]{zhang2017dncnn}
Zhang, K., Zuo, W., Chen, Y., Meng, D., and Zhang, L.
\newblock Beyond a {Gaussian} denoiser: Residual learning of deep {CNN} for
  image denoising.
\newblock \emph{IEEE Transactions on Image Processing}, 26\penalty0
  (7):\penalty0 3142--3155, 2017.

\bibitem[Zhang et~al.(2022)Zhang, Li, Zuo, Zhang, Van~Gool, and
  Timofte]{zhang2021plug}
Zhang, K., Li, Y., Zuo, W., Zhang, L., Van~Gool, L., and Timofte, R.
\newblock Plug-and-play image restoration with deep denoiser prior.
\newblock \emph{IEEE Transactions on Pattern Analysis and Machine
  Intelligence}, 44\penalty0 (10):\penalty0 6360--6376, 2022.

\end{thebibliography}
\bibliographystyle{icml2026}
\appendix
\onecolumn
\raggedbottom{}
\setlength{\textfloatsep}{8pt plus 2pt minus 2pt}
\setlength{\floatsep}{8pt plus 2pt minus 2pt}
\setlength{\intextsep}{8pt plus 2pt minus 2pt}

\section{Description of the Denoising Architectures and Implementation}%
\label{app:arch_impl}

\subsection{Description of Models}%
\label{app:arch_models}

\paragraph{DnCNN and FDnCNN:}
We implement the Denoising CNN (DnCNN) of \citet{zhang2017dncnn}, which consists of 20 convolutional layers with \(3\times 3\) filters and 64 channels, batch normalization~\citep{ioffe2015batch}, and ReLU nonlinearities.
To construct the bias-free/BatchNorm-free variant used by prior equivariant baselines, we follow \citet{herbreteau2023normalization} and use FDnCNN, an unpublished flexible DnCNN variant that removes batch normalization and additive bias.
In our experiments, we keep the standard DnCNN backbone unchanged when applying our wrapper (\methodWNE{}), while FDnCNN serves as the base architecture for the architectural scale-equivariant and normalization-equivariant baselines (\methodSEarch{}, \methodNEarch{}) where internal-layer equivariance is enforced.

\paragraph{SwinIR:}
We use SwinIR~\citep{liang2021swinir}, a transformer-based denoiser built from Swin Transformer blocks~\citep{liu2021swin} (windowed self-attention with LayerNorm and MLPs).
We use the denoising configuration (no upsampling) and instantiate the lightweight variant in our code: a shallow \(3\times 3\) convolutional feature extractor, followed by \(4\) Residual Swin Transformer Blocks (RSTBs) with depths \((6,6,6,6)\) (total \(24\) Swin Transformer blocks, embedding dimension \(60\), window size \(8\), \(6\) heads, MLP ratio \(2\); patch size \(1\) with patch-wise LayerNorm), and a \(3\times 3\) convolutional reconstruction head.
As architectural SE/NE counterparts are not available as drop-in replacements for standard transformer components (e.g., attention and LayerNorm), for SwinIR we compare only the unmodified backbone (\methodBaseline{}) and its wrapped version (\methodWNE{}).

\paragraph{Restormer:}
We use Restormer~\citep{zamir2022restormer}, a transformer-based denoiser built from Multi-DConv Head Transposed Self-Attention (MDTA) and Gated-Dconv Feed-Forward Network (GDFN) blocks.
In our code, we instantiate the denoising variant as a 4-level encoder-decoder with an overlapping \(3\times 3\) convolutional input embedding, base width \(48\), block counts \((4,6,6,8)\) across the encoder/latent levels, attention heads \((1,2,4,8)\), FFN expansion factor \(2.66\), and \(4\) refinement blocks at the final resolution.
Downsampling and upsampling are implemented with convolution plus PixelUnshuffle/PixelShuffle, and the network predicts the denoised output through a final \(3\times 3\) convolution with a global residual connection from the noisy input.
For the grayscale experiments, the model uses single-channel input/output and is trained on \(64\times 64\) patches.
As architectural SE/NE counterparts are not available as drop-in replacements for the standard attention and normalization blocks, for Restormer we compare only the unmodified backbone (\methodBaseline{}) and its wrapped version (\methodWNE{}).

\subsection{Description of Variants}%
\label{app:arch_variants}

\paragraph{\methodBaseline:}
The \methodBaseline{} variant is the unmodified reference implementation of each backbone (DnCNN or SwinIR), including its standard use of biases and (when present) internal normalization layers.

\paragraph{\methodSEarch:}
For the DnCNN family, the scale-equivariant baseline \methodSEarch{} follows the bias-free ReLU construction of \citet{herbreteau2023normalization} which is based on the construction from \citet{mohan2020biasfree}.
Concretely, architectural sources of scale breaking (most notably additive biases, and normalization layers that introduce additive shifts) are removed, leaving only positively 1-homogeneous operations.
This enforces scale equivariance \(f(ay)=a f(y)\) for all \(a>0\).

\paragraph{\methodNEarch:}
The normalization-equivariant baseline \methodNEarch{} follows the architectural construction of \citet{herbreteau2023normalization}.
In this design, NE is enforced throughout the network by restricting internal operations to NE-compatible families:
\begin{enumerate}[(i)]
    \item Standard convolutions are replaced by affine-constrained convolutions: bias-free convolutions whose weights are constrained so that a constant feature map is mapped to itself,
          \[
              \mathrm{Conv}_w(\ones) = \ones,
          \]
          which for a multi-channel \(3\times 3\) convolution is equivalent to requiring, for each output channel \(c'\),
          \[
              \sum_{c}\sum_{i,j} w_{c',c,i,j} = 1,
          \]
          with reflect padding.
    \item ReLU nonlinearities are replaced by SortPool channel-wise sorting patterns, implemented via the two-channel operator
          \[
              \mathrm{SortPool}(u,v) \;=\; \bigl(\min(u,v),\,\max(u,v)\bigr),
          \]
          applied in a fixed pairing scheme across channels.
    \item Classical residual sums are replaced by affine residual connections, where two branches \(\ell_1\) and \(\ell_2\) are combined as
          \[
              (1-t)\,\ell_1 + t\,\ell_2,
          \]
          with a trainable scalar \(t\).
\end{enumerate}

As in \citet{herbreteau2023normalization}, \methodNEarch{} is instantiated on FDnCNN (BatchNorm-free), since enforcing NE throughout internal layers is not compatible with BatchNorm.

\paragraph{Wrapped normalization-equivariant (\methodWNE{}):}
Our method \methodWNE{} enforces input-output NE without modifying internal operations.
Given an arbitrary backbone \(g_\theta\), the wrapped predictor is
\[
    \fwne(y)=\std(y)\,g_\theta(\Tne(y))+\mu(y)\ones,
\]
as defined in~\eqref{eq:ne_wrapper}.
This yields NE at the input-output level for any backbone \(g_\theta\), including off-the-shelf DnCNN (with BatchNorm) and SwinIR.\@

\subsection{Practical Implementation Notes}%
\label{app:arch_impl_notes}

Our primary comparisons are within-backbone: \methodBaseline{} versus \methodWNE{} for DnCNN and SwinIR, isolating the effect of the outer wrapper on a fixed backbone.
For the DnCNN family we also report \methodSEarch{} and \methodNEarch{} from \citet{herbreteau2023normalization} as architectural references.
These models enforce SE/NE by replacing core operations throughout the network (e.g., constrained convolutions and equivariant nonlinearities), so they constitute a distinct architecture rather than a drop-in modification of standard DnCNN\@.
We include them to place \methodWNE{} in context relative to established architectural-constraint designs.

\paragraph{Channel-wise sort pooling (\methodNEarch).}
We implement the channel-wise sorting nonlinearity of \citet{herbreteau2023normalization} using standard pooling primitives.
Given a pairing of channels, we compute the per-location maxima and minima for each pair (pooling with kernel size \(2\) along the channel axis) and concatenate the results.
Between successive sort stages, we apply the same fixed channel reindexing as in \citet{herbreteau2023normalization} to define the next pairing pattern.
Since CNN feature channels have no intrinsic ordering, the precise permutation is not semantically meaningful; it simply realizes the prescribed NE-compatible nonlinearity efficiently.

\paragraph{Affine-constrained convolutions (\methodNEarch).}
For the architectural NE baseline (\methodNEarch{}), we use the public implementation released by \citet{herbreteau2023normalization}, including their stable reparameterization of affine-constrained convolution kernels.
(We do not re-derive or redesign these constrained layers; we reuse their tested implementation to keep the architectural NE baseline faithful.)

\paragraph{Numerical stability (\methodWNE).}\label{app:numerical_stability}
For the wrapper, we compute \(\mu(y)\) and \(\std(y)\) jointly over all entries (all channels and pixels), matching the NE group action \(y\mapsto ay+b\ones\).
All theoretical results (and Proposition~\ref{prop:wrapper_is_ne}) assume the idealized wrapper using \(\std(y)\) together with the constant-instance guardrail \(\Tne(y)=0\) when \(\std(y)=0\).
In code, we use \(\std_{\varepsilon}(y):=\std(y)+\varepsilon\) with \(\varepsilon=10^{-5}\) to stabilize normalization when \(\std(y)\) is very small.
This yields an approximately NE implementation: it matches the theoretical wrapper whenever \(\std(y)\) is not close to \(0\), and differs only in the near-constant regime where the stabilization is active.

\paragraph{Code.}
All models are implemented in PyTorch~\citep{paszke2019pytorch}.
For \methodNEarch{}, we rely on the released code of \citet{herbreteau2023normalization}.
Our wrapper (\methodWNE{}) is implemented as a lightweight normalize-apply-backbone-denormalize module around the chosen backbone.

\section{Description of Datasets and Training Details}%
\label{app:training_details}

\paragraph{Relation to prior work (protocol).}
We follow the dataset construction and training protocol of
\citet[][Appendix~B, and implementation notes in Appendix~A]{herbreteau2023normalization},
which in turn adopts the large training set of \citet{zhang2021plug}; we use this same training set for all models and experiments.
To diagnose noise-level mismatch in a controlled setting, we also consider single-noise training with fixed \(\sigma_{\mathrm{train}}\), following the single-noise protocol of \citet{herbreteau2023normalization}.

\paragraph{Datasets.}
Following \citet{herbreteau2023normalization}, our training set is the standard large-scale denoising training corpus popularized by DRUNet/DPIR-style pipelines.
The dataset is composed of 8,694 images:
\begin{enumerate}[(i)]
    \item BSD400 (400 images) from the Berkeley Segmentation Dataset~\citep{martin2001database},
    \item the Waterloo Exploration Database (4,744 images)~\citep{ma2016waterloo},
    \item DIV2K (900 images)~\citep{agustsson2017ntire},
    \item Flickr2K (about 2.6k images)~\citep{lim2017enhanced}.
\end{enumerate}
We apply random flips and \(90^\circ\) rotations for data augmentation.
We report test results on Set12 and BSD68~\citep{martin2001database}.

\paragraph{Preprocessing and noise model.}
All experiments use additive white Gaussian noise (AWGN).
Images are processed in grayscale for Set12/BSD68 evaluation.
Pixel intensities are normalized to \([0,1]\); noise levels \(\sigma\) are reported in 8-bit units, so the injected noise standard deviation in \([0,1]\) units is \(\sigma/255\).
For the single-noise mismatch diagnostic, we train at a fixed
\(\sigma_{\mathrm{train}}\in\{10,25,50\}\) and evaluate at a range of \(\sigma_{\mathrm{test}}\) values.

\paragraph{Patch sampling.}
We train from on-the-fly random crops: each iteration samples a clean training image uniformly, extracts a random \(P\times P\) patch, and corrupts it with AWGN at the chosen \(\sigma\).
Unless stated otherwise, we use \(P=70\) (matching \citet{herbreteau2023normalization}'s FDnCNN/DnCNN setting).

\paragraph{Optimization.}
We train with Adam and default \(\beta\) parameters~\citep{kingma2015adam}.
Unless stated otherwise, we use an initial learning rate \(10^{-4}\).
We train for a fixed number of iterations (not tied to epochs); in our code this corresponds to \texttt{num\_steps} iterations. For the \methodNEarch{} variants we train for longer following~\citet{herbreteau2023normalization}.
We use MSE for our NE-enforcing variants (\methodNEarch{} and \methodWNE{}) following~\citet{herbreteau2023normalization}.

\paragraph{Model-specific settings.}
Table~\ref{tab:train_hparams} summarizes the training hyperparameters used in our experiments.
For DnCNN-family architectural baselines (SE-arch and NE-arch), we follow the corresponding prescriptions from
\citet{mohan2020biasfree,herbreteau2023normalization}.
For SwinIR, we use the standard denoising SwinIR backbone~\citep{liang2021swinir} and train it under the same AWGN patch-based pipeline to keep the comparison controlled.

\paragraph{Validation.} We report results using the final checkpoint rather than early-stopping on validation PSNR, as our focus is mismatch behavior rather than peak matched-noise performance.

\begin{table}[t]
    \centering
    \caption{\textbf{Training hyperparameters.}
        All runs use on-the-fly patch sampling from the same training corpus and AWGN corruption.
        For the single-noise diagnostic, \(\sigma\) is fixed during training (\(\sigma_{\mathrm{train}}\in\{10,25,50\}\)) and varied at test time.
        \textsuperscript{*} indicates that the learning rate is halved every 100{,}000 iterations.
        \citet{herbreteau2023normalization} use a constant learning rate for FDnCNN variants, noting that ``speed improvements are certainly possible by adapting the learning rate throughout optimization.''}%
    \label{tab:train_hparams}
    \small
    \setlength{\tabcolsep}{6pt}
    \renewcommand{\arraystretch}{1.05}
    \begin{tabular}{lccccc}
        \toprule
        Model / variant   & Batch & Patch \(P\) & Loss       & LR                                     & Iterations \\
        \midrule
        DnCNN (Baseline)  & 128   & 70          & \(\ell_1\) & \(1\mathrm{e}{-4}\)                    & 500{,}000  \\
        FDnCNN (SE-arch)  & 128   & 70          & \(\ell_1\) & \(1\mathrm{e}{-4}\)                    & 500{,}000  \\
        FDnCNN (NE-arch)  & 128   & 70          & MSE        & \(1\mathrm{e}{-4}\)                    & 900{,}000  \\
        DnCNN (WNE)       & 128   & 70          & MSE        & \(1\mathrm{e}{-4}\)\textsuperscript{*} & 500{,}000  \\
        \midrule
        SwinIR (Baseline) & 32    & 64          & MSE        & \(1\mathrm{e}{-4}\)                    & 500{,}000  \\
        SwinIR (WNE)      & 32    & 64          & MSE        & \(1\mathrm{e}{-4}\)\textsuperscript{*} & 500{,}000  \\
        \bottomrule
    \end{tabular}
\end{table}

\begin{table}[t]
    \centering
    \caption{\textbf{Runtime overhead for SwinIR (seconds per batch).}
        Timing with batch size \(32\), spatial size \(64\times 64\), warmup \(=3\), and timed steps \(=10\).
        “Backward” measures one training iteration (forward + loss + backward), and “Inference” measures forward-only.
        Values in parentheses are ratios relative to the baseline on the same device.}%
    \label{tab:speed_swinir}
    \setlength{\tabcolsep}{6pt}
    \begin{tabular}{lcccc}
        \toprule
                          & \multicolumn{2}{c}{GPU} & \multicolumn{2}{c}{CPU}                                                  \\
        Variant           & Backward (s)            & Inference (s)           & Backward (s)           & Inference (s)         \\
        \midrule
        \methodBaseline{} & 0.271                   & 0.101                   & 16.31                  & 3.52                  \\
        \methodWNE{}      & 0.271 (\(1.00\times\))  & 0.101 (\(1.00\times\))  & 18.22 (\(1.12\times\)) & 3.81 (\(1.08\times\)) \\
        \bottomrule
    \end{tabular}
\end{table}
\begin{table}[!t]
    \centering
    \caption{\textbf{Runtime overhead for FDnCNN (seconds per batch).}
        Timing on inputs of shape \(16 \times 1 \times 128 \times 128\) with warmup \(=3\) and timed steps \(=10\).
        “Backward” measures one training iteration (forward + loss + backward), and “Inference” measures forward-only.
        Values in parentheses are ratios relative to the baseline on the same device.
        Hardware: GPU = \texttt{NVIDIA GeForce RTX 4090}, CPU = \texttt{AMD Ryzen Threadripper PRO 5955WX 16-Cores}.}%
    \label{tab:speed}
    \setlength{\tabcolsep}{6pt}
    \begin{tabular}{lcccc}
        \toprule
                          & \multicolumn{2}{c}{GPU} & \multicolumn{2}{c}{CPU}                                                 \\
        Variant           & Backward (s)            & Inference (s)           & Backward (s)          & Inference (s)         \\
        \midrule
        \methodBaseline{} & 0.034                   & 0.011                   & 2.53                  & 0.82                  \\
        \methodWNE{}      & 0.033 (0.99\(\times\))  & 0.011 (1.00\(\times\))  & 2.51 (0.99\(\times\)) & 0.83 (1.01\(\times\)) \\
        \methodNEarch{}   & 0.054 (1.60\(\times\))  & 0.019 (1.69\(\times\))  & 3.79 (1.50\(\times\)) & 1.57 (1.90\(\times\)) \\
        \bottomrule
    \end{tabular}
\end{table}
\subsection{Runtime Benchmarking Protocol}%
\label{app:speed_protocol}

We report wall-clock seconds per batch, averaged over 10 timed iterations after 3 warmup iterations, using the same input shapes across variants.
“Backward” corresponds to one training iteration (forward + loss + backward) and “Inference” corresponds to forward-only under the \texttt{torch.no\_grad()} context.
Absolute times depend on hardware and system settings, but ratios are informative because all variants share the same backbone family and are timed using the same harness.
For SwinIR, we additionally report timings on inputs of shape \(32\times 1\times 64\times 64\), reflecting the transformer’s training batch and patch size in our setup (Table~\ref{tab:speed_swinir}). On CPU, we observe a small but nonzero overhead in our current PyTorch implementation; this may be mitigated with more careful CPU benchmarking. Our focus is GPU training and inference, where modern denoisers are typically deployed.

\section{Mathematical Proofs}%
\label{app:proofs}

\subsection{A Lemma from \citet{herbreteau2023normalization}}%
\label{app:herb_lemma}

We reproduce the following lemma from \citet{herbreteau2023normalization} for completeness; it is not used elsewhere and is equivalent to our characterization via Remark~\ref{rem:M_vs_sphere}.

\begin{lemma}[Lemma~1 in \citet{herbreteau2023normalization}]%
    \label{lem:herb1}
    Let \(f:\R^d\to\R^d\) satisfy normalization equivariance:
    \[
        f(ay+b\ones)=a f(y)+b\ones
        \qquad \text{for all } a>0,\ b\in\R.
    \]
    Then \(f\) is entirely determined by its restriction to
    \[
        S^{d-1}\cap \mathrm{Span}{(\ones)}^\perp.
    \]
\end{lemma}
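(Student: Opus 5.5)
The statement to prove is Lemma~\ref{lem:herb1}: an NE map \(f\) is determined by its restriction to \(S^{d-1}\cap\mathrm{Span}(\ones)^\perp\). The plan is to write any input as an affine image of a point on this set and then transport \(f\) along that affine map.

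\textbf{Step 1 (non-constant inputs).} Fix \(y\in\R^d\) with \(\std(y)>0\). Set \(b:=\mu(y)\), \(a:=\|y-\mu(y)\ones\|_2=\sqrt d\,\std(y)>0\), and
\[
z:=\frac{y-\mu(y)\ones}{a}.
\]
Then \(\mu(z)=0\), so \(z\perp\ones\), and \(\|z\|_2=1\). Hence \(z\in S^{d-1}\cap\mathrm{Span}(\ones)^\perp\), and \(y=az+b\ones\). Applying NE with these \(a,b\) gives
\[
f(y)=a\,f(z)+b\ones .
\]
So \(f(y)\) is a fixed expression in \(f(z)\) together with the statistics \(\mu(y)\) and \(\std(y)\), which do not depend on \(f\). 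Consequently, if two NE maps agree on the set, they agree at every non-constant \(y\).

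\textbf{Step 2 (constant inputs).} Let \(y=c\ones\). Apply NE at the input \(0\) with \(b=0\) and any \(a>0\): this gives \(f(0)=a f(0)\). Taking, say, \(a=2\) yields \(f(0)=0\). Next apply NE at \(0\) with \(a=1\) and \(b=c\): this gives \(f(c\ones)=f(0)+c\ones=c\ones\). So on constant inputs \(f\) is forced by NE alone and needs no data from the set at all.

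\textbf{Main point to be careful about.} The only real subtlety is Step 2. A constant input has no representative on the set, since \(y-\mu(y)\ones=0\) cannot be rescaled to unit norm. The argument therefore has to use the \(a\)-scaling at \(0\) to force \(f(0)=0\). With that in hand, combining Steps 1 and 2 shows that any two NE maps that coincide on \(S^{d-1}\cap\mathrm{Span}(\ones)^\perp\) coincide on all of \(\R^d\), which is the claim.

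For comparison with the characterization, note that \(\mathcal M=\sqrt d\,(S^{d-1}\cap\mathrm{Span}(\ones)^\perp)\). Restricting to \(\mathcal M\) instead of the unit sphere is the same statement up to this rescaling, which is the content of Remark~\ref{rem:M_vs_sphere}.
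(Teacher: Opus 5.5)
Your proof is correct and follows the paper's argument. Your decomposition \(y=\bigl(y-\mu(y)\ones\bigr)+\mu(y)\ones\) is exactly the paper's orthogonal split \(x=x_1+x_2\), and you handle constant inputs the same way (scaling at \(0\) gives \(f(0)=0\), then a shift), the only cosmetic difference being that you apply NE once with the combined pair \((a,b)\) whereas the paper applies the shift and the scaling (SE) as two separate steps.
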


\begin{proof}
    First note that NE implies scale equivariance (SE) by taking \(b=0\):
    \[
        f(ay)=a f(y)\qquad (a>0).
    \]

    Fix \(x\in\R^d\). Decompose \(x\) orthogonally as
    \[
        x = x_1 + x_2,
        \qquad
        x_1\in \mathrm{Span}{(\ones)}^\perp,
        \quad
        x_2\in \mathrm{Span}(\ones).
    \]
    If \(x_1=0\), then \(x=x_2=\alpha\ones\) for some \(\alpha\in\R\). Using NE with \(y=0\) gives
    \[
        f(\alpha\ones)=f(0+\alpha\ones)=f(0)+\alpha\ones.
    \]
    Also, SE forces \(f(0)=0\): \(f(0)=f(a\cdot 0)=af(0)\) for all \(a>0\), so \(f(0)=0\).
    Hence \(f(\alpha\ones)=\alpha\ones\). Thus \(f(x)\) is determined on \(\mathrm{Span}(\ones)\).

    Assume \(x_1\neq 0\). Using NE with the shift \(x_2\in \mathrm{Span}(\ones)\),
    \[
        f(x)=f(x_1+x_2)=f(x_1)+x_2.
    \]
    Now let \(z:=x_1/\|x_1\|_2\). Then \(z\in S^{d-1}\cap \mathrm{Span}{(\ones)}^\perp\), and by SE,
    \[
        f(x_1)=f(\|x_1\|_2\,z)=\|x_1\|_2\,f(z).
    \]
    Therefore
    \[
        f(x)=\|x_1\|_2\,f(z)+x_2,
    \]
    so knowing \(f\) on \(S^{d-1}\cap \mathrm{Span}{(\ones)}^\perp\) determines \(f(x)\) for every \(x\in\R^d\).
\end{proof}

\begin{remark}[Connection to our normalized manifold \(\mathcal{M}\)]%
    \label{rem:M_vs_sphere}
    Recall
    \[
        \mathcal{M}:=\{z\in\R^d:\mu(z)=0,\ \std(z)=1\}
        =\{z\in \mathrm{Span}{(\ones)}^\perp:\|z\|_2=\sqrt d\}.
    \]
    Hence
    \[
        \mathcal{M}
        = \sqrt d\bigl(S^{d-1}\cap \mathrm{Span}{(\ones)}^\perp\bigr).
    \]
    Specifying a function on \(\mathcal{M}\) is equivalent to specifying it on
    \(S^{d-1}\cap \mathrm{Span}{(\ones)}^\perp\), up to the fixed rescaling by \(\sqrt d\).
\end{remark}

\subsection{Affine Identities for \(\mu\), \(\std\), and \(\Tne\)}%
\label{app:affine_identities}
We include the following proposition for completeness.
\begin{proposition}[Mean and standard deviation under affine transforms]%
    \label{prop:mu_sigma_affine}
    For all \(a>0\), \(b\in\R\), and \(y\in\R^d\),
    \[
        \mu(ay+b\ones)=a\,\mu(y)+b,
        \qquad
        \std(ay+b\ones)=a\,\std(y),
        \qquad
        \Tne(ay+b\ones)=\Tne(y),
    \]
    where
    \[
        \Tne(y)
        :=
        \begin{cases}
            \dfrac{y-\mu(y)\ones}{\std(y)} & \text{if } \std(y)>0, \\[6pt]
            0                              & \text{if } \std(y)=0.
        \end{cases}
    \]
\end{proposition}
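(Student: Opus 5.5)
The plan is to verify the three identities in order, since each one feeds the next. The mean identity is pure linearity. Expanding the definition gives
\(\mu(ay+b\ones)=\frac1d\sum_i(ay_i+b)=a\mu(y)+b\).
Here we use that \(\mu(\ones)=1\), which holds because every entry of \(\ones\) equals \(1\).

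For the standard deviation, we subtract the new mean from the new input:
\[
ay+b\ones-\mu(ay+b\ones)\ones = ay+b\ones-(a\mu(y)+b)\ones = a\bigl(y-\mu(y)\ones\bigr).
\]
The shift \(b\) cancels exactly, which is the key point. Taking the Euclidean norm and dividing by \(\sqrt d\) then gives \(\std(ay+b\ones)=|a|\,\std(y)=a\,\std(y)\). The sign of \(a\) matters only here: positivity of \(a\) is what removes the absolute value.

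For \(\Tne\), we split on whether \(\std(y)\) vanishes. By the standard-deviation identity with \(a>0\), we have \(\std(ay+b\ones)=0\) if and only if \(\std(y)=0\), so both inputs always fall in the same branch of the definition.
\begin{itemize}
\item In the degenerate branch, both sides equal \(0\) by the guardrail.
\item In the nondegenerate branch, we combine the centered identity above with the scaled standard deviation. This gives
\(\Tne(ay+b\ones)=\frac{a(y-\mu(y)\ones)}{a\,\std(y)}=\Tne(y)\),
cancelling \(a\neq 0\).
\end{itemize}

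There is no real obstacle in this argument. The only steps needing care are two small points: using \(a>0\) to drop the absolute value, and checking that the case split in \(\Tne\) is preserved under the affine action, so that the guardrail branch maps to itself.
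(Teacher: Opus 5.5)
Your proposal is correct and follows the paper's proof essentially step for step. You use linearity for the mean, the centered identity \(ay+b\ones-\mu(ay+b\ones)\ones=a(y-\mu(y)\ones)\) for the standard deviation, and a two-case check for \(\Tne\). The only cosmetic difference is in the degenerate case: you invoke \(\std(ay+b\ones)=a\,\std(y)\) to show both inputs land in the same branch, whereas the paper observes that a constant \(y\) gives a constant \(ay+b\ones\).
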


\begin{proof}
    The mean identity follows from linearity of summation.

    For the \(\std\), note
    \[
        ay+b\ones-\mu(ay+b\ones)\ones
        = a\bigl(y-\mu(y)\ones\bigr),
    \]
    so \(\std(ay+b\ones)=(1/\sqrt d)\|a(y-\mu(y)\ones)\|_2=a\,\std(y)\).

    For \(\Tne\):
    If \(\std(y)>0\),
    \[
        \Tne(ay+b\ones)
        = \frac{a(y-\mu(y)\ones)}{a\,\std(y)}
        = \Tne(y).
    \]
    If \(\std(y)=0\), then \(y\) is constant, hence \(ay+b\ones\) is constant, and both map to \(0\) by definition.
\end{proof}

\subsection{Normalization Equivariance of the Wrapper}%
\label{app:wrapper_ne}

\begin{proposition}[NE of the wrapper]%
    \label{prop:wrapper_is_ne}
    Let \(g:\R^d\to\R^d\) be an arbitrary map and define
    \[
        \fwnea(y)
        := \std(y)\,g\bigl(\Tne(y)\bigr)+\mu(y)\ones.
    \]
    Then \(\fwnea\) is normalization equivariant:
    \[
        \fwnea(ay+b\ones)=a \,\fwnea(y)+b\ones
        \qquad \text{for all } a>0,\ b\in\R.
    \]
\end{proposition}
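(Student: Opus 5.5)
The plan is a direct substitution using Proposition~\ref{prop:mu_sigma_affine}. Fix \(a>0\), \(b\in\R\), and \(y\in\R^d\), and write \(y':=ay+b\ones\). By Proposition~\ref{prop:mu_sigma_affine} we have \(\mu(y')=a\mu(y)+b\), \(\std(y')=a\,\std(y)\), and \(\Tne(y')=\Tne(y)\). Substituting these three identities into the definition of \(\fwnea\) gives
\[
    \fwnea(y')=a\,\std(y)\,g\bigl(\Tne(y)\bigr)+\bigl(a\mu(y)+b\bigr)\ones
    =a\bigl(\std(y)\,g(\Tne(y))+\mu(y)\ones\bigr)+b\ones
    =a\,\fwnea(y)+b\ones .
\]

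The computation does not require splitting into the cases \(\std(y)>0\) and \(\std(y)=0\). Proposition~\ref{prop:mu_sigma_affine} already handles the guardrail, since it gives \(\Tne(y')=\Tne(y)=0\) whenever \(y\) is constant. In that case both sides reduce to \(\std(\cdot)\cdot g(0)+\mu(\cdot)\ones\), and \(\std(y)=\std(y')=0\) kills the \(g(0)\) term. So the single calculation covers both cases, and I would only note this explicitly for the reader.

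No step is a genuine obstacle. The one point needing care is that \(g\) is arbitrary and is evaluated at the identical argument \(\Tne(y)=\Tne(y')\), so no property of \(g\) is needed. This is exactly why the equivariance does not depend on the backbone internals. Also, \(a>0\) is what makes \(\std(y')=a\,\std(y)\) rather than \(|a|\,\std(y)\); this positivity is already built into Proposition~\ref{prop:mu_sigma_affine}.
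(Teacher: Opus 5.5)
Your proposal is correct and is the same as the paper's proof: both substitute the three affine identities of Proposition~\ref{prop:mu_sigma_affine} into the definition of the wrapper and then regroup terms. Your remark that the constant-input case needs no separate treatment is also right, because that proposition already covers the guardrail \(\Tne(y)=0\).
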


\begin{proof}
    Using Proposition~\ref{prop:mu_sigma_affine},
    \begin{align*}
        \fwnea(ay+b\ones)
         & = \std(ay+b\ones)\,g\bigl(\Tne(ay+b\ones)\bigr)
        + \mu(ay+b\ones)\ones
        \\
         & = a \, \std(y)\,g\bigl(\Tne(y)\bigr)
        + (a\mu(y)+b)\ones
        \\
         & = a \,\fwnea(y)+b\ones.
    \end{align*}
\end{proof}

\subsection{Complete Characterization of Normalization Equivariance}%
\label{app:characterization}

\nechar*
\begin{proof}
    (\(\Leftarrow\)) Follows directly from Proposition~\ref{prop:wrapper_is_ne}.

    (\(\Rightarrow\)) Assume \(f\) is NE.\@ Define \(g:\mathcal{M}\to\R^d\) as the restriction of \(f\) to \(\mathcal{M}\), i.e., \(g(z):=f(z)\) for \(z\in\mathcal{M}\).

    If \(\std(y)=0\), then \(y=\mu(y)\ones\). Scale equivariance (the case \(b=0\) of NE) implies \(f(0)=0\). Applying NE with \(a=1\) to the input \(0\) gives
    \[
        f(y)=f(\mu(y)\ones)=f(0)+\mu(y)\ones=\mu(y)\ones.
    \]

    If \(\std(y)>0\), then by definition
    \[
        y = \std(y)\,\Tne(y)+\mu(y)\ones,
        \qquad
        \Tne(y)\in\mathcal{M}.
    \]
    Applying NE gives:
    \[
        f(y)
        = f\bigl(\std(y)\,\Tne(y)+\mu(y)\ones\bigr)
        = \std(y)\,f\bigl(\Tne(y)\bigr)+\mu(y)\ones
        = \std(y)\,g\bigl(\Tne(y)\bigr)+\mu(y)\ones.
    \]

    Uniqueness: if \(g_1,g_2\) both satisfy the factorization, evaluate at \(z\in\mathcal{M}\) (where \(\mu(z)=0\), \(\std(z)=1\)) to obtain
    \[
        f(z)=g_i(z),
    \]
    so \(g_1(z)=g_2(z)\) for all \(z\in\mathcal{M}\).
\end{proof}

\subsection{Training Objective Identity}%
\label{app:loss_identity}

\begin{proposition}[Raw space MSE equals a weighted normalized MSE]%
    \label{prop:loss_identity}
    Let \(\tilde y := \Tne(y)\) and define the matched target transform
    \[
        \Tne(x\,;\,y) :=
        \begin{cases}
            \dfrac{x-\mu(y)\ones}{\std(y)} & \text{if } \std(y)>0, \\[6pt]
            0                              & \text{if } \std(y)=0.
        \end{cases}
    \]
    Let \(\tilde x := \Tne(x;\,y)\). For the wrapped model
    \(\fwne(y)=\std(y)\,g_\theta(\tilde y)+\mu(y)\ones\), when \(\std(y)>0\),
    \[
        \bigl\| \fwne(y)-x \bigr\|_2^2
        = \std{(y)}^2\,\bigl\| g_\theta(\tilde y)-\tilde x \bigr\|_2^2.
    \]
\end{proposition}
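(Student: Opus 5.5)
The plan is to compute the residual \(\fwne(y)-x\) directly in normalized coordinates and then take squared norms; no earlier result beyond the definitions is really needed.

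Fix \(y\) with \(\std(y)>0\), so that \(\tilde y=\Tne(y)=(y-\mu(y)\ones)/\std(y)\) and \(\tilde x=(x-\mu(y)\ones)/\std(y)\). The first step is to invert the matched target transform:
\[
x=\std(y)\,\tilde x+\mu(y)\ones .
\]
This holds because \(\std(y)\neq 0\); it is the same normalize/denormalize identity used for \(y\) in the proof of Characterization~\ref{prop:ne_characterization}, only applied to \(x\) with the statistics of \(y\).

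The second step is to subtract this from the definition \(\fwne(y)=\std(y)\,g_\theta(\tilde y)+\mu(y)\ones\). The mean terms \(\mu(y)\ones\) cancel exactly, because both sides use the same statistics \(\mu(y)\) and \(\std(y)\) computed from \(y\). What remains is
\[
\fwne(y)-x=\std(y)\bigl(g_\theta(\tilde y)-\tilde x\bigr).
\]

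The last step is to take \(\|\cdot\|_2^2\) and use homogeneity of the norm, which gives the factor \(\std(y)^2\).

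The only point needing care is conceptual rather than technical. The target must be normalized with \(y\)'s statistics, not with \(x\)'s; otherwise the shift terms would not cancel. Since the statement defines \(\Tne(x;y)\) exactly this way, no genuine obstacle arises. The guardrail case \(\std(y)=0\) is excluded by hypothesis and need not be treated.
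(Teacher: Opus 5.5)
Your proof is correct and follows the paper's argument: write \(x=\std(y)\,\tilde x+\mu(y)\ones\), subtract this from \(\fwne(y)\) so that the \(\mu(y)\ones\) terms cancel, and take squared norms to get the factor \(\std(y)^2\).
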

\begin{proof}
    Given \(\std(y)>0\), we write \(x=\std(y)\tilde x+\mu(y)\ones\). Then
    \begin{align*}
        \fwne(y)-x
         & = \bigl(\std(y)g_\theta(\tilde y)+\mu(y)\ones\bigr)
        - \bigl(\std(y)\tilde x+\mu(y)\ones\bigr)
        \\
         & = \std(y)\bigl(g_\theta(\tilde y)-\tilde x\bigr).
    \end{align*}
    Taking squared norms gives the claim.
\end{proof}

\section{Objective-Level Soft NE Versus Exact Parameterization}%
\label{app:soft_ne}

Because \methodWNE{} is loss-agnostic, it can in principle be paired with either supervised or self-supervised objectives.
\citet{levac2025normalization} consider a different route: in their noisy-only diffusion-learning setting, NE is encouraged through the training objective rather than imposed by parameterization.
In Gaussian denoising, they replace ordinary SURE with an affine-augmented objective.
For single-operator inverse problems, they add this denoising term to an equivariant-imaging loss that enforces consistency with the measurement model.
The comparison to \methodWNE{} is therefore between objective-level soft enforcement and exact model-level parameterization.

Porting that objective-level route to a different regime generally requires specifying a new loss, not merely reusing the objective from~\citet{levac2025normalization} verbatim.
For example, a supervised analogue is an orbit-averaged regression objective of the form
\[
    \mathcal{L}_{\mathrm{orbit}}
    :=
    \E_{(x,y),\alpha,\mu}
    \bigl\|
    f_\theta(\alpha y + \mu\ones)
    -
    (\alpha x + \mu\ones)
    \bigr\|_2^2,
\]
which introduces design choices through the sampling law of the affine variables \((\alpha,\mu)\).
Different orbit distributions weight different regions of the affine orbit and therefore define different soft-NE objectives.
By contrast, \methodWNE{} leaves the base loss unchanged and enforces the symmetry exactly by construction around the chosen backbone.

In noisy-only settings, exact parameterization can replace affine augmentation for enforcing NE, but losses such as SURE or equivariant-imaging terms are still needed to provide the self-supervised training signal.

\paragraph{Our supervised soft-NE implementation.}
For the supervised control used here, we instantiate the orbit-averaged objective by sampling a single affine pair \((\alpha,\mu)\) independently for each training image and applying the same transform to both the clean target and its noisy observation:
\[
    x'=\alpha x+\mu\ones,
    \qquad
    y'=\alpha y+\mu\ones.
\]
We then minimize the ordinary supervised regression loss
\[
    \mathcal{L}_{\mathrm{softNE}}
    =
    \E_{(x,y),\alpha,\mu}\!\left[
        \|f_\theta(y')-x'\|_2^2
        \right].
\]
In code, this is implemented by redrawing \((\alpha,\mu)\) on each training step, broadcasting each sampled scalar pair over all pixels and channels of the image, and replacing the original pair \((x,y)\) by \((x',y')\) before evaluating the standard MSE loss.
Following \citet{levac2025normalization}, we sample \(\alpha \sim \mathrm{U}(0,1)\) and \(\mu \sim \mathrm{U}(0,1)\); unlike a commutator penalty, this introduces no extra loss weight.

\paragraph{Control experiments.}
Figure~\ref{fig:app_soft_ne_controls} reports companion PSNR/SSIM curves for two SwinIR controls.
The left column compares a multi-noise SwinIR baseline trained on \(\sigma\in[0,55]\), matching the blind DnCNN-B range used by \citet{zhang2017dncnn}, against \methodWNE{} across the same test sweep.
The right column compares the supervised soft-NE SwinIR baseline at \(\sigma_{\mathrm{train}}=10\) against \methodWNE{} under the fixed-\(\sigma_{\mathrm{train}}\) mismatch diagnostic.
Both controls improve over an unconstrained baseline trained at a single noise level, but neither matches the wrapped model's flat mismatch behavior as cleanly as exact NE.\@
Broader noise coverage or affine augmentation can improve robustness within the sampled range, but they do not imply the algebraic identity \(f(ay+b\ones)=af(y)+b\ones\) for all \(a>0\) and \(b\in\R\).
The value of \methodWNE{} is precisely this exact structural guarantee: the affine nuisance is removed analytically rather than approximated from finite orbit coverage.
This matters in downstream settings that assume exact NE as part of the algorithmic design, such as the preconditioned PnP analysis of \citet{hong2024provablepnp}.

\begin{figure}[!htbp]
    \centering
    \subfigure[\textbf{Multi-noise Baseline vs.\ \methodWNE{} (PSNR).}]{
        \includegraphics[width=0.48\textwidth]{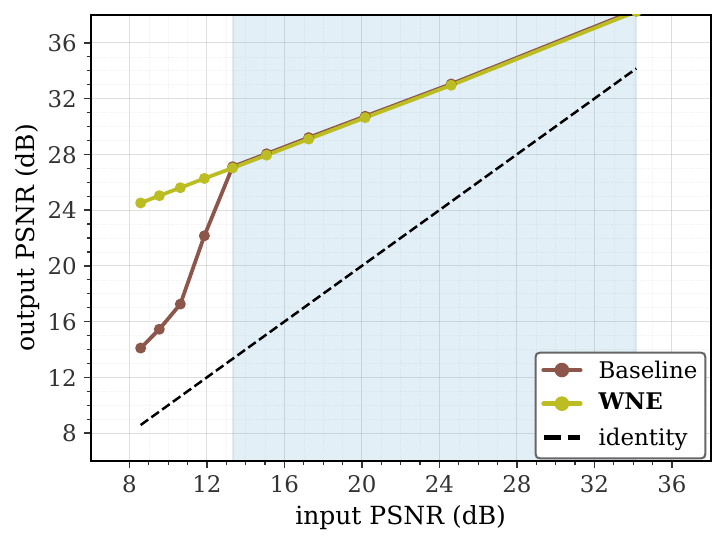}
    }\hfill
    \subfigure[\textbf{Soft-NE vs.\ \methodWNE{} (PSNR).}]{
        \includegraphics[width=0.48\textwidth]{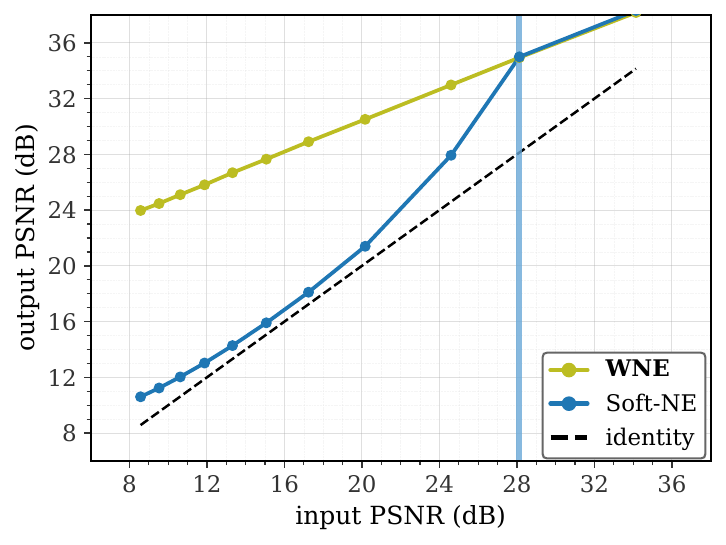}
    }\\[2pt]
    \subfigure[\textbf{Multi-noise Baseline vs.\ \methodWNE{} (SSIM).}]{
        \includegraphics[width=0.48\textwidth]{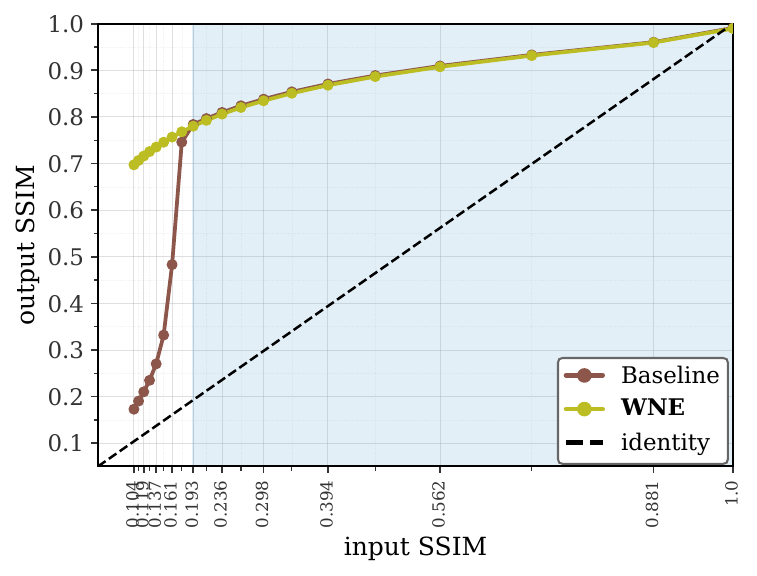}
    }\hfill
    \subfigure[\textbf{Soft-NE vs.\ \methodWNE{} (SSIM).}]{
        \includegraphics[width=0.48\textwidth]{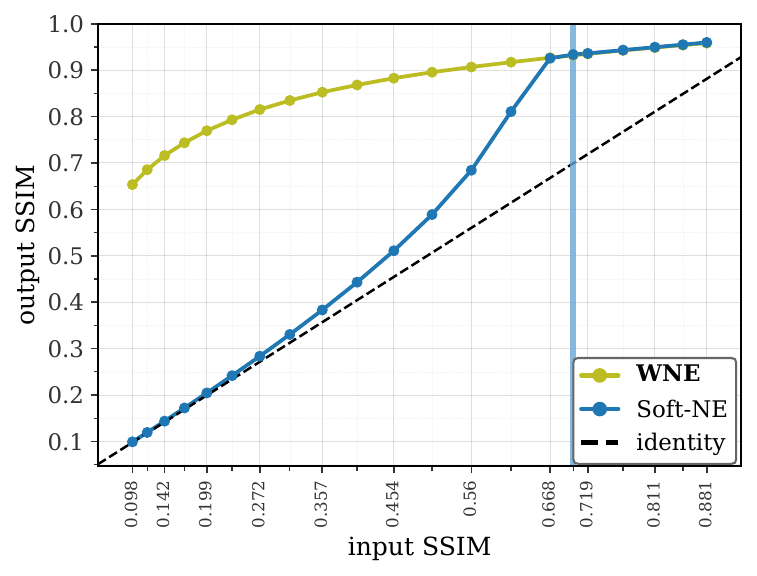}
    }
    \caption{\textbf{PSNR/SSIM controls for multi-noise and supervised soft-NE baselines on SwinIR.}
        Left column: a SwinIR baseline trained on a range of noise levels \(\sigma\in[0,55]\) against \methodWNE{} over the same test sweep.
        Right column: a supervised orbit-averaged soft-NE SwinIR baseline trained at \(\sigma_{\mathrm{train}}=10\) against \methodWNE{}.
        Top row uses output PSNR\@; bottom row uses output SSIM versus input SSIM\@.
        In both controls, broader orbit coverage narrows the performance gap, but \methodWNE{} remains the most stable under mismatch.
        Figure~\ref{fig:app_ne_violation} shows the corresponding explicit equivariance-defect curves.}%
    \label{fig:app_soft_ne_controls}
\end{figure}

\section{Equivariance-Defect Diagnostics for Multi-Noise and Soft-NE Baselines}%
\label{app:ne_violation}

To complement the PSNR/SSIM mismatch plots, we measure an explicit normalization-equivariance defect
\[
    \varepsilon_{\mathrm{NE}}
    =
    \E_{y,a,b}\!\left[
        \frac{
            \|f(ay+b\ones)-(af(y)+b\ones)\|_2
        }{
            \|af(y)+b\ones\|_2+\tau
        }
        \right],
\]
where the expectation averages over evaluation images and randomly sampled affine perturbations \(y\mapsto ay+b\ones\), with \(a\sim \mathrm{U}(0.5,1.5)\) and \(b\sim \mathrm{U}(-0.25,0.25)\).
Lower is better; exact NE gives zero, and the wrapped implementation is visibly zero at plot resolution in these experiments.
Accordingly, \(\varepsilon_{\mathrm{NE}}\) asks not whether multi-noise or soft-NE can recover some empirical robustness, but whether they recover the structural identity itself.

Figure~\ref{fig:app_ne_violation} reports two complementary SwinIR controls.
In the left panel, a baseline trained on a range of noise levels \(\sigma\in[0,55]\) has smaller defect inside the training range than outside it, but the defect remains visibly nonzero and increases beyond the range; the wrapped model stays visibly at zero across the sweep.
In the right panel, a supervised orbit-averaged soft-NE baseline reduces the defect relative to the unconstrained baseline but does not drive it to zero, whereas \methodWNE{} again stays visibly at zero.

\begin{figure}[!htbp]
    \centering
    \subfigure[\textbf{Multi-noise Baseline vs.\ \methodWNE{} (SwinIR).}]{
        \includegraphics[width=0.48\textwidth]{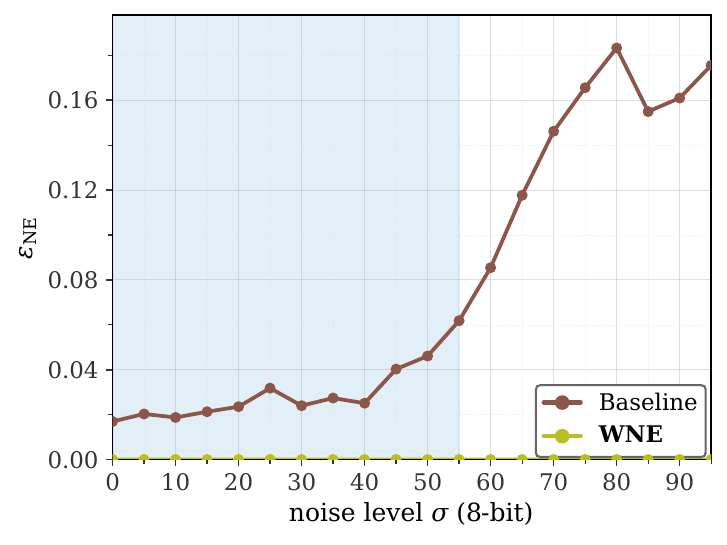}
    }\hfill
    \subfigure[\textbf{Soft-NE vs.\ \methodWNE{} (SwinIR, \(\sigma_{\mathrm{train}}=10\)).}]{
        \includegraphics[width=0.48\textwidth]{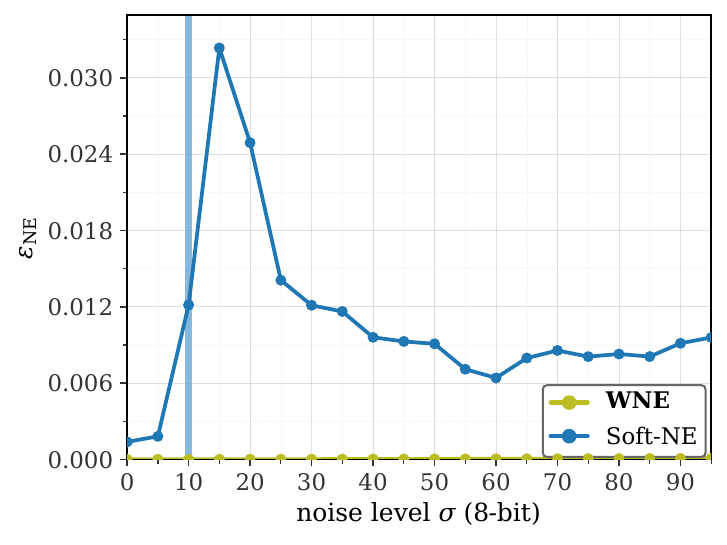}
    }
    \caption{\textbf{Explicit equivariance-defect diagnostics.}
        Left: for a multi-noise SwinIR baseline trained on \(\sigma\in[0,55]\) (shaded region), the equivariance defect \(\varepsilon_{\mathrm{NE}}\) is lower within the training range but remains visibly nonzero and increases beyond it; \methodWNE{} stays visibly at zero across the full sweep.
        Right: for a supervised orbit-averaged soft-NE baseline trained at \(\sigma_{\mathrm{train}}=10\) (vertical line), the defect is reduced but remains visibly nonzero, whereas \methodWNE{} stays visibly at zero.}%
    \label{fig:app_ne_violation}
\end{figure}

\section{Direct Prediction vs Residual Prediction Under the Wrapper}%
\label{app:direct_vs_residual}

This appendix records two equivalent implementations that yield a normalization-equivariant end-to-end predictor.

\paragraph{Direct (clean) prediction.}%
Given \(g_\theta\), define
\[
    \hat x_\theta(y)
    := \std(y)\,g_\theta\bigl(\Tne(y)\bigr)+\mu(y)\ones.
\]
By Proposition~\ref{prop:wrapper_is_ne}, \(\hat x_\theta\) is NE.\@

\paragraph{Residual (noise) prediction.}%
Let \(h_\theta\) predict the normalized residual \((y-x)/\std(y)\) in normalized coordinates and define
\[
    \hat r_\theta(y)
    :=
    \begin{cases}
        \std(y)\,h_\theta\bigl(\Tne(y)\bigr) & \text{if } \std(y)>0, \\[4pt]
        0                                    & \text{if } \std(y)=0.
    \end{cases}
\]
Output \(\hat x_\theta(y):=y-\hat r_\theta(y)\).
Using Proposition~\ref{prop:mu_sigma_affine},
\[
    \hat r_\theta(ay+b\ones)=a\,\hat r_\theta(y),
\]
so
\[
    \hat x_\theta(ay+b\ones)
    = ay+b\ones-a\hat r_\theta(y)
    = a\hat x_\theta(y)+b\ones.
\]
Thus \(\hat x_\theta\) is NE.\@

\paragraph{Conversion.}%
If \(f\) is NE, then the residual map \(r(y):=y-f(y)\) satisfies:
(i) scale equivariance \(r(ay)=a\,r(y)\) for all \(a>0\), and
(ii) shift-invariance \(r(y+b\ones)=r(y)\) for all \(b\in\R\).
Conversely, if a map \(r:\R^d\to\R^d\) satisfies (i) and (ii), then \(f(y):=y-r(y)\) is NE since for all \(a>0\) and \(b\in\R\),
\begin{align*}
    f(ay+b\ones)
    = ay+b\ones-r(ay+b\ones)
    = ay+b\ones-a\,r(y)
    = a f(y)+b\ones.
\end{align*}
This lets you match direct or residual conventions without changing the end-to-end equivariance guarantee.

\section{A Conditional Explanation for Observation~\ref{obs:delta_sufficiency}}%
\label{app:obs1_explanation}
We give a two-step explanation for the \(\Delta\)-driven collapse in Observation~\ref{obs:delta_sufficiency}.
Part~1 is an exact per-instance statement.
Part~2 lifts it to the population level under one modeling assumption.

\paragraph{Part 1: Exact per-instance \(\sigma\)-freeness.}
Recall that under AWGN,
\[
    y = x + \sigma n,
    \qquad
    n \sim \mathcal N(0,I_d),
\]
and that
\[
    \tilde y := \Tne(y),
    \qquad
    \tilde x := \frac{x-\mu(y)\ones}{\std(y)},
    \qquad
    \Delta := \|\tilde y-\tilde x\|_2.
\]
The normalized input \(\tilde y\) always lies on the normalized manifold
\(\mathcal M := \{z \in \R^d : \mu(z)=0,\ \std(z)=1\}\).

\begin{proposition}[Per-instance \(\sigma\)-freeness]\label{app:prop:per_instance_sigma_free}
    Fix a nonconstant clean instance \(x\).
    Then conditioning on \(\sigma\) is redundant given \((x,\tilde x,\Delta)\):
    \[
        p(\tilde y \mid x,\tilde x,\Delta=r,\sigma)
        =
        p(\tilde y \mid x,\tilde x,\Delta=r).
    \]
    Consequently, for any estimator \(g\),
    \[
        \E\!\left[\|g(\tilde y)-\tilde x\|_2^2 \mid x,\tilde x,\Delta=r,\sigma\right]
        =
        \E\!\left[\|g(\tilde y)-\tilde x\|_2^2 \mid x,\tilde x,\Delta=r\right].
    \]
\end{proposition}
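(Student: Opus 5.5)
The plan is a change of variables plus a Gaussian-density computation. Once $x$ and $\tilde x$ are fixed, $\sigma$ enters the conditional law of $\tilde y$ only through a factor that is constant on the conditioning set $\{\Delta=r\}$.

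\emph{Step 1: $\tilde x$ pins down the statistics of $y$.} Fix a nonconstant $x$, so $\std(x)>0$. From $\tilde x=(x-\mu(y)\ones)/\std(y)$ we get $\tilde x-\mu(\tilde x)\ones=(x-\mu(x)\ones)/\std(y)$. Hence
\[
\std(y)=\std(x)/\std(\tilde x),\qquad \mu(y)=\mu(x)-\std(y)\,\mu(\tilde x).
\]
So given $(x,\tilde x)$, the pair $(\mu(y),\std(y))=:(m,s)$ is a deterministic function, and in particular does not depend on $\sigma$. The event $\std(y)=0$ has probability zero under AWGN and can be discarded.

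\emph{Step 2: reparameterize $y$.} On $\{\std(y)>0\}$ the map $y\mapsto(m,s,\tilde y)\in\R\times(0,\infty)\times\mathcal M$ is a diffeomorphism, with inverse $y=s\tilde y+m\ones$ (as in the proof of Characterization~\ref{prop:ne_characterization}). Lebesgue measure pulls back to $c_d\,s^{d-2}\,dm\,ds\,d\mathcal H_{\mathcal M}(\tilde y)$, where $\mathcal H_{\mathcal M}$ is the surface measure on the sphere $\mathcal M$ and $c_d$ is a constant. This is ordinary polar coordinates in $\mathrm{Span}(\ones)^\perp$ together with a line along $\ones$; the key point is that $\sigma$ does not appear in the Jacobian. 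Using the Gaussian likelihood $p(y\mid x,\sigma)\propto\sigma^{-d}\exp(-\|y-x\|_2^2/2\sigma^2)$ and the identity $y-x=s(\tilde y-\tilde x)$ (Proposition~\ref{prop:loss_identity}-style), the joint density of $(m,s,\tilde y)$ given $x,\sigma$ is
\[
\propto \sigma^{-d}\,s^{d-2}\exp\!\Bigl(-\tfrac{s^2\|\tilde y-\tilde x\|_2^2}{2\sigma^2}\Bigr),\qquad \tilde x=\tilde x(x,m,s).
\]

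\emph{Step 3: condition.} Fixing $\tilde x$ amounts to fixing $(m,s)$ by Step 1. Further conditioning on $\Delta=\|\tilde y-\tilde x\|_2=r$ restricts $\tilde y$ to the slice $\mathcal M\cap\{\|\tilde y-\tilde x\|_2=r\}$. On that slice the whole $\sigma$-dependent factor $\sigma^{-d}\exp(-s^2r^2/2\sigma^2)$ is constant, so it cancels in the normalization. The conditional law of $\tilde y$ is therefore the normalized surface measure on the slice, which is a lower-dimensional sphere (the intersection of the sphere $\mathcal M$ with a sphere centered at $\tilde x$). This gives the first identity. The risk identity follows by integrating $\|g(\tilde y)-\tilde x\|_2^2$ against the same conditional law.

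The main obstacle is making the conditioning on the null events $\{\tilde x=\cdot\}$ and $\{\Delta=r\}$ rigorous. I would handle it by disintegration with respect to the smooth map $(m,s,\tilde y)\mapsto(\tilde x,\Delta)$ via the coarea formula. The argument then needs only that the coarea Jacobian involves the geometry of $\mathcal M$ and $(m,s)$ but never $\sigma$, and this holds because $\sigma$ appears only in the likelihood factor. Degenerate slices, such as $r$ at the extremes where the slice collapses to a point, have measure zero in $r$ and can be ignored.
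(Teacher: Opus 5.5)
Your proposal is correct and follows essentially the same route as the paper. Both arguments recover $(\mu(y),\std(y))$ from $(x,\tilde x)$, substitute $y-x=\std(y)(\tilde y-\tilde x)$ into the Gaussian likelihood, and observe that the only $\sigma$-dependent factor is constant on $\{\Delta=r\}$. Your explicit $s^{d-2}$ Jacobian and coarea/disintegration remark make rigorous what the paper leaves implicit. For the stronger ``normalized surface measure'' identification you also need $|\nabla_{\mathcal M}\Delta|$ to be constant on each slice, which holds by rotational symmetry of $\mathcal M$ about the projection of $\tilde x$ onto $\mathrm{Span}(\ones)^\perp$.
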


\begin{proof}
    Write \(a:=\std(y)\) and \(m:=\mu(y)\).
    Since \(\tilde x=(x-m\ones)/a\), fixing \(x\) and the realized \(\tilde x\) determines \(a\) and \(m\):
    \[
        a = \frac{\std(x)}{\std(\tilde x)},
        \qquad
        m = \mu(x)-a\,\mu(\tilde x).
    \]
    Hence \(x=a\tilde x+m\ones\) and \(y=a\tilde y+m\ones\).
    Under AWGN,
    \[
        p(y \mid x,\sigma)
        \propto
        \exp\!\left(-\frac{\|y-x\|_2^2}{2\sigma^2}\right).
    \]
    Substituting \(y=a\tilde y+m\ones\) and \(x=a\tilde x+m\ones\) gives
    \[
        p(\tilde y \mid x,\tilde x,\sigma)
        \propto
        \exp\!\left(-\frac{a^2\|\tilde y-\tilde x\|_2^2}{2\sigma^2}\right)
        \mathbf 1\{\tilde y \in \mathcal M\},
    \]
    with respect to the natural surface measure on \(\mathcal M\).
    Conditioning further on \(\Delta=\|\tilde y-\tilde x\|_2=r\) makes the exponential factor constant on
    \[
        \mathcal S_r(x,\tilde x):=\{z \in \mathcal M : \|z-\tilde x\|_2=r\}.
    \]
    Therefore the conditional law of \(\tilde y\) given \((x,\tilde x,\Delta=r,\sigma)\) is the normalized surface measure on \(\mathcal S_r(x,\tilde x)\), which does not depend on \(\sigma\).
\end{proof}

\paragraph{High-dimensional simplification.}
Let
\[
    \sigma_x := \std(x),
    \qquad
    \hat x := \frac{x-\mu(x)\ones}{\|x-\mu(x)\ones\|_2}.
\]
In high dimension,
\[
    \mu(y) \approx \mu(x),
    \qquad
    {\std(y)}^2 \approx \sigma_x^2+\sigma^2,
    \qquad
    \Delta^2 \approx \frac{d\,\sigma^2}{\sigma_x^2+\sigma^2}.
\]
Eliminating \(\sigma\) gives \({\std(y)}^2 \approx d\,\sigma_x^2/(d-\Delta^2)\), and therefore
\[
    \tilde x
    =
    \frac{x-\mu(y)\ones}{\std(y)}
    \approx
    \sqrt{d-\Delta^2}\,\hat x.
\]
Thus, conditional on \((x,\Delta=r)\), the matched normalized target \(\tilde x\) is approximately determined, and Proposition~\ref{app:prop:per_instance_sigma_free} yields the approximate per-instance collapse
\[
    \E\!\left[\|g(\tilde y)-\tilde x\|_2^2 \mid x,\Delta=r,\sigma\right]
    \approx
    \phi_g(\hat x,r).
\]

\paragraph{Part 2: Population-level collapse under one assumption.}
To pass from the per-instance statement to the population-level \(\Delta\)-collapse, we must average over the instance distribution and account for how conditioning on \((\Delta,\sigma)\) changes which clean instances are selected.

\begin{assumption}[Contrast-independence of normalized instance structure]\label{app:ass:contrast_independence}
    The normalized instance direction
    \[
        \hat x := \frac{x-\mu(x)\ones}{\|x-\mu(x)\ones\|_2}
    \]
    is approximately independent of the empirical contrast \(\sigma_x := \std(x)\), in the sense that the conditional law of \(\hat x\) given \(\sigma_x=s\) varies only weakly with \(s\) over the range relevant to the analysis.
\end{assumption}

\noindent
Assumption~\ref{app:ass:contrast_independence} is a modeling assumption, not a closed-form consequence of AWGN\@.
It requires that conditioning on empirical contrast does not substantially change the distribution of normalized structure.
This is motivated by the image formation process: variation in empirical contrast across natural images reflects both scene content and acquisition conditions such as exposure, gain, and illumination.
Standard brightness/contrast augmentations implicitly rely on a similar independence.

\begin{proposition}[Population-level \(\Delta\)-collapse]\label{app:prop:population_delta_collapse}
    Under Assumption~\ref{app:ass:contrast_independence}, in the high-dimensional regime, for any estimator \(g : \mathcal M \to \R^d\),
    \[
        \E\!\left[\|g(\tilde y)-\tilde x\|_2^2 \mid \Delta=r,\sigma\right]
        \approx
        \psi_g(r),
    \]
    for some function \(\psi_g\) of \(r\) alone.
\end{proposition}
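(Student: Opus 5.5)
We will write the target conditional expectation as an average of the per-instance quantity from Part~1 over the posterior law of the clean instance given $(\Delta=r,\sigma)$:
\[
\E\!\left[\|g(\tilde y)-\tilde x\|_2^2 \mid \Delta=r,\sigma\right]
=
\int \E\!\left[\|g(\tilde y)-\tilde x\|_2^2 \mid x,\Delta=r,\sigma\right]\, p(x\mid \Delta=r,\sigma)\,dx .
\]
By the high-dimensional simplification after Proposition~\ref{app:prop:per_instance_sigma_free}, the integrand is approximately $\phi_g(\hat x,r)$. It depends on $x$ only through its normalized direction $\hat x$; the mean $\mu(x)$ and the contrast $\sigma_x$ drop out. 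The problem therefore reduces to showing that the conditional law of $\hat x$ given $(\Delta=r,\sigma)$ is approximately independent of $\sigma$. Once that holds,
\[
\psi_g(r):=\int \phi_g(\hat x,r)\,p(\hat x)\,d\hat x
\]
is the required function.

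To control $p(\hat x\mid\Delta=r,\sigma)$, we parameterize each clean instance by $(\mu(x),\sigma_x,\hat x)$. In the high-dimensional regime, $\Delta^2$ concentrates at $d\sigma^2/(\sigma_x^2+\sigma^2)$. Its fluctuations come from $\|n\|_2^2$ and the cross-term, and their law depends only on $d$, the ratio $\sigma_x/\sigma$, and $\hat x$ through the cross-term $\langle \hat x,n\rangle$. That cross-term is rotation-invariant Gaussian, so its distribution does not depend on $\hat x$. Consequently, given $\sigma$, the event $\{\Delta=r\}$ is (approximately) an event about $\sigma_x$ alone: it selects contrasts near $s_\sigma(r)=\sigma\sqrt{(d-r^2)}/r$, and this selection is driven by a likelihood $p(\Delta=r\mid\sigma_x,\sigma)$ that does not involve $\hat x$. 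Bayes' rule then gives
\[
p(\hat x\mid \Delta=r,\sigma)=\int p(\hat x\mid \sigma_x=s)\,p(s\mid \Delta=r,\sigma)\,ds .
\]

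At this point we invoke Assumption~\ref{app:ass:contrast_independence}. Since $p(\hat x\mid\sigma_x=s)\approx p(\hat x)$ over the relevant range of $s$, the mixture collapses to $p(\hat x)$ for every $\sigma$, whatever the contrast posterior $p(s\mid\Delta=r,\sigma)$ looks like. Substituting back into the first display yields $\psi_g(r)$ with no residual $\sigma$-dependence, up to the two approximation errors: the concentration error in $\tilde x\approx\sqrt{d-r^2}\,\hat x$, and the weak dependence allowed by the assumption.

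The main obstacle is the second step, namely justifying that conditioning on $\Delta$ selects only on $\sigma_x$ and not on $\hat x$. Two things are needed. First, we must use the exact rotation-invariance of $n$ on $\mathrm{Span}(\ones)^\perp$ to show that the joint law of $(\|n_\perp\|,\langle\hat x,n\rangle)$ is independent of $\hat x$, so that $\Delta$ given $(\sigma_x,\sigma)$ is exactly independent of $\hat x$. Second, we must verify that the $\mu(y)$ fluctuation of order $\sigma/\sqrt d$ is negligible. The natural route is to show exact independence first and then apply concentration only to approximate $\tilde x$; this keeps the approximation confined to a single, clearly quantified place.
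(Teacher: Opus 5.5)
Your proposal is correct and follows the paper's proof. Both arguments use the tower property over the clean instance, approximate the inner expectation by $\phi_g(\hat x,r)$, observe that fixing $(\Delta=r,\sigma)$ selects instances only through their contrast $\sigma_x$, and then invoke Assumption~\ref{app:ass:contrast_independence} to remove the $\sigma$-dependence of the law of $\hat x$. Your rotation-invariance argument goes further than the paper: it shows that the law of $\Delta$ given $(x,\sigma)$ depends on $x$ only through $\sigma_x/\sigma$, so the mixture $p(\hat x\mid\Delta=r,\sigma)=\int p(\hat x\mid\sigma_x=s)\,p(s\mid\Delta=r,\sigma)\,ds$ holds exactly, whereas the paper argues this step only informally via $\Delta^2\approx d\sigma^2/(\sigma_x^2+\sigma^2)$.
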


\begin{proof}
    Apply the tower property:
    \[
        \E\!\left[\|g(\tilde y)-\tilde x\|_2^2 \mid \Delta=r,\sigma\right]
        =
        \E\!\left[
            \E\!\left[\|g(\tilde y)-\tilde x\|_2^2 \mid x,\Delta=r,\sigma\right]
            \Bigm| \Delta=r,\sigma
            \right].
    \]
    By the high-dimensional simplification above, the inner conditional expectation is approximately \(\phi_g(\hat x,r)\).
    The relation \(\Delta^2 \approx d\sigma^2/(\sigma_x^2+\sigma^2)\) implies that at fixed \((\Delta=r,\sigma)\), conditioning selects a narrow band of contrasts \(\sigma_x\).
    Under Assumption~\ref{app:ass:contrast_independence}, selecting instances through this contrast variable does not substantially change the conditional distribution of \(\hat x\).
    Therefore the conditional expectation depends only weakly on \(\sigma\) and can be approximated by a function of \(r\) alone.
\end{proof}

\paragraph{Connection to Observation~\ref{obs:delta_sufficiency}.}
Proposition~\ref{app:prop:population_delta_collapse} should be read as a conditional explanation of Observation~\ref{obs:delta_sufficiency}: the \(\Delta\)-collapse follows exactly at the per-instance level and lifts approximately to the population level under high-dimensional concentration together with Assumption~\ref{app:ass:contrast_independence}.
The residual \(\sigma\)-spread visible in Figure~\ref{fig:ne_psnr_vs_delta} can arise from finite-dimensional concentration error and residual dependence of normalized structure on contrast.

\section{FDnCNN-Family Comparison}%
\label{app:fdncnn_controlled}
We complement the main-text DnCNN results with an FDnCNN-family comparison in which all four variants share the same FDnCNN base architecture.
This brackets one important confound in the main-text comparison: standard DnCNN is used for \methodBaseline{} and \methodWNE{}, whereas the architectural SE/NE references are built on FDnCNN\@.
It removes that standard-DnCNN-versus-FDnCNN mismatch while preserving the equivariance-enforcing modifications of the architectural baselines, especially the stronger internal replacements used by \methodNEarch{}.

\paragraph{Protocol.}
We follow the same fixed-\(\sigma_{\mathrm{train}}\) mismatch diagnostic as in Section~\ref{sec:experiments}: for each \(\sigma_{\mathrm{train}}\in\{10,25,50\}\), models are trained at a single noise level and evaluated on Set12 across varying \(\sigma_{\mathrm{test}}\), plotting output PSNR versus input PSNR\@.

\paragraph{Results.}
Figure~\ref{fig:fdncnn_psnr} and Table~\ref{tab:psnr_matched_fdncnn} show that \methodWNE{}-FDnCNN is competitive with \methodNEarch{} across all three noise levels and that the mismatch curves follow the same qualitative trend as in Figure~\ref{fig:dncnn_psnr}.
The comparison does not collapse these methods into the same model class; rather, it shows that after bracketing the standard-DnCNN-versus-FDnCNN mismatch, \methodWNE{}-FDnCNN still follows a robustness trend close to \methodNEarch{}.

\begin{figure}[!htbp]
    \centering
    \subfigure[\(\sigma_{\mathrm{train}} = 10\)]{
        \includegraphics[width=0.3\textwidth]{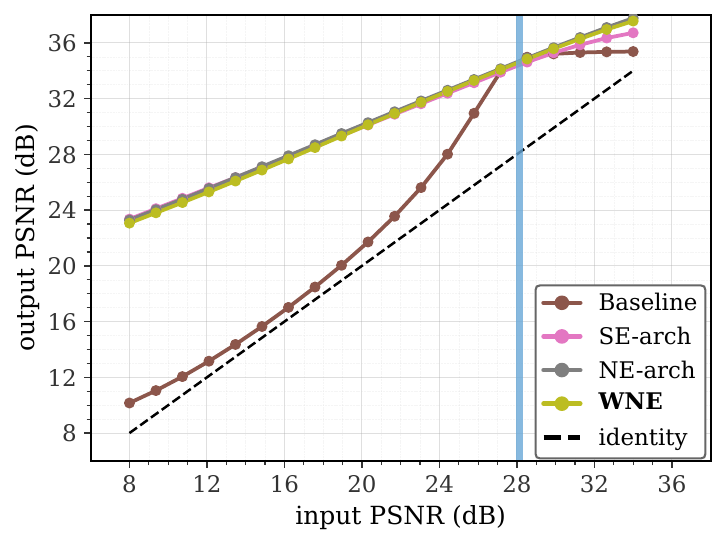}
    }\hfill
    \subfigure[\(\sigma_{\mathrm{train}} = 25\)]{
        \includegraphics[width=0.3\textwidth]{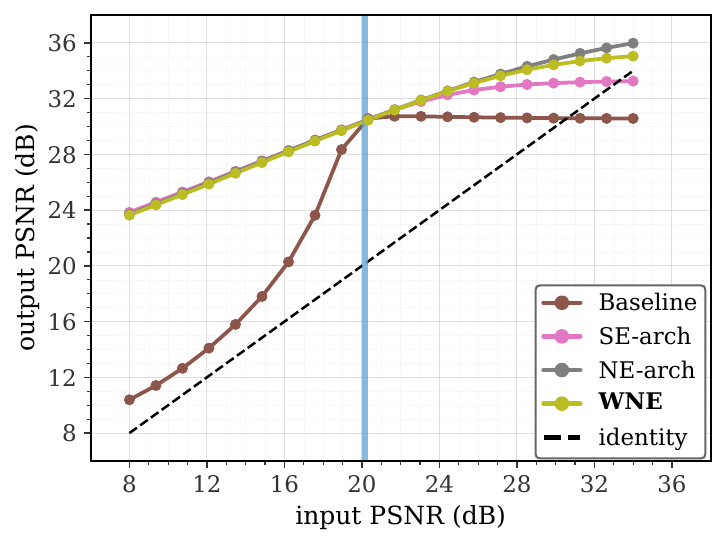}
    }\hfill
    \subfigure[\(\sigma_{\mathrm{train}} = 50\)]{
        \includegraphics[width=0.3\textwidth]{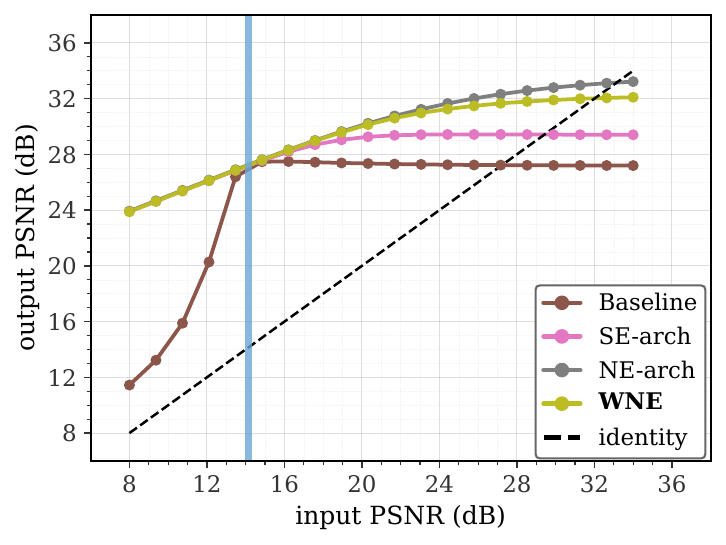}
    }
    \caption{\textbf{FDnCNN-family comparison under noise-level mismatch.}
        Output PSNR versus input PSNR on Set12 for models trained at a single \(\sigma_{\mathrm{train}}\) and tested at varying \(\sigma_{\mathrm{test}}\).
        All four variants share the same FDnCNN base architecture.
        \methodWNE{} closely matches the architectural normalization-equivariant reference \methodNEarch{} and improves over \methodSEarch{} and the unwrapped baseline.
        Dashed line: no denoising (\(\hat x=y\)).}%
    \label{fig:fdncnn_psnr}
\end{figure}

\begin{table}[!htbp]
    \centering
    \setlength{\belowcaptionskip}{3pt}
    \caption{\textbf{Matched-noise PSNR (dB) on the FDnCNN backbone.}
        Average PSNR on Set12 and BSD68 for AWGN with \(\sigma\in\{10,25,50\}\) (8-bit units).
        All four variants share the same FDnCNN base architecture.
        \emph{\methodSEarch{}} and \emph{\methodNEarch{}} modify internal layers to enforce equivariance, while \emph{\methodWNE{}} wraps the unmodified backbone.}%
    \label{tab:psnr_matched_fdncnn}
    \small
    \setlength{\tabcolsep}{3.5pt}
    \renewcommand{\arraystretch}{1.05}
    \begin{tabular}{lcccccc}
        \toprule
                                            & \multicolumn{3}{c}{Set12} & \multicolumn{3}{c}{BSD68}                                 \\
        Noise level \(\sigma\) (8-bit)      & 10                        & 25                        & 50    & 10    & 25    & 50    \\
        \midrule
        \emph{\methodBaseline{}}            & 34.82                     & 30.51                     & 27.29 & 33.87 & 29.21 & 26.25 \\
        \emph{\methodSEarch{}}              & 34.41                     & 30.42                     & 27.20 & 33.51 & 29.12 & 26.12 \\
        \emph{\methodNEarch{}}              & 34.69                     & 30.42                     & 27.26 & 33.77 & 29.13 & 26.21 \\
        \textbf{\emph{\methodWNE{}}} (ours) & 34.62                     & 30.38                     & 27.23 & 33.72 & 29.11 & 26.19 \\
        \bottomrule
    \end{tabular}
\end{table}

\section{Normalized-Space Error Versus Difficulty (Baseline vs WNE)}%
\label{app:norm_error_vs_difficulty}

In this appendix section, we extend the normalized-coordinate analysis of Section~\ref{sec:analysis} by comparing the unmodified \methodBaseline{} to the wrapped \methodWNE{} across multiple training noise levels.
Our aim is to test two related hypotheses: (i) whether the normalized regression error is primarily controlled by the difficulty variable \(\Delta=\|\tilde y-\tilde x\|_2\) rather than by the test noise level \(\sigma_{\mathrm{test}}\), and (ii) whether enforcing input-output normalization equivariance strengthens this difficulty-driven behavior.
All figures in this section use SwinIR as the backbone.

\paragraph{Normalized coordinates and difficulty.}
Given a clean target \(x\in\R^d\) and noisy observation \(y=x+\sigma n\), define
\begin{align*}
    \tilde y & := \Tne(y),                                        &
    \tilde x & := \Tne(x\,;\,y) := \frac{x-\mu(y)\ones}{\std(y)},
    \\
    \delta   & := \tilde y-\tilde x,                              &
    \Delta   & := \|\delta\|_2.
\end{align*}
We interpret \(\Delta\) as a one-dimensional index of denoising difficulty in normalized coordinates (Section~\ref{subsec:problem_difficulty}).

\paragraph{A model-agnostic normalized error.}
To compare \methodBaseline{} and \methodWNE{} in the same coordinate system, we re-express any model output \(\hat x=f(y)\) using the instance statistics of the input \(y\):
\[
    \tilde{\hat x}
    :=
    \frac{\hat x-\mu(y)\ones}{\std(y)}.
\]
This does not assume that \methodBaseline{} is NE;\@
it is a diagnostic that expresses any predictor in the \((\mu(y),\std(y))\)-anchored coordinates used by NE.\@
We then measure normalized-space regression quality via
\[
    Q_f(\tilde y,\tilde x)
    :=
    -10\log_{10}\big\|\tilde{\hat x}-\tilde x\big\|_2^2,
\]
where larger values indicate smaller normalized squared error.
For \methodWNE{}, we have \(\tilde{\hat x}=g_\theta(\tilde y)\) by construction, so \(Q_f\) coincides with the backbone error in normalized coordinates used in Section~\ref{sec:analysis}.
For \methodBaseline{}, \(Q_f\) evaluates how well the raw-space prediction aligns with the normalized target \(\tilde x\) after mapping the output into the \((\mu(y),\std(y))\)-anchored coordinates.

\paragraph{Binning and summary curves.}
For each \(\sigma_{\mathrm{test}}\), we bin samples by \(\Delta\) (fixed-width bins) and plot the empirical mean of \(Q_f(\tilde y,\tilde x)\) in each bin, with \(\pm 1\) standard deviation shading.
All panels share identical axis limits to make cross-\(\sigma_{\mathrm{test}}\) comparisons visually consistent.

\paragraph{Displayed difficulty range (per-curve central mass).}
In each panel, each \(\sigma_{\mathrm{test}}\) curve is displayed only over the central \(95\%\) mass of its own test-time \(\Delta\) distribution, that is,
\[
    \Delta \in \bigl[q_{0.025}(\sigma_{\mathrm{test}}),\; q_{0.975}(\sigma_{\mathrm{test}})\bigr].
\]
We do not interpret tail behavior from these plots.

\paragraph{Results.}
Figure~\ref{fig:app_q_vs_delta_grid} reports six plots arranged by training noise level (rows) and model variant (columns).
Across \(\sigma_{\mathrm{test}}\), \methodWNE{} exhibits an approximate collapse of \(Q_f\) versus \(\Delta\) within the bulk of each curve, indicating that once difficulty is controlled by \(\Delta\), the residual dependence of normalized regression error on \(\sigma_{\mathrm{test}}\) is small.
In contrast, \methodBaseline{} shows a stronger dependence on \(\sigma_{\mathrm{test}}\) at fixed \(\Delta\), suggesting that without explicit input-output normalization equivariance, normalized-coordinate behavior is less purely difficulty-driven.
Together with the train-test overlap in \(\Delta\) (Table~\ref{tab:ne_delta_coverage}), these results support the interpretation that improved cross-noise robustness under \methodWNE{} arises from reduced distribution shift in normalized coordinates and a backbone response that is closer to a difficulty-indexed regression.
\begin{figure}[!htbp]
    \centering

    \subfigure[\(\sigma_{\mathrm{train}}=10\), \methodWNE]{
        \includegraphics[width=0.45\textwidth]{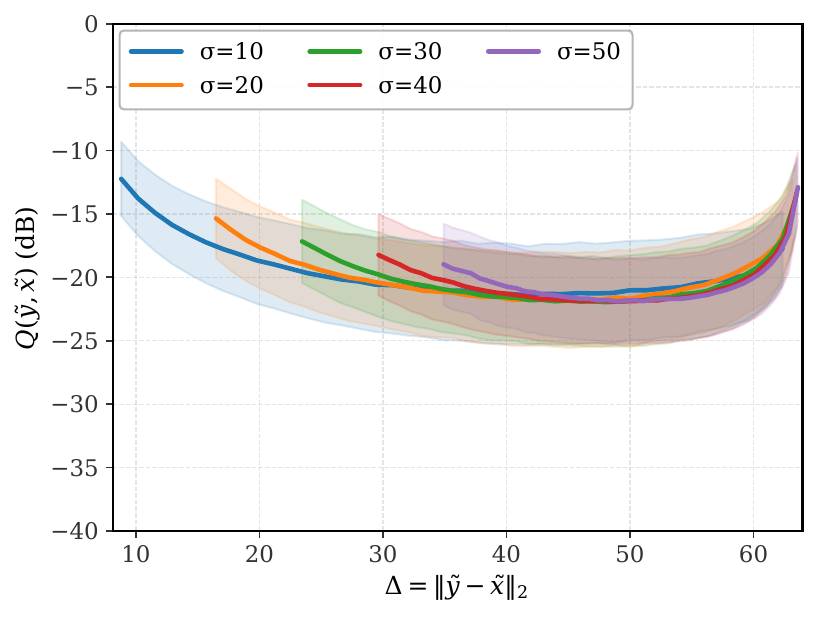}
    }
    \subfigure[\(\sigma_{\mathrm{train}}=10\), \methodBaseline]{
        \includegraphics[width=0.45\textwidth]{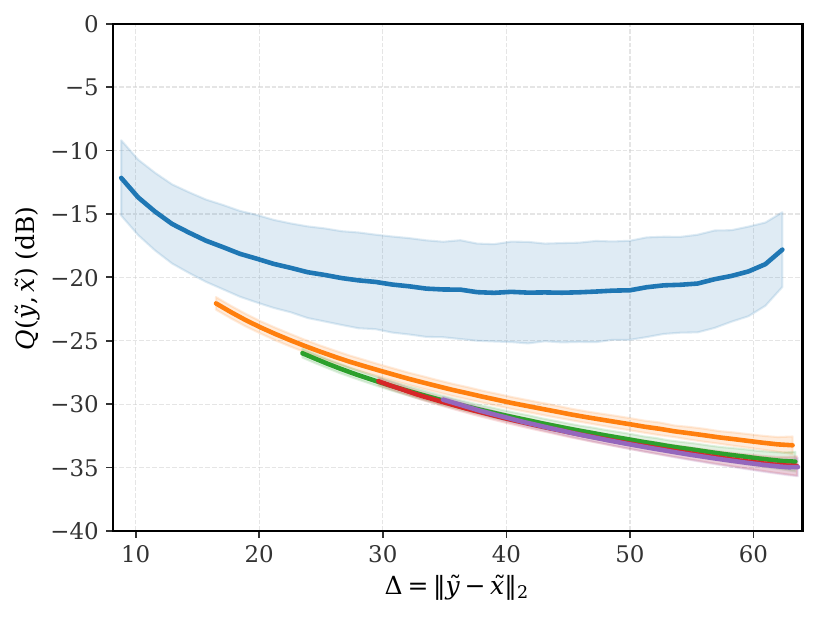}
    }\\[2pt]

    \subfigure[\(\sigma_{\mathrm{train}}=25\), \methodWNE{}]{
        \includegraphics[width=0.45\textwidth]{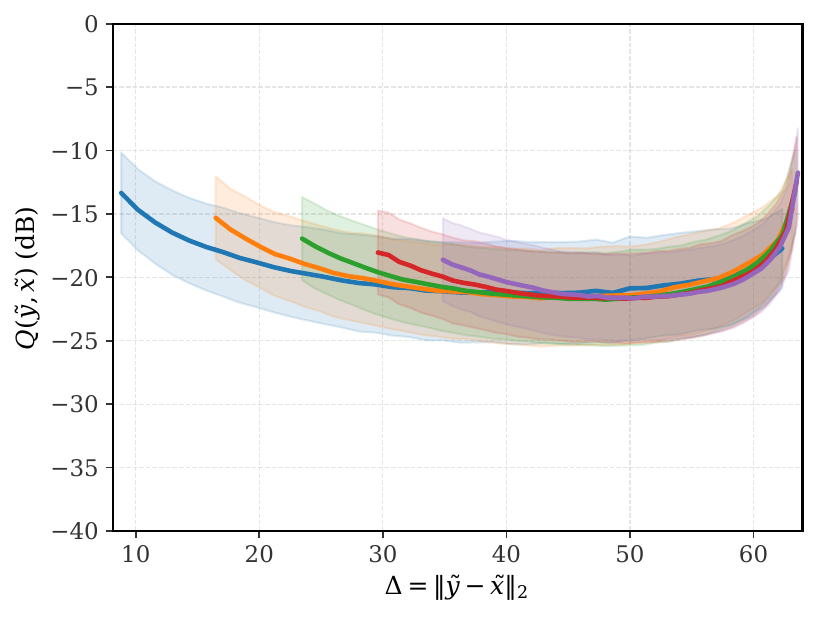}
    }
    \subfigure[\(\sigma_{\mathrm{train}}=25\), \methodBaseline]{
        \includegraphics[width=0.45\textwidth]{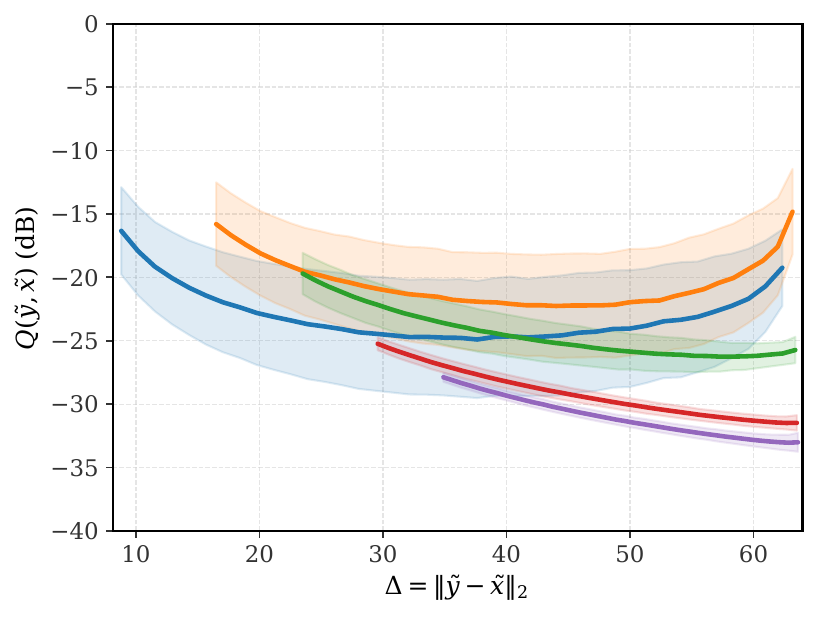}
    }\\[2pt]

    \subfigure[\(\sigma_{\mathrm{train}}=50\), \methodWNE{}]{
        \includegraphics[width=0.45\textwidth]{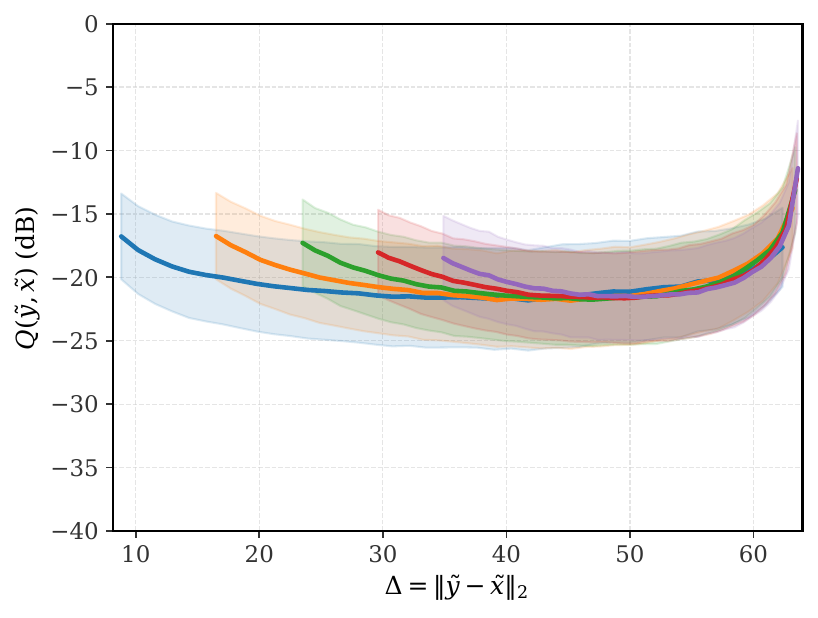}
    }
    \subfigure[\(\sigma_{\mathrm{train}}=50\), \methodBaseline]{
        \includegraphics[width=0.45\textwidth]{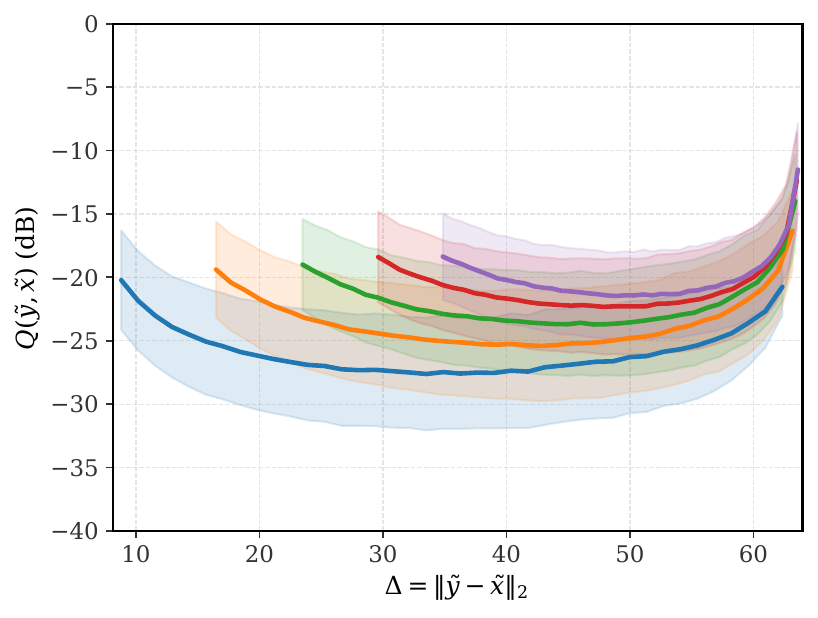}
    }

    \caption{\textbf{Normalized-space error versus difficulty (Baseline vs WNE).}
    For each \(\sigma_{\mathrm{train}}\in\{10,25,50\}\) (rows), we plot the normalized regression quality
    \(Q_f(\tilde y,\tilde x)\) versus difficulty \(\Delta=\|\tilde y-\tilde x\|_2\) for multiple test noise levels \(\sigma_{\mathrm{test}}\),
    comparing \methodWNE{} (left) to \methodBaseline{} (right).
    For readability, each \(\sigma_{\mathrm{test}}\) curve is shown only over the central \(95\%\) mass of its own test-time \(\Delta\) distribution,
    \(\Delta\in[q_{0.025}(\sigma_{\mathrm{test}}),q_{0.975}(\sigma_{\mathrm{test}})]\),
    and bins outside this range are omitted.}%
    \label{fig:app_q_vs_delta_grid}
\end{figure}

\section{SSIM Under Noise-Level Mismatch}%
\label{app:ssim}
We report the Structural Similarity Index (SSIM)~\citep{wang2004image} alongside PSNR for the main DnCNN and SwinIR mismatch experiments.
The same fixed-\(\sigma_{\mathrm{train}}\) diagnostic is used as in Section~\ref{sec:experiments}, but output quality is measured with SSIM instead of PSNR\@.

\begin{figure}[!htbp]
    \centering
    \subfigure[\(\sigma_{\mathrm{train}} = 10\)]{
        \includegraphics[width=0.3\textwidth]{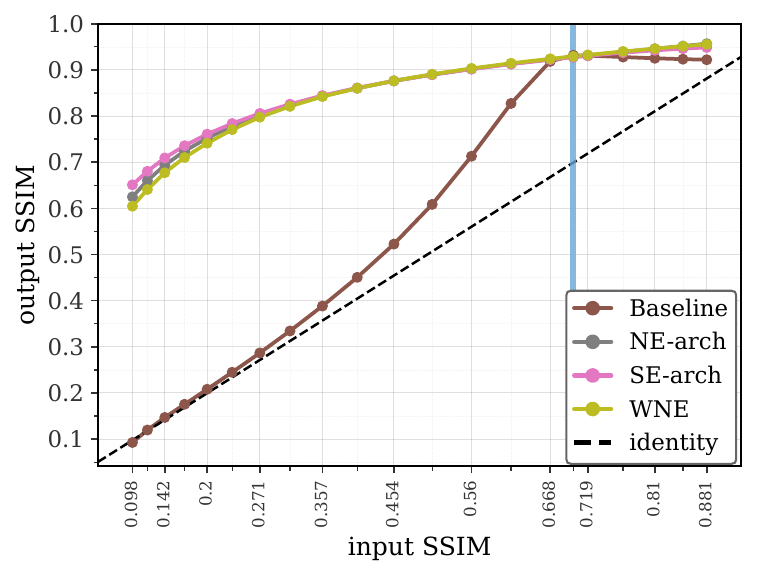}
    }\hfill
    \subfigure[\(\sigma_{\mathrm{train}} = 25\)]{
        \includegraphics[width=0.3\textwidth]{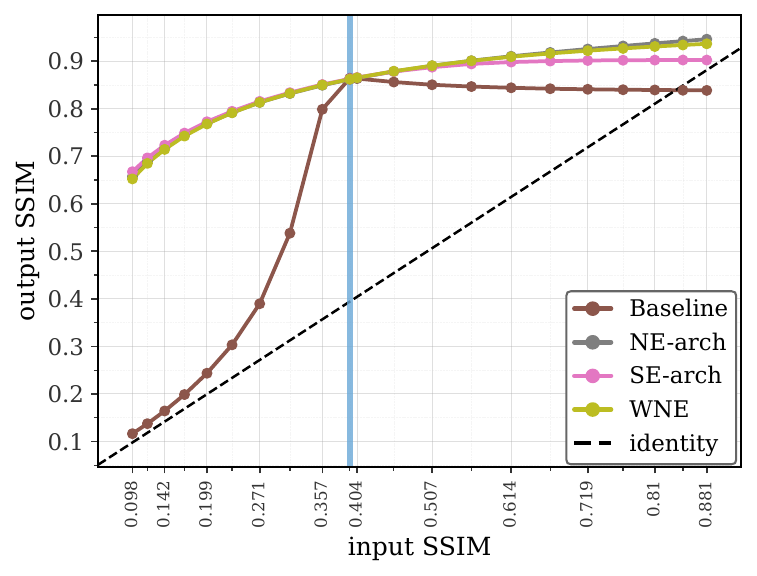}
    }\hfill
    \subfigure[\(\sigma_{\mathrm{train}} = 50\)]{
        \includegraphics[width=0.3\textwidth]{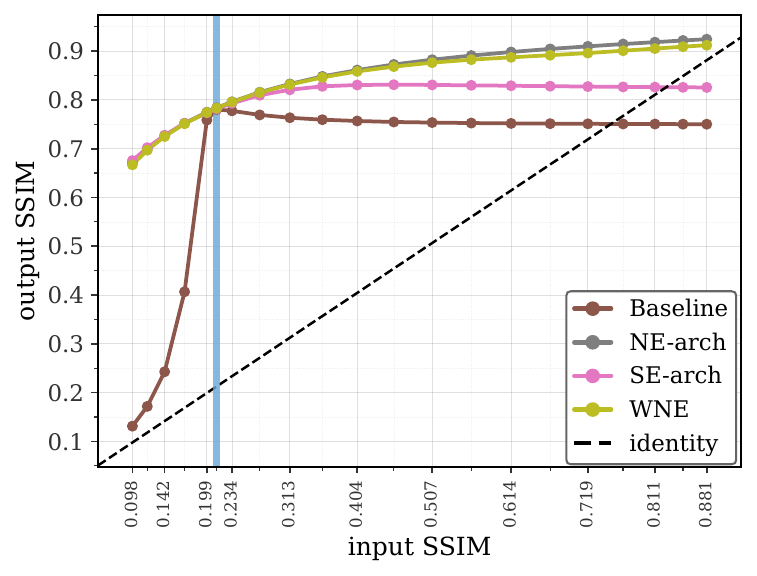}
    }
    \caption{\textbf{SSIM under noise-level mismatch for DnCNN.}
        Same mismatch diagnostic as Figure~\ref{fig:dncnn_psnr}, but measured with SSIM instead of PSNR\@.
        Across all three training noise levels, \methodWNE{} preserves the robustness trend of normalization-equivariant models under train-test mismatch, while the \methodBaseline{} remains noise-level specific.}%
    \label{fig:dncnn_ssim}
\end{figure}

\begin{figure}[!htbp]
    \centering
    \subfigure[\(\sigma_{\mathrm{train}} = 10\)]{
        \includegraphics[width=0.3\textwidth]{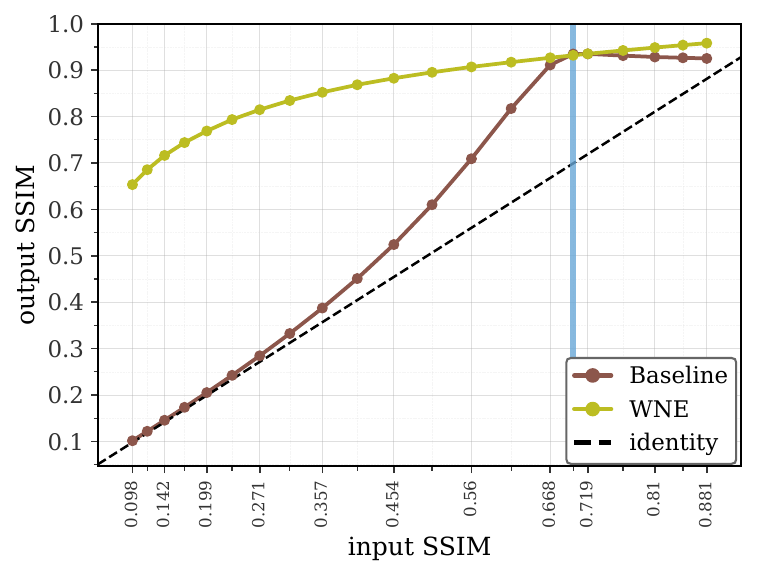}
    }\hfill
    \subfigure[\(\sigma_{\mathrm{train}} = 25\)]{
        \includegraphics[width=0.3\textwidth]{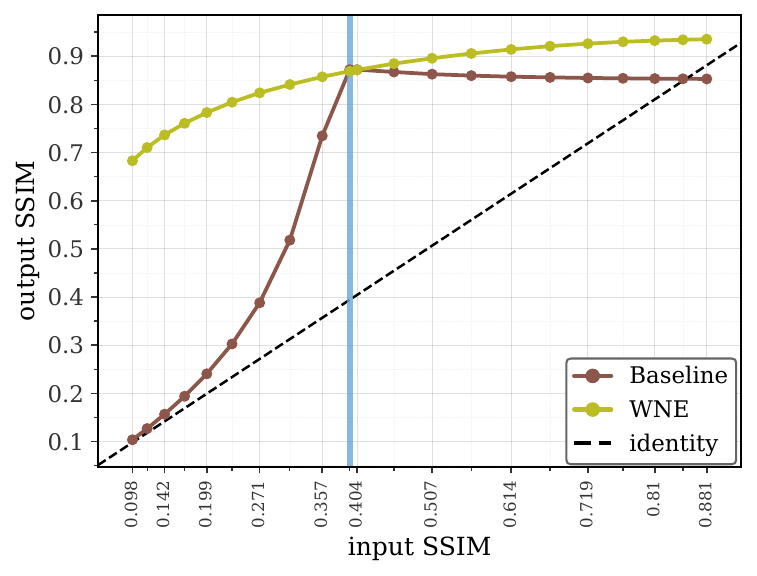}
    }\hfill
    \subfigure[\(\sigma_{\mathrm{train}} = 50\)]{
        \includegraphics[width=0.3\textwidth]{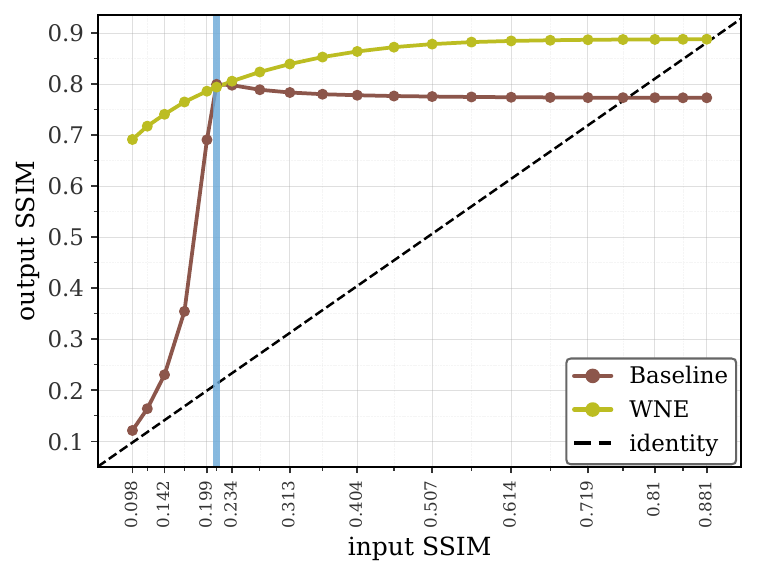}
    }
    \caption{\textbf{SSIM under noise-level mismatch for SwinIR.}
        Same mismatch diagnostic as Figure~\ref{fig:swinir_psnr}, but measured with SSIM instead of PSNR\@.
        The \methodBaseline{} SwinIR degrades once \(\sigma_{\mathrm{test}} \neq \sigma_{\mathrm{train}}\), while \methodWNE{} stabilizes structural quality across a broader range of test noise levels.}%
    \label{fig:swinir_ssim}
\end{figure}

\section{Restormer Backbone Experiments}%
\label{app:restormer}
We use Restormer to test whether the wrapper's robustness benefit extends to a second transformer backbone beyond SwinIR\@.
Restormer uses channel-wise self-attention, a substantially different mechanism from SwinIR's shifted-window attention.

\paragraph{Protocol.}
We follow the same fixed-\(\sigma_{\mathrm{train}}\) mismatch diagnostic as in Section~\ref{sec:experiments}: for each \(\sigma_{\mathrm{train}}\in\{10,25,50\}\), models are trained at a single noise level and evaluated on Set12 across varying \(\sigma_{\mathrm{test}}\), plotting output PSNR versus input PSNR\@.
Both variants share the same Restormer backbone described in Appendix~\ref{app:arch_models}, with only the outer \methodWNE{} wrapper added in the wrapped condition.

\begin{figure}[!htbp]
    \centering
    \subfigure[\(\sigma_{\mathrm{train}} = 10\)]{
        \includegraphics[width=0.3\textwidth]{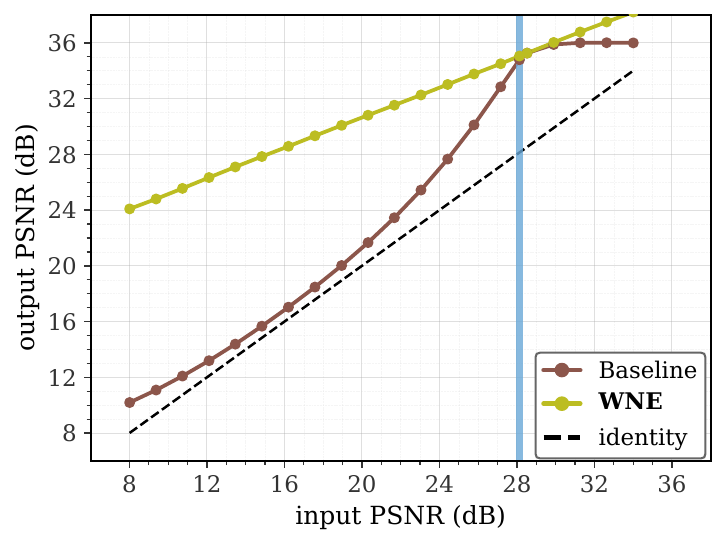}
    }\hfill
    \subfigure[\(\sigma_{\mathrm{train}} = 25\)]{
        \includegraphics[width=0.3\textwidth]{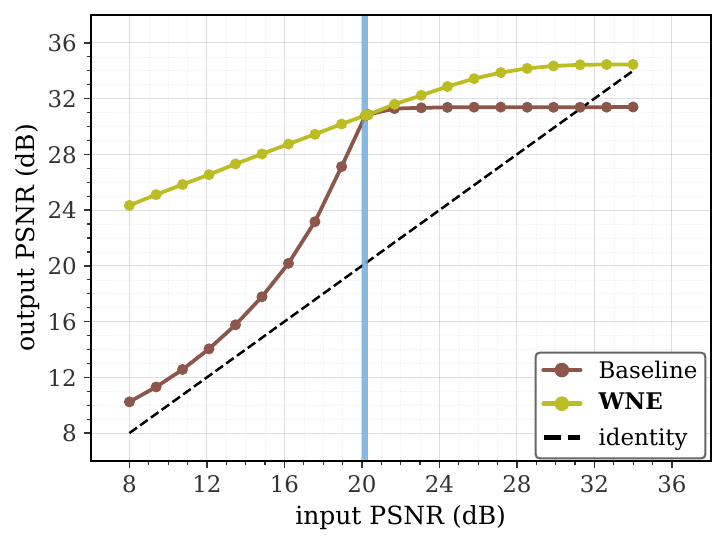}
    }\hfill
    \subfigure[\(\sigma_{\mathrm{train}} = 50\)]{
        \includegraphics[width=0.3\textwidth]{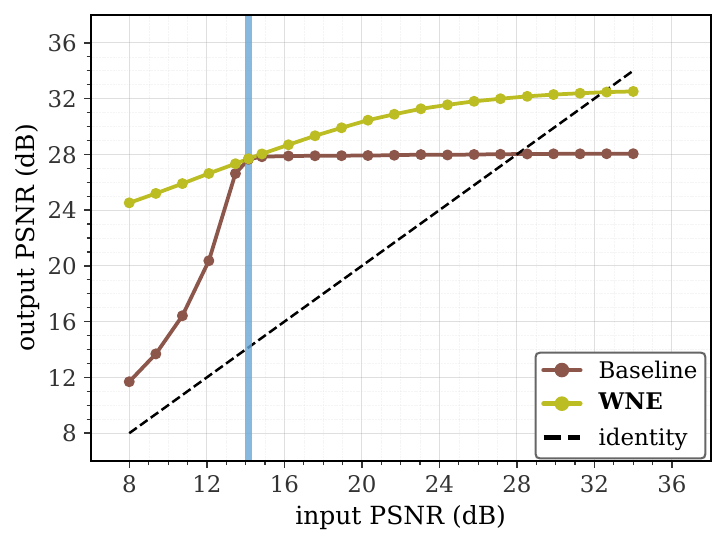}
    }
    \caption{\textbf{Noise-level mismatch on Restormer.}
        Same diagnostic as Figures~\ref{fig:dncnn_psnr}--\ref{fig:swinir_psnr}, applied to a second transformer backbone.
        The \methodBaseline{} Restormer degrades under mismatch, while \methodWNE{} stabilizes performance across test noise levels.
        Dashed line: no denoising (\(\hat x = y\)).}%
    \label{fig:restormer_psnr}
\end{figure}

\paragraph{Results.}
Figure~\ref{fig:restormer_psnr} shows the same qualitative mismatch pattern as for DnCNN and SwinIR\@: the unwrapped baseline is noise-level specific, while \methodWNE{} yields smoother and more stable extrapolation across test noise levels.
Table~\ref{tab:psnr_matched_restormer} reports matched-noise PSNR\@; unlike DnCNN and SwinIR, \methodWNE{} is fully competitive with the baseline at all three noise levels on this backbone.

\begin{table}[!htbp]
    \centering
    \setlength{\belowcaptionskip}{3pt}
    \caption{\textbf{Matched-noise PSNR (dB) on Restormer (grayscale).}
        Average PSNR on Set12 and BSD68 for AWGN with \(\sigma\in\{10,25,50\}\) (8-bit units).
        Both variants share the same Restormer backbone; \emph{\methodWNE{}} wraps the unmodified architecture.}%
    \label{tab:psnr_matched_restormer}
    \small
    \setlength{\tabcolsep}{3.5pt}
    \renewcommand{\arraystretch}{1.05}
    \begin{tabular}{lcccccc}
        \toprule
                                            & \multicolumn{3}{c}{Set12} & \multicolumn{3}{c}{BSD68}                                 \\
        Noise level \(\sigma\) (8-bit)      & 10                        & 25                        & 50    & 10    & 25    & 50    \\
        \midrule
        \emph{\methodBaseline{}}            & 34.79                     & 30.76                     & 27.65 & 33.71 & 29.29 & 26.42 \\
        \textbf{\emph{\methodWNE{}}} (ours) & 35.05                     & 30.79                     & 27.69 & 34.00 & 29.38 & 26.45 \\
        \bottomrule
    \end{tabular}
\end{table}

\section{Color Denoising (Restormer on CBSD68)}%
\label{app:restormer_color}
To test whether the wrapper extends to multi-channel data, we train Restormer with and without \methodWNE{} on 3-channel color images under the same fixed-\(\sigma_{\mathrm{train}}\) mismatch protocol, evaluating on CBSD68.
The wrapper pools \(\mu(y)\) and \(\std(y)\) jointly over all channels and pixels, matching the global affine action \(y \mapsto ay + b\ones\) without introducing per-channel statistics.

\begin{figure}[!htbp]
    \centering
    \subfigure[\(\sigma_{\mathrm{train}} = 10\)]{
        \includegraphics[width=0.3\textwidth]{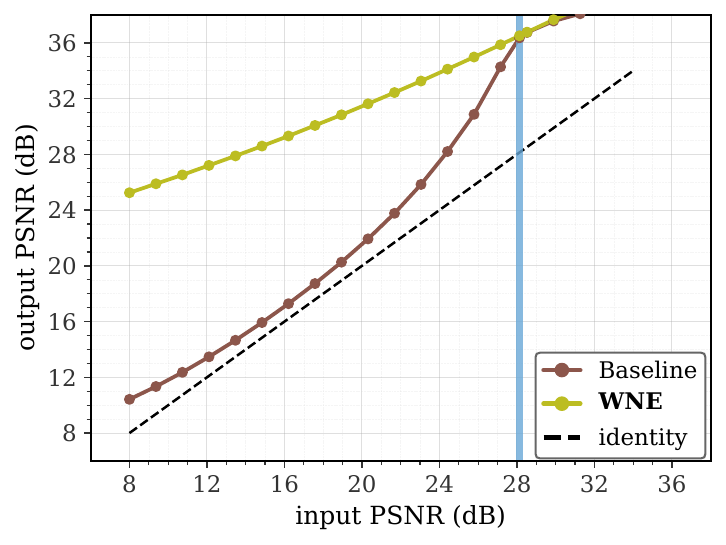}
    }\hfill
    \subfigure[\(\sigma_{\mathrm{train}} = 25\)]{
        \includegraphics[width=0.3\textwidth]{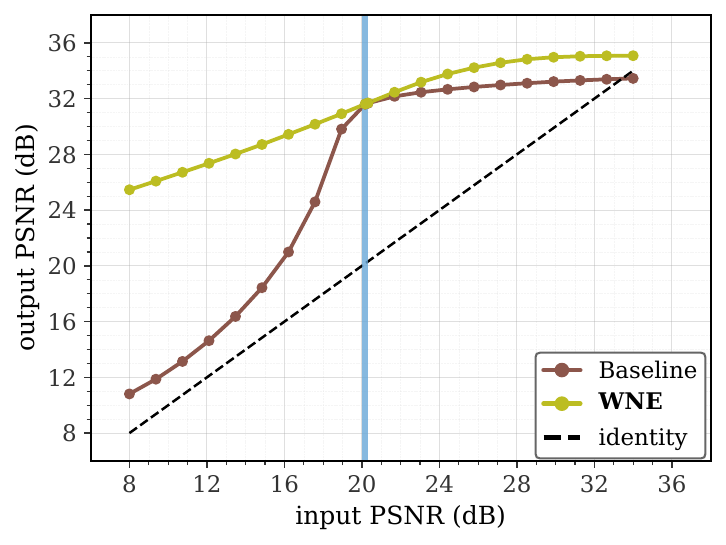}
    }\hfill
    \subfigure[\(\sigma_{\mathrm{train}} = 50\)]{
        \includegraphics[width=0.3\textwidth]{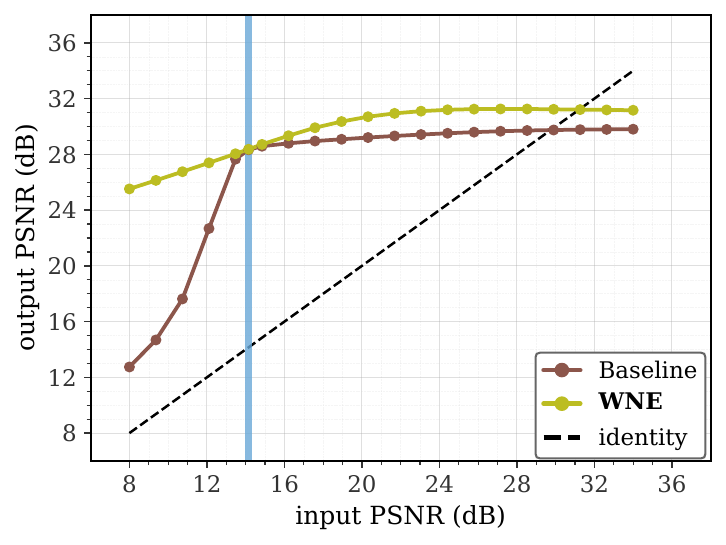}
    }
    \caption{\textbf{Noise-level mismatch on color Restormer (CBSD68).}
        Same diagnostic as Figures~\ref{fig:dncnn_psnr}--\ref{fig:swinir_psnr}, now applied to 3-channel color images.
        The \methodBaseline{} Restormer degrades under mismatch, while \methodWNE{} stabilizes performance across test noise levels.
        Dashed line: no denoising (\(\hat x = y\)).}%
    \label{fig:restormer_color_psnr}
\end{figure}

\paragraph{Results.}
Figure~\ref{fig:restormer_color_psnr} shows the same qualitative pattern as the grayscale Restormer experiments: the baseline is noise-level specific, while \methodWNE{} stabilizes performance under mismatch.
Table~\ref{tab:psnr_matched_restormer_color} reports matched-noise PSNR on CBSD68; \methodWNE{} remains competitive with the baseline at all three noise levels.

\begin{table}[!htbp]
    \centering
    \setlength{\belowcaptionskip}{3pt}
    \caption{\textbf{Matched-noise PSNR (dB) on color Restormer (CBSD68).}
        Average PSNR on CBSD68 for AWGN with \(\sigma\in\{10,25,50\}\) (8-bit units).
        Both variants share the same Restormer backbone trained on color patches; \emph{\methodWNE{}} wraps the unmodified architecture with statistics pooled jointly over all channels and pixels.}%
    \label{tab:psnr_matched_restormer_color}
    \small
    \setlength{\tabcolsep}{5pt}
    \renewcommand{\arraystretch}{1.05}
    \begin{tabular}{lccc}
        \toprule
                                            & \multicolumn{3}{c}{CBSD68}                 \\
        Noise level \(\sigma\) (8-bit)      & 10                         & 25    & 50    \\
        \midrule
        \emph{\methodBaseline{}}            & 36.36                      & 31.60 & 28.33 \\
        \textbf{\emph{\methodWNE{}}} (ours) & 36.51                      & 31.61 & 28.36 \\
        \bottomrule
    \end{tabular}
\end{table}

\section{Real-Noise Validation on SIDD}%
\label{app:sidd}
We validate that the wrapper does not degrade performance on real sensor noise by training SwinIR \methodBaseline{} and SwinIR-\methodWNE{} on the SIDD sRGB track~\citep{abdelhamed2018highquality}.
Unlike the controlled fixed-\(\sigma_{\mathrm{train}}\) AWGN protocol of Section~\ref{sec:experiments}, SIDD exhibits sensor-, ISP-, and scene-dependent noise, so this setting complements rather than replicates the mismatch diagnostic: it tests that enforcing NE remains compatible with a realistic noise distribution rather than isolating the NE mechanism.

\paragraph{Protocol.}
We train SwinIR \methodBaseline{} and SwinIR-\methodWNE{} on pre-extracted SIDD sRGB training patches using the same backbone in both conditions, with only the outer wrapper changed.
SIDD training uses random \(64\times 64\) crops, batch size \(32\), AdamW, and a cosine-decayed learning rate from \(3\times 10^{-4}\) to \(10^{-6}\) over \(400{,}000\) updates.
Evaluation uses the standard SIDD validation blocks, corresponding to \(1280\) sRGB blocks in total.
For reporting, predictions and targets are quantized to uint8 sRGB\@.
PSNR is computed with the standard 8-bit formula, and SSIM is computed in RGB with Wang-style settings~\citep{wang2004image}.

\begin{table}[!htbp]
    \centering
    \setlength{\belowcaptionskip}{3pt}
    \caption{\textbf{SIDD validation (sRGB).}
        Average PSNR/SSIM over the \(1280\) official SIDD validation blocks.
        Both variants use the same SwinIR backbone; \emph{\methodWNE{}} only adds the outer normalize-process-denormalize wrapper.}%
    \label{tab:sidd_validation}
    \small
    \setlength{\tabcolsep}{8pt}
    \renewcommand{\arraystretch}{1.05}
    \begin{tabular}{lcc}
        \toprule
                                            & PSNR (dB) & SSIM   \\
        \midrule
        \emph{\methodBaseline{}}            & 39.27     & 0.9150 \\
        \textbf{\emph{\methodWNE{}}} (ours) & 39.26     & 0.9156 \\
        \bottomrule
    \end{tabular}
\end{table}

\paragraph{Results.}
Table~\ref{tab:sidd_validation} shows that the two SwinIR variants are matched on SIDD validation: \methodWNE{} is within \(0.01\) dB of the baseline in PSNR and essentially identical in SSIM.\@
Combined with the AWGN matched-noise results in Table~\ref{tab:psnr_matched}, this indicates that the small matched-AWGN trade-off visible for SwinIR on Set12/BSD68 does not persist on real sensor noise.

\section{Reusing Noise2Noise-Trained Denoisers for Sampling and Inverse Problems}%
\label{app:n2n_wne}
The normalized-coordinate analysis in Section~\ref{sec:analysis} predicts that exact NE should help single-\(\sigma\)-trained denoisers remain usable across sampler noise schedules: changing raw noise scale primarily shifts which normalized difficulties are queried, rather than changing what the backbone sees at fixed difficulty.
We test this prediction by training denoisers with Noise2Noise (N2N)~\citep{lehtinen2018noise2noise} at one noise level and reusing the frozen maps for iterative sampling and constrained inverse-problem reconstruction, alongside a one-step mismatch diagnostic.

We train the same SwinIR backbone with the N2N objective: given two independently corrupted observations \(y_1=x+\sigma_{\mathrm{train}}\eta_1\) and \(y_2=x+\sigma_{\mathrm{train}}\eta_2\) of the same image, both variants minimize \(\|f_\theta(y_1)-y_2\|_2^2\) at \(\sigma_{\mathrm{train}}=10\).
The \methodBaseline{}-N2N model uses the unwrapped predictor; \methodWNE{}-N2N wraps the same backbone, changing only the parameterization.
This is distinct from N2N-based reconstruction and RED/PnP-style methods that train self-supervised priors for the target inverse problem from split, repeated, or undersampled measurements~\citep{hendriksen2020noise2inverse,liu2020rare,desai2023noise2recon,huang2024dured}; here the learned map is trained only as an ordinary single-step denoiser and then reused unchanged.
Unless otherwise stated, AWGN training and denoising noise levels are reported in 8-bit units; for example, \(\sigma=10\) means \(10/255\) when images are scaled to \([0,1]\).
We evaluate the resulting denoisers in three settings: one-step mismatch across unseen test noise levels, residual-stopped iterative denoising, and constrained random inpainting from sparse observations.
The unified sampler used for the two iterative settings is given in Appendix~\ref{app:n2n_iterative_stability}.

\paragraph{One-step mismatch.}
Figure~\ref{fig:n2n_noise_sweep} applies the same input-output PSNR mismatch diagnostic used in the main experiments to the two N2N-trained models.
\methodWNE{}-N2N is more stable than the unwrapped baseline across the test sweep, including far from \(\sigma_{\mathrm{train}}\).
The mismatch benefit is therefore not tied to clean-target supervision.

\begin{figure}[!htbp]
    \centering
    \includegraphics[width=0.72\linewidth]{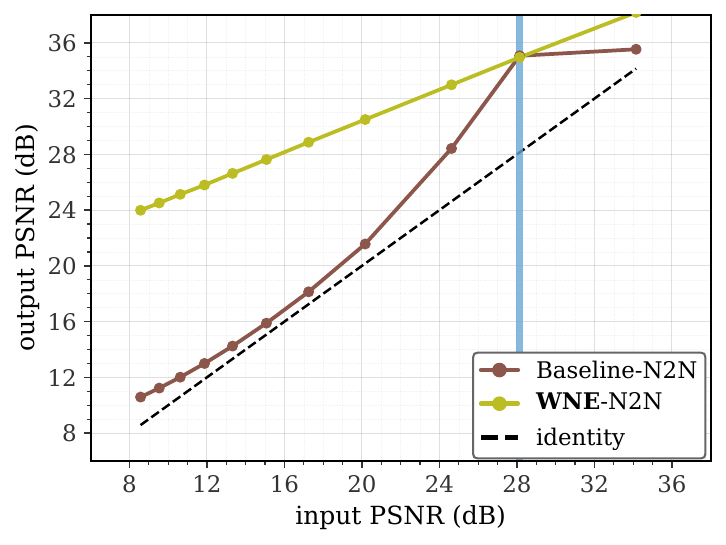}
    \caption{\textbf{Noise2Noise one-step mismatch check on SwinIR.}
        Both models are trained with the N2N objective at \(\sigma_{\mathrm{train}}=10\) in 8-bit units and evaluated using the same output-PSNR versus input-PSNR mismatch diagnostic as the main experiments.
        The wrapped \methodWNE{}-N2N model is more stable than the unwrapped Baseline-N2N model across the test sweep.}%
    \label{fig:n2n_noise_sweep}
\end{figure}

\subsection{Iterative Reuse: A Unified Sampler and the Residual-Stopped Case}\label{app:n2n_iterative_stability}

Both iterative experiments use the same Kadkhodaie-Simoncelli-style denoiser-residual sampler~\citep{kadkhodaie2021stochastic}, parameterized by an orthogonal projector \(P\) onto an observed measurement subspace.
Let \(R(y):=D(y)-y\) be the denoiser residual and \(x_c\) the observed measurement, with \(P=0\) and \(x_c=0\) for unconstrained denoising and \(P=MM^\top\), \(x_c=Px^\star\) for measurement-constrained reconstruction.
At iterate \(y_{t-1}\), the update direction and effective noise level are
\[
    u_t := (I-P)R(y_{t-1}) + x_c - Py_{t-1},
    \qquad
    \widehat{\sigma}_t := \frac{\|u_t\|_2}{\sqrt d}.
\]
The sampler updates
\[
    y_t = y_{t-1} + h_t u_t + \gamma_t z_t,
    \qquad z_t\sim\mathcal{N}(0,I_d),
\]
with step schedule \(h_t = h_0 t/(1+h_0(t-1))\) and injected-noise amplitude
\[
    \gamma_t^2 =
    \bigl[{(1-\beta h_t)}^2-{(1-h_t)}^2\bigr]\widehat{\sigma}_t^2.
\]
For \(P=0\), the direction reduces to \(u_t=R(y_{t-1})\) and \(\widehat{\sigma}_t=\|R(y_{t-1})\|_2/\sqrt d\).
This is the update used in the residual-stopped denoising experiment below.
Equivalently, it is Algorithm~\ref{alg:kadkhodaie_linear_inverse} with \(P=0\), using the denoising initialization \(y_0=x^\star+\sigma_{\mathrm{init}}\eta\).
For \(P=I\), the denoiser term vanishes and the update becomes a pure data-consistency correction; the random-inpainting experiment below uses the intermediate case.

\paragraph{Why exact NE helps under iterative reuse.}
The relevant mechanism is the normalized-coordinate reduction analyzed in Section~\ref{sec:analysis}.
Under NE, the denoiser factors as
\[
    D(y)=\std(y)\,g(\Tne(y))+\mu(y)\ones.
\]
Changing the raw noise scale therefore does not require learning an unrelated raw-space map; it changes which normalized difficulties \(\Delta=\|\tilde y-\tilde x\|_2\) are queried.
Section~\ref{sec:analysis} shows that these difficulty distributions overlap substantially across noise levels and that the normalized error is largely \(\Delta\)-driven.
The N2N sampler experiments stress the same mechanism in an iterative setting: a denoiser trained from noisy pairs at only \(\sigma_{\mathrm{train}}=10\) is repeatedly queried across the sampler's effective noise scales, in the inpainting case starting about \(25\times\) above the training scale.
\methodWNE{} does not make the N2N denoiser Bayes-optimal at every scale; it makes the repeated calls use the same normalized regression \(g\), with the residual and residual-based stopping scale rescaled analytically.
The unwrapped denoiser has no corresponding normalized-coordinate reduction when used off scale.

\paragraph{Residual-stopped iterative denoising.}
We instantiate the unconstrained case (\(P=0\), \(\beta=1\), so \(\gamma_t=0\)).
Each Set12 image is initialized with AWGN at \(\sigma_{\mathrm{init}}=10\), and the trajectory is stopped when \(\widehat{\sigma}_t\le 1/255\).
For each trajectory we record the one-pass denoiser PSNR, the best PSNR attained along the trajectory, and the residual-stopped final PSNR\@.
The one-pass PSNR is computed on \(D(y_0)\), whereas trajectory PSNRs are computed on the sampler states \(y_t\).
The best PSNR is an oracle diagnostic computed using the clean image; it is not used by the sampler.
We summarize trajectory collapse by
\[
    \Delta_{\mathrm{stab}}
    =
    \mathrm{PSNR}_{\mathrm{final}}
    -
    \mathrm{PSNR}_{\mathrm{best}}.
\]
A stable residual stopping rule has \(\Delta_{\mathrm{stab}}\approx 0\); a trajectory that peaks early and degrades has a large negative gap.
We hold the N2N training regime fixed across both variants: same backbone, same noisy pairs, single noise level.
A multi-noise N2N model would require noisy pairs collected or synthesized at multiple noise levels, which is a different data regime.
The question is therefore whether exact NE improves off-scale iterative reuse under identical N2N supervision.

\begin{table}[!htbp]
    \centering
    \small
    \caption{
        Iterative sampling evaluation on Set12 under N2N training at a single noise level.
        Both models use the same SwinIR backbone and are trained at \(\sigma_{\mathrm{train}}=10\) in 8-bit units.
        The sampler is initialized at \(\sigma_{\mathrm{init}}=10\) in 8-bit units and stopped when \(\widehat{\sigma}_t\le 1/255\).
        Values are means over the 12 Set12 images.
        The initial noisy PSNR is \(28.16\) dB for both rows.
        The best trajectory PSNR is an oracle diagnostic and is not used for stopping.
    }\label{tab:ks_stability}
    \begin{tabular}{lrrrrrr}
        \toprule
        Model
         & One-pass
         & Best traj.
         & Final
         & Final \(-\) best
         & Final \(-\) one-pass
         & Best/final step      \\
        \midrule
        \methodBaseline{}-N2N
         & 35.07
         & 34.20
         & 29.49
         & \(-4.71\)
         & \(-5.58\)
         & \(16.67/40.08\)      \\
        \methodWNE{}-N2N
         & 34.96
         & 34.89
         & 34.79
         & \(-0.10\)
         & \(-0.16\)
         & \(22.75/25.00\)      \\
        \bottomrule
    \end{tabular}
\end{table}

Table~\ref{tab:ks_stability} shows that the unwrapped baseline is not weaker as a one-step denoiser at the training noise level.
Its one-pass PSNR is slightly higher than \methodWNE{}-N2N (\(35.07\) dB versus \(34.96\) dB).
However, when reused iteratively, the baseline trajectory peaks early and then collapses under the residual stopping rule, ending \(4.71\) dB below its best trajectory iterate and \(5.58\) dB below its one-pass output.
In contrast, \methodWNE{}-N2N remains stable.
The stopped final iterate is only \(0.10\) dB below the best trajectory iterate and \(0.16\) dB below the one-pass output.
The advantage of \methodWNE{} here is not better one-step PSNR at \(\sigma_{\mathrm{train}}\), but predictable global-scale behavior under reuse, consistent with the normalized-coordinate mechanism above.

\subsection{Constrained Random Inpainting}\label{sec:kadkhodaie_inverse}

We instantiate the same sampler with a random-inpainting projector \(P=MM^\top\) and observation \(x_c=Px^\star\).
Algorithm~\ref{alg:kadkhodaie_linear_inverse} gives the full procedure.
The reported sampler parameters are on the \([0,1]\) image scale:
\[
    \sigma_0 = 1,\qquad
    \sigma_L = 0.01,\qquad
    h_0 = 0.01,\qquad
    \beta = 0.01.
\]
Images are scaled to \([0,1]\), so in 8-bit units this corresponds to \(\sigma_0=255\) and \(\sigma_L=2.55\).
The maximum budget is \(1000\) iterations.

\begin{algorithm}[!htbp]
    \caption{Kadkhodaie-Simoncelli linear inverse sampler}\label{alg:kadkhodaie_linear_inverse}
    \begin{algorithmic}[1]
        \REQUIRE{} Denoiser \(D\), projector \(P\), observation \(x_c=P x^\star\), parameters \(\sigma_0,\sigma_L,h_0,\beta,T_{\max}\)
        \ENSURE{} Projected reconstruction \(\hat x\)
        \STATE{} Draw \(z\sim\mathcal{N}(0,I_d)\)
        \STATE{} \(y \gets x_c + 0.5(I-P)\ones + \sigma_0 z\)
        \FOR{\(t=1,\ldots,T_{\max}\)}
        \STATE{} \(u \gets (I-P)\bigl(D(y)-y\bigr) + x_c - Py\)
        \STATE{} \(\widehat{\sigma} \gets \|u\|_2/\sqrt{d}\)
        \IF{\(\widehat{\sigma}\leq\sigma_L\)}
        \STATE{} \textbf{break}
        \ENDIF{}
        \STATE{} \(h \gets h_0t/(1+h_0(t-1))\)
        \STATE{} \(\gamma \gets {\left({(1-\beta h)}^2-{(1-h)}^2\right)}^{1/2}\widehat{\sigma}\)
        \STATE{} Draw \(z\sim\mathcal{N}(0,I_d)\)
        \STATE{} \(y \gets y + h u + \gamma z\)
        \ENDFOR{}
        \STATE{} \textbf{return} \(\hat x = x_c+(I-P)y\)
    \end{algorithmic}
\end{algorithm}

This setting is deliberately challenging under the fixed N2N training regime.
Both denoisers are trained at \(\sigma_{\mathrm{train}}=10\), but the sampler initializes from \(\sigma_0=1\), about \(25\times\) above the training scale.
The unwrapped denoiser is therefore queried far outside its training scale at initialization, while \methodWNE{} normalizes the input and exactly rescales the output residual.

We evaluate random inpainting on Set12 with only \(10\%\) observed pixels.
Final reconstructions are projected onto the measurement affine set before computing PSNR and SSIM, so both methods satisfy the measurements exactly at evaluation time.
The one-pass column applies the denoiser once to the sampler initialization \(y_0\) and evaluates the projected estimate \(x_c+(I-P)D(y_0)\).

\begin{figure}[!htbp]
    \centering
    \subfigure[Clean \(x^\star\)]{
        \includegraphics[width=0.235\textwidth]{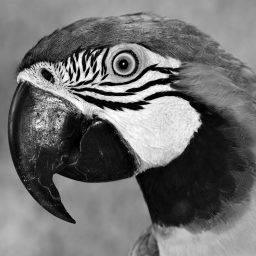}
    }\hfill
    \subfigure[Observed \(P x^\star\)]{
        \includegraphics[width=0.235\textwidth]{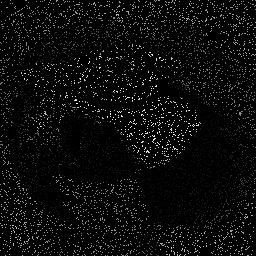}
    }\hfill
    \subfigure[Baseline-N2N]{
        \includegraphics[width=0.235\textwidth]{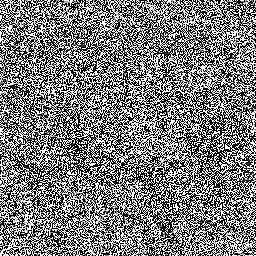}
    }\hfill
    \subfigure[\methodWNE{}-N2N]{
        \includegraphics[width=0.235\textwidth]{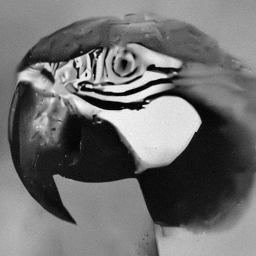}
    }
    \caption{\textbf{Reusing Noise2Noise-trained denoisers in a constrained inverse-problem sampler.}
        Both SwinIR denoisers are trained from noisy pairs at a single noise level, \(\sigma_{\mathrm{train}}=10\), then reused in a random-inpainting sampler with \(10\%\) observed pixels.
        The Baseline-N2N denoiser stays close to the sparse observation, while \methodWNE{}-N2N gives a coherent reconstruction.}%
    \label{fig:n2n_inverse}
\end{figure}

\begin{table}[!htbp]
    \centering
    \caption{
        Kadkhodaie-Simoncelli constrained sampler on Set12 random inpainting with \(10\%\) observed pixels.
        Both denoisers are trained with N2N at a single noise level, \(\sigma_{\mathrm{train}}=10\) in 8-bit units.
        The sampler uses \(\sigma_0=1\), \(\sigma_L=0.01\), \(h_0=0.01\), and \(\beta=0.01\).
        \methodWNE{}-N2N reaches \(23.87\) dB versus \(6.08\) dB for the unwrapped baseline.
        Numbers are means over Set12, with standard deviations in parentheses.
        The column ``gap'' reports final PSNR minus the best PSNR along the trajectory.
    }\label{tab:kadkhodaie_random_missing}
    \newcommand{\tabmeanstd}[2]{\shortstack{\(#1\)\\[-0.5pt]{\((#2)\)}}}
    \setlength{\tabcolsep}{0pt}
    \renewcommand{\arraystretch}{1.12}
    \begin{tabular*}{0.84\linewidth}{@{\extracolsep{\fill}}lccccc@{}}
        \toprule
        Method
        & \shortstack{One-pass}
        & Final
        & SSIM
        & Gap
        & Steps \\
        \midrule
        Observed \(P x^\star\)
        & \textemdash{}
        & \tabmeanstd{5.92}{0.93}
        & \tabmeanstd{0.020}{0.008}
        & \textemdash{}
        & \textemdash{} \\
        \addlinespace[1pt]
        Baseline-N2N
        & \tabmeanstd{6.87}{0.28}
        & \tabmeanstd{6.08}{0.27}
        & \tabmeanstd{0.009}{0.002}
        & \tabmeanstd{-0.91}{0.03}
        & \tabmeanstd{96.8}{1.3} \\
        \addlinespace[1pt]
        \methodWNE{}-N2N
        & \tabmeanstd{14.21}{1.31}
        & \tabmeanstd{\mathbf{23.87}}{2.62}
        & \tabmeanstd{\mathbf{0.716}}{0.056}
        & \tabmeanstd{\mathbf{0.00}}{0.00}
        & \tabmeanstd{277.6}{26.1} \\
        \bottomrule
    \end{tabular*}
\end{table}

Table~\ref{tab:kadkhodaie_random_missing} shows that, under this high-noise initialization and fixed sampler configuration, the unwrapped denoiser stays near the observed projection and ends at \(6.08\) dB.
\methodWNE{} makes the same denoiser usable inside the sampler and reaches \(23.87\) dB, an improvement of about \(17.8\) dB using the same backbone, training pairs, and sampler hyperparameters.
The result instantiates the normalized-coordinate mechanism in the inverse-problem regime.
The original sampler is a coarse-to-fine procedure initialized at full-scale Gaussian noise: natural for the sampler, but far outside the N2N training scale.
Under this initialization, the vanilla denoiser produces an uninformative residual.
\methodWNE{} preserves the same training objective and architecture, but routes every call through the same normalized regression \(g\), so the denoiser can be reused from the high-noise initialization and called across the sampler's noise schedule without changing the sampler hyperparameters.
\section{Input-Only Normalization Versus the Wrapper (IN vs WNE)}%
\label{app:in_vs_wne}
\newcommand{\methodIN}{IN}
In this appendix section we test whether applying an instance-normalization style transform at the input only can replicate the robustness benefits of the full input-output NE wrapper.
All figures in this section use a DnCNN backbone and single-noise training at \(\sigma_{\mathrm{train}}=10\).
Recall the NE normalization map \(\Tne\) from Section~\ref{sec:method} and let \(g_\theta\) be a standard DnCNN backbone.

\paragraph{\methodWNE{}.}
We use the wrapped predictor
\[
    f_\theta^{\mathrm{WNE}}(y)
    := \std(y)\,g_\theta(\Tne(y))+\mu(y)\ones,
\]
which is normalization-equivariant by construction (Proposition~\ref{prop:wrapper_is_ne}).

\paragraph{\methodIN{} (input-only).}
We define the input-only normalization baseline by applying the same normalization at the input and not inverting it:
\[
    f_\theta^{\methodIN}(y) := g_\theta(\Tne(y)).
\]
The map \(\Tne(y)=(y-\mu(y)\ones)/\std(y)\) has the same algebraic form as instance normalization~\citep{ulyanov2016instance}.
Our statistics \(\mu(y)\) and \(\std(y)\) are pooled jointly over all entries (all channels and pixels), rather than per-channel, because the NE group action \(y\mapsto ay+b\ones\) acts identically on all entries.

\paragraph{Structural limitation of \methodIN{}.}
Because \(\Tne\) is invariant to global contrast and brightness,
\[
    \Tne(ay+b\ones)=\Tne(y)
    \qquad (a>0,\ b\in\R)
\]
(Proposition~\ref{prop:mu_sigma_affine}),
the input-only baseline satisfies
\[
    f_\theta^{\methodIN}(ay+b\ones)=f_\theta^{\methodIN}(y),
\]
so it discards global shift and scale information that an NE denoiser must propagate to the output.
In contrast, \methodWNE{} re-injects \(\mu(y)\) and \(\std(y)\) analytically at the output.

\paragraph{Protocol.}
We follow the same noise-level mismatch diagnostic as in Section~\ref{sec:experiments}:
for each \(\sigma_{\mathrm{test}}\), we plot output PSNR versus input PSNR on Set12.

\paragraph{Results.}
Figure~\ref{fig:dncnn_sigma10_in_vs_wne_psnr} shows that \methodIN{} has lower PSNR than \methodBaseline{}, including at matched noise (\(\sigma_{\mathrm{test}}=\sigma_{\mathrm{train}}\)), while \methodWNE{} remains stable under mismatch.
Figure~\ref{fig:dncnn_sigma10_in_vs_wne_qual} illustrates the same effect qualitatively.

\begin{figure}[!htbp]
    \centering
    \includegraphics[width=0.45\linewidth]{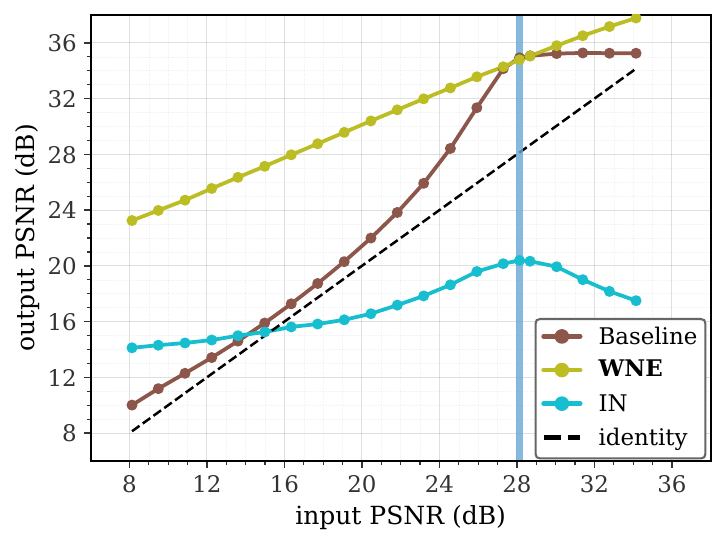}%
    \caption{\textbf{DnCNN:\@ input-output PSNR diagnostic at \(\sigma_{\mathrm{train}}=10\) (\methodIN{} vs \methodWNE).}
        Output PSNR versus input PSNR on Set12 as \(\sigma_{\mathrm{test}}\) varies.
        The vertical line marks \(\sigma_{\mathrm{train}}=10\).
        The dashed identity line corresponds to no denoising (\(\hat x=y\)).
        Input-only normalization (\methodIN) performs poorly even at matched noise and degrades further under mismatch, while \methodWNE{} remains robust.}%
    \label{fig:dncnn_sigma10_in_vs_wne_psnr}
\end{figure}

\begin{figure}[!htbp]
    \centering
    \qualpanel{0.30\textwidth}{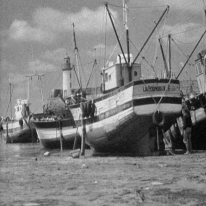}{\shortstack{Noisy image\\\(\sigma=10\)}}\hfill
    \qualpanel{0.30\textwidth}{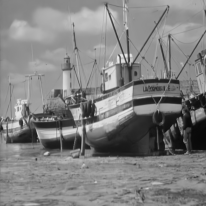}{DnCNN-\methodWNE{}}\hfill
    \qualpanel{0.30\textwidth}{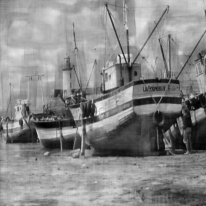}{DnCNN-\methodIN{}}%
    \caption{\textbf{DnCNN qualitative comparison (train \(\sigma_{\mathrm{train}}=10\)).}
        The \methodIN{} baseline applies \(\Tne\) only at the input (no analytic inverse), discarding global brightness and contrast information, and fails severely.
        The \methodWNE{} wrapper re-injects \(\mu(y)\) and \(\std(y)\) at the output and remains qualitatively stable.}%
    \label{fig:dncnn_sigma10_in_vs_wne_qual}
\end{figure}

\section{Non-Gaussian Corruptions}%
\label{app:nongaussian}

This appendix tests whether the robustness benefits of enforcing NE extend beyond AWGN.\@
We consider three additive non-Gaussian noises (Uniform, Laplace, Rayleigh) and a non-additive corruption (JPEG artifacts), using the same input-output PSNR diagnostic as in Section~\ref{sec:experiments}.
We evaluate both the DnCNN family and SwinIR.\@
For the DnCNN-family architectural references, FDnCNN is used for \methodSEarch{} and \methodNEarch{}, while the standard DnCNN backbone is kept for \methodBaseline{} and \methodWNE{}.
For SwinIR, we compare \methodBaseline{} versus \methodWNE{}.\@

\paragraph{Noise models.}
For the additive cases we generate \(y=x+\sigma\varepsilon\), where \(\varepsilon\) is scaled to have unit standard deviation and \(\sigma\) controls corruption strength.
Uniform and Laplace noises are zero-mean.
For Rayleigh, we do not recenter the noise: it has nonzero mean even after scaling to unit standard deviation.
Thus this Rayleigh setting introduces a systematic positive brightness bias in expectation, in addition to pixelwise randomness.
JPEG artifacts are produced by compressing and decompressing the image at a chosen quality factor.

\paragraph{Protocol.}
For the additive noises, each model is trained at a single corruption strength (vertical reference line) and evaluated at multiple strengths by varying \(\sigma_{\mathrm{test}}\).
We plot output PSNR versus input PSNR on Set12; the dashed identity line corresponds to no denoising (\(\hat x=y\)).
For JPEG, we plot output PSNR versus quality factor, and we evaluate a simple transfer setting by applying the model trained on Uniform noise to JPEG-corrupted inputs.

\begin{figure}[H]
    \centering
    \subfigure[\textbf{Uniform} (train \(\sigma=14\))]{
        \includegraphics[width=0.42\textwidth]{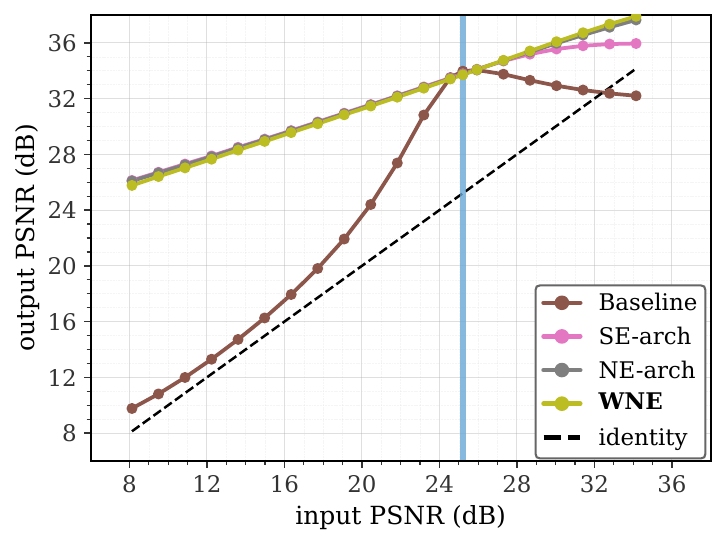}
    }\hfill
    \subfigure[\textbf{Laplace} (train \(\sigma=20\))]{
        \includegraphics[width=0.42\textwidth]{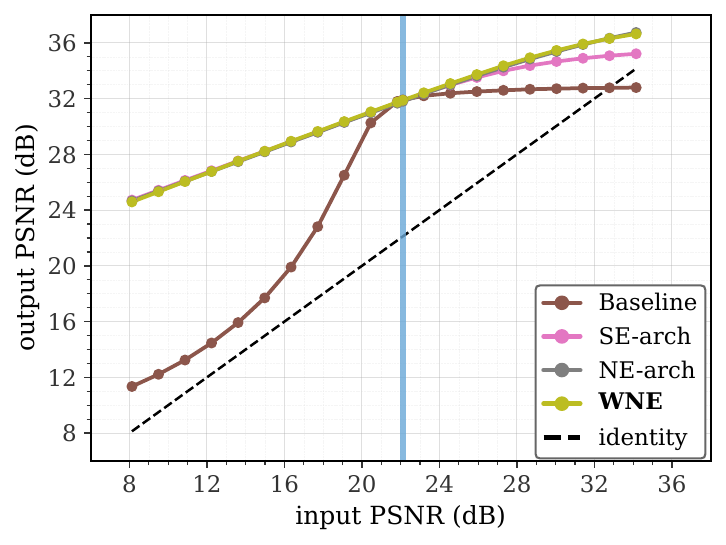}
    }\\[-1pt]
    \subfigure[\textbf{Rayleigh} (train \(\sigma=17\); not recentered)]{
        \includegraphics[width=0.42\textwidth]{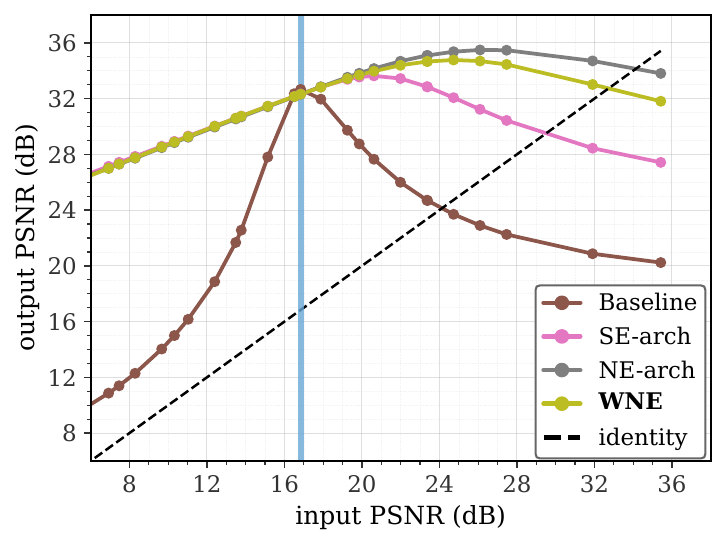}
    }\hfill
    \subfigure[\textbf{JPEG} (quality factor)]{
        \includegraphics[width=0.42\textwidth]{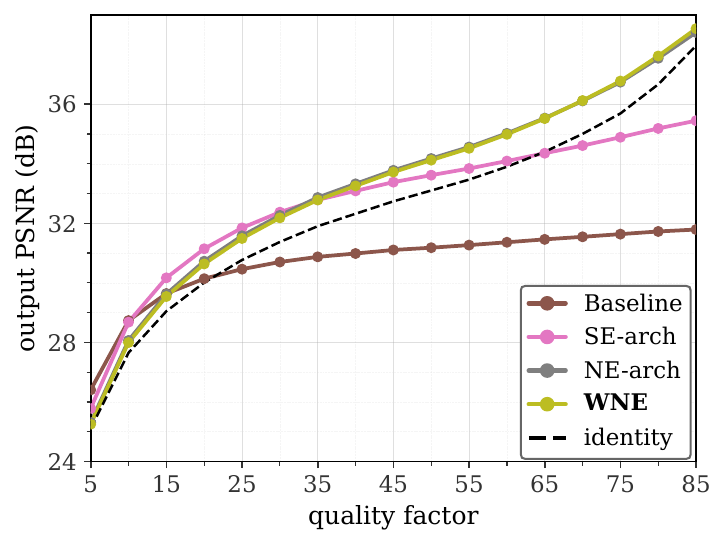}
    }
    \caption{\textbf{Non-Gaussian corruptions (DnCNN family).}
        Input-output PSNR diagnostic on Set12 for three additive non-Gaussian noises (Uniform, Laplace, Rayleigh) and robustness to JPEG artifacts.
        All models are blind.
        For additive noises, the vertical reference line marks the training corruption strength and moving away corresponds to testing at unseen strengths.
        The dashed identity line is the no-denoising baseline \(\hat x=y\).
        Rayleigh noise is not recentered in our implementation (nonzero mean), so it also induces an expected brightness bias.
        For JPEG, we plot output PSNR versus quality factor and evaluate transfer by applying the model trained on Uniform noise to JPEG-corrupted inputs.}%
    \label{fig:app_nongaussian}
\end{figure}

\paragraph{Results on the DnCNN family.}
Figure~\ref{fig:app_nongaussian} summarizes the DnCNN-family findings.
For Uniform and Laplace, \methodWNE{} closely matches the architectural normalization-equivariant baseline \methodNEarch{} and improves over \methodSEarch{} and the unwrapped baseline.
For JPEG artifacts, \methodWNE{} also remains competitive and yields a consistent improvement over the baseline across quality factors in this setting.
Rayleigh is the only setup where \methodWNE{} results in a lower PSNR curve than \methodNEarch{}.
This is also the only noise model here that is not zero-mean in our implementation (we do not recenter the Rayleigh samples).

\paragraph{Results on SwinIR.}
Figure~\ref{fig:swinir_nongaussian} shows the same extension on the transformer backbone.
Across Uniform, Laplace, Rayleigh, and JPEG corruptions, \methodWNE{} improves over the SwinIR baseline, consistent with the DnCNN-family results above.

\begin{figure}[H]
    \centering
    \subfigure[\textbf{Uniform} (train \(\sigma=14\))]{
        \includegraphics[width=0.42\textwidth]{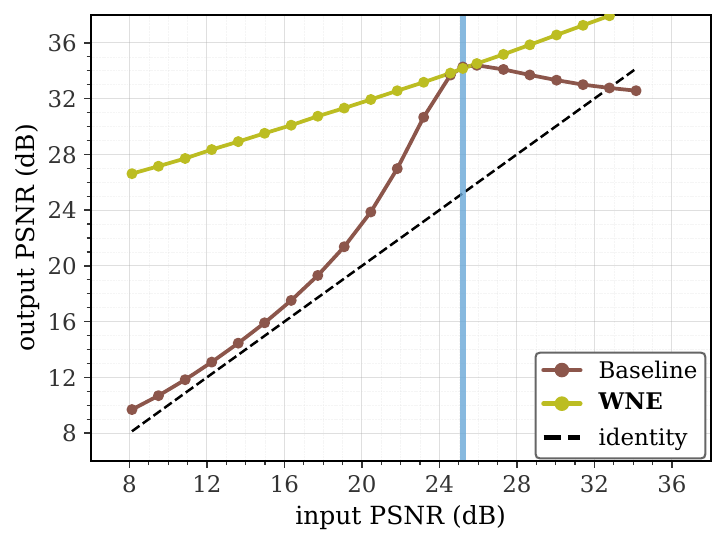}
    }\hfill
    \subfigure[\textbf{Laplace} (train \(\sigma=20\))]{
        \includegraphics[width=0.42\textwidth]{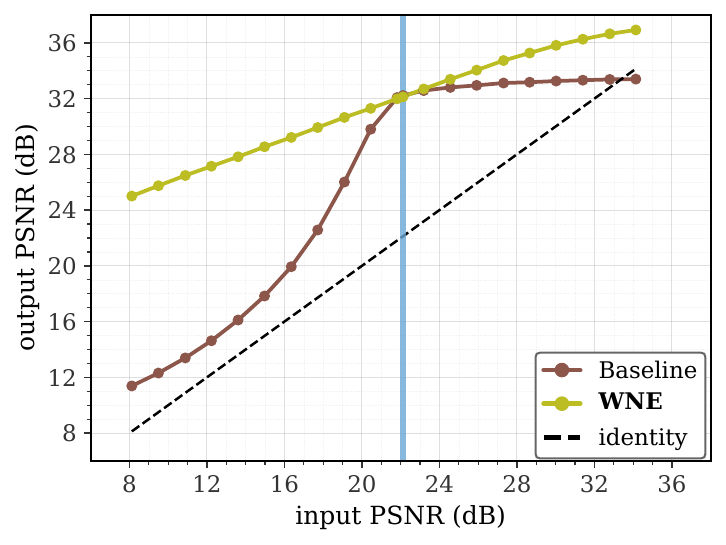}
    }\\[-1pt]
    \subfigure[\textbf{Rayleigh} (train \(\sigma=17\); not recentered)]{
        \includegraphics[width=0.42\textwidth]{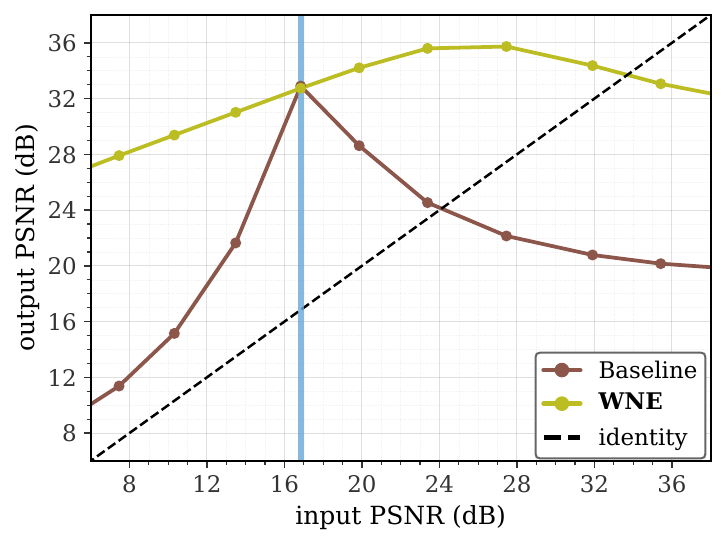}
    }\hfill
    \subfigure[\textbf{JPEG} (quality factor)]{
        \includegraphics[width=0.42\textwidth]{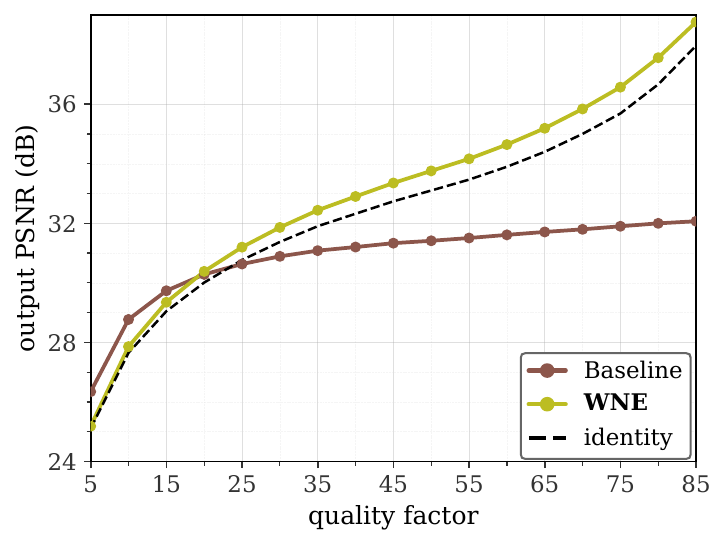}
    }
    \caption{\textbf{Non-Gaussian corruptions (SwinIR).}
        Same diagnostic as Figure~\ref{fig:app_nongaussian}, applied to SwinIR \methodBaseline{} versus SwinIR-\methodWNE{}.
        \methodWNE{} improves over the baseline across all four corruption types, consistent with the DnCNN-family results.}%
    \label{fig:swinir_nongaussian}
\end{figure}

\end{document}